\documentclass{article}

     \PassOptionsToPackage{numbers, compress}{natbib}

\usepackage[final]{neurips_2023}




\usepackage[utf8]{inputenc} 
\usepackage[T1]{fontenc}    
\usepackage{url}            
\usepackage{booktabs}       
\usepackage{amsfonts}       
\usepackage{nicefrac}       
\usepackage{microtype}      
\usepackage{xcolor}         
\usepackage{algorithm}
\usepackage{algorithmic}
\usepackage{subfigure,graphicx}
\usepackage{amsmath,amsthm}
\usepackage{multirow}
\usepackage{wrapfig}

\usepackage{marvosym}

\usepackage[colorlinks=true,
            linkcolor=red,
            citecolor=blue,
            filecolor=blue,
            urlcolor=blue]{hyperref}

\newtheorem{thm}{\bf{Theorem}}
\newtheorem{lem}[thm]{\bf{Lemma}}

\newtheorem{remark}{\bf{Remark}}
\newtheorem{assumption}{\textbf{Assumption}}

\title{Federated Learning with Manifold Regularization and Normalized Update Reaggregation}

%

\author{%
  Xuming An$^1$ \quad Li Shen$^2\textsuperscript{\textasteriskcentered}$ \quad Han Hu$^1$\thanks{Corresponding authors: Li Shen and Han Hu} \quad Yong Luo$^3$\\
  $^1$  School of Information and Electronics, Beijing Institute of Technology, China \\
    $^2$ JD~Explore~Academy, China \quad $^3$  School of Computer Science, Wuhan University, China\\
    \tt 
    \{anxuming,hhu\}@bit.edu.cn,
    mathshenli@gmail.com, luoyong@whu.edu.cn\\
}

\listfiles

\begin{document}
    \maketitle
    \begin{abstract}
    Federated Learning (FL) is an emerging collaborative machine learning framework where multiple clients train the global model without sharing their own datasets. 
    In FL, the model inconsistency caused by the local data heterogeneity across clients results in the near-orthogonality of client updates, which leads to the global update norm reduction and slows down the convergence.  Most previous works focus on eliminating the difference of parameters (or gradients) between the local and global models, which may fail to reflect the model inconsistency due to the complex structure of the machine learning model and the Euclidean space's limitation in meaningful geometric representations.
    In this paper, we propose FedMRUR by adopting the manifold model fusion scheme and a new global optimizer to alleviate the negative impacts.
    Concretely, FedMRUR adopts a hyperbolic graph manifold regularizer enforcing the representations of the data in the local and global models are close to each other in a low-dimensional subspace. 
    {Because the machine learning model has the graph structure, the distance in hyperbolic space can reflect the model bias better than the Euclidean distance.}
    In this way, FedMRUR exploits the manifold structures of the representations to significantly reduce the model inconsistency.
    FedMRUR also aggregates the client updates norms as the global update norm, which can appropriately enlarge each client's contribution to the global update, thereby mitigating the norm reduction introduced by the near-orthogonality of client updates.
    Furthermore, we theoretically prove that our algorithm can achieve a linear speedup property $\mathcal{O}(\frac{1}{\sqrt{SKT}})$ for non-convex setting under partial client participation, {where $S$ is the participated clients number, $K$ is the local interval and $T$ is the total number of communication rounds}.
    Experiments demonstrate that FedMRUR can achieve a new state-of-the-art (SOTA) accuracy with less communication. 

\end{abstract}

    \section{Introduction}

FL is a collaborative distributed framework where multiple clients jointly train the model with their private datasets \cite{conf/aistats/McMahanMRHA17,nguyen2021federated}. To protect privacy, each client is unable to access the other dataset \cite{bonawitz2019towards}. A centralized server receives the parameters or gradients from the clients and updates the global model\cite{zhang2021survey}. Due to the limited communication resource, only part of the clients is involved in the collaborative learning process and train the local model in multiple intervals with their own datasets within one communication round \cite{lim2020federated}. Due to the data heterogeneity, clients' partial participation and multiple local training yield severe model inconsistency, which leads to the divergences between the directions of the local updates from the clients and thus reduces the magnitude of global updates \cite{conf/nips/KhanduriSYHLRV21}. Therefore, the model inconsistency is the major source of performance degradation in FL \cite{journals/corr/abs-2107-06917,journals/ftml/KairouzMABBBBCC21}.

So far, numerous works have focused on the issues of model inconsistency to improve the performance of FL. Many of them \cite{conf/mlsys/LiSZSTS20,conf/icml/KarimireddyKMRS20,journals/corr/abs-2106-10874,conf/iclr/AcarZNMWS21} utilize the parameter (or gradient) difference between the local and global model to assist the local training. By incorporating the global model information into local training, the bias between the local and global objectives can be diminished at some level. However, the parameter (or gradient) difference may fail to characterize the model bias due to the complex structure of modern machine learning model and the Euclidean space has limitations in providing powerful and meaningful geometric representations \cite{hamilton2017representation}. Meanwhile, incorporating the difference introduces extra high computation and communication costs because of the high-dimensional model parameter, which is common in the modern machine learning area\cite{liu2017survey}. 
Some other works \cite{conf/cvpr/LiXSLLSZ22,conf/iclr/WangYSPK20,yu2021fed2,conf/icml/LiuLWXSY22} exploit the permutation invariance property of the neurons in the neural networks to align and aggregate the model parameters for handling the model inconsistency issues, but the extra computation required for neuron alignment may slow down the speed of FL. 
In addition, Charles \textit{et al.} \cite{conf/nips/CharlesGHSS21} demonstrate that after multiple rounds, the similarities between the client updates approach zero and the local update direction are almost orthogonal to each other in FL. If the server aggregates the local updates, each client's contribution is little, which reduces the global update step. Therefore, we need to reduce the model inconsistency and compensate for the global norm reduction introduced by the near-orthogonality of client updates.

In order to alleviate the model inconsistency and compensate for the global reduction, we propose a practical and novel algorithm, dubbed as \textbf{FedMRUR} {(\textbf{Fed}erated learning with \textbf{M}anifold \textbf{R}egularization and Normalized \textbf{U}pdate \textbf{R}eaggregation)}. FedMRUR adopts two techniques to achieve SOTA performance. i) Firstly, FedMRUR adopts the hyperbolic graph fusion technique to reduce the model inconsistency between the client and server within local training. 
The intuition is that adding the manifold regularizer to the loss function to constrain the divergence between the local and global models.
{Unlike the Euclidean space, the hyperbolic space is a manifold structure with the constant negative curvature, which has the ability to produce minimal distortion embedding\cite{gromov1987hyperbolic} with the low storage constraints\cite{journals/pami/PengVMSZ22} for graph data.}   
{And the neural network, the most prevail machine learning model, has a graph structure\cite{conf/nips/SinghJ20,conf/icml/LiuLWXSY22}}, we map the representations to the hyperbolic space and compute their distance to indicate the model bias precisely. 
Considering the numerical stability\cite{feng2022role}, we select the Lorentz model to describe the hyperbolic space and the squared Lorentzian distance\cite{conf/icml/LawLSZ19} to indicate the representations' proximity. By adopting the hyperbolic graph fusion technique, FedMRUR can constrain model inconsistency efficiently.
ii) Secondly, FedMRUR aggregates the client's local updates in a novel normalized way to alleviate the global norm reduction. In the normalized aggregation scheme, the server aggregates the local update norms as the global norm and normalizes the sum of the local updates as the global direction. Compared with directly aggregating local updates, the new aggregation scheme enables each customer's contribution to be raised from its projection on the global direction to its own size. As a result, the size of the global update becomes larger and compensates for the norm reduction introduced by model inconsistency, which improves the convergence and generalization performance of the FL framework.

Theoretically, we prove that the proposed FedMRUR can achieve the convergence rate of $\mathcal{O}(\frac{1}{\sqrt{SKT}})$ on the non-convex and L-smooth objective functions with heterogeneous datasets. Extensive experiments on CIFAR-10/100 and TinyImagenet show that our proposed FedMRUR algorithm achieves faster convergence speed and higher test accuracy in training deep neural networks for FL than several baselines including FedAvg, FedProx, SCAFFOLD, FedCM, FedExp, and MoFedSAM. We also study the impact on the performance of adopting the manifold regularization scheme and normalized aggregation scheme.  In summary, the main contributions are as follows:
\begin{itemize}
    \item We propose a novel and practical FL algorithm, {FedMRUR}, which adopts the hyperbolic graph fusion technique to effectively reduce the model inconsistency introduced by data heterogeneity, and a normalized aggregation scheme to compensate the global norm reduction due to the \textit{near-orthogonality} of client updates, which achieves fast convergence and generalizes better.
    \item We provide the upper bound of the convergence rate under the smooth and non-convex cases and prove that FedMRUR has a linear speedup property $\mathcal{O}(\frac{1}{\sqrt{SKT}})$.
    \item We conduct extensive numerical studies on the CIFAR-10/100 and TinyImagenet dataset to verify the performance of FedMRUR, which outperforms several classical baselines on different data heterogeneity.
\end{itemize}
    \section{Related Work}

McMahan \textit{et al.} \cite{conf/aistats/McMahanMRHA17} propose the FL framework and the well-known algorithm, FedAvg, which has been proved to achieve a linear speedup property \cite{conf/iclr/YangFL21}. Within the FL framework, clients train local models and the server aggregates them to update the global model. Due to the heterogeneity among the local dataset, there are two issues deteriorating the performance: the model biases across the local solutions at the clients \cite{conf/mlsys/LiSZSTS20} and the similarity between the client updates (which is also known as the \textit{near-orthogonality} of client updates) \cite{conf/nips/CharlesGHSS21}, which needs a new aggregation scheme to solve. In this work, we focus on alleviating these two challenges to improve the convergence of the FL algorithms. 

\textbf{Model consistency.} So far, numerous methods focus on dealing with the issue of model inconsistency in the FL framework. Li \textit{et al.} \cite{conf/mlsys/LiSZSTS20} propose the FedProx algorithm utilizing the parameter difference between the local and global model as a prox-correction term to constrain the model bias during local training. 
Similar to \cite{conf/mlsys/LiSZSTS20}, during local training, the dynamic regularizer in FedDyn \cite{conf/iclr/AcarZNMWS21} also utilizes the parameter difference to force the local solutions approaching the global solution. 
FedSMOO \cite{conf/icml/SunSC0T23} utilizes a dynamic regularizer to make sure that the local optima approach the global objective.
Karimireddy \textit{et al.} \cite{conf/icml/KarimireddyKMRS20} and Haddadpour \textit{et al.} \cite{conf/aistats/HaddadpourKMM21} mitigate the model inconsistency by tracking the gradient difference between the local and global side. 
Xu \textit{et al.} \cite{journals/corr/abs-2106-10874} and Qu \textit{et al.} \cite{conf/icml/QuLDLTL22} utilize a client-level momentum term incorporating global gradients to enhance the local training process.
Sun \textit{et al.} \cite{10201382} estimates the global aggregation offset in the previous round and corrects the local drift through a momentum-like term to mitigate local over-fitting.
Liu \textit{et al.} \cite{journals/corr/abs-2302-05726} incorporate the weighted global gradient estimations as the inertial correction terms guiding the local training to enhance the model consistency.
Charles \textit{et al.} \cite{conf/aistats/charles2021} demonstrate that the local learning rate decay scheme can achieve a balance between the model inconsistency and the convergence rate.
Tan \textit{et al.} \cite{conf/aaai/tan2022} show that the local learning rate decay scheme is unable to reduce the model inconsistency when clients communicate with the server in an asynchronous way. 
Most methods alleviate the model inconsistency across the clients by making use of the parameter (or gradient) difference between the local and global model.

\textbf{Aggregation scheme.} There are numerous aggregation schemes applied on the server side for improving performance. 
Some works utilize classical optimization methods, such as SGD with momentum \cite{yu2019linear}, and adaptive SGD \cite{cutkosky2018distributed}, to design the new global optimizer for FL.
For instance, FedAvgM \cite{hsu2019measuring,sun2022decentralized} and STEM \cite{conf/nips/KhanduriSYHLRV21} update the global model by combining the aggregated local updates and a momentum term.
Reddi \textit{et al.} \cite{reddi2020adaptive} propose a federated optimization framework, where the server performs the adaptive SGD algorithms to update the global model.
FedNova \cite{conf/nips/WangLLJP20} normalizes the local updates and then aggregates them to eliminate the data and device heterogeneity.
In addition, the permutation invariance property of the neurons in the neural networks is also applied for improving robustness to data heterogeneity.
FedFTG \cite{conf/cvpr/ZhangS0TD22} applies the data-free knowledge distillation method to fine-tune the global model in the server.
FedMA \cite{conf/iclr/WangYSPK20} adopts the Bayesian optimization method to align and average the neurons in a layer-wise manner for a better global solution. 
Li \textit{et al.} \cite{conf/cvpr/LiXSLLSZ22} propose Position-Aware Neurons (PANs) coupling neurons with their positions to align the neurons more precisely. Liu  \textit{et al.} \cite{conf/icml/LiuLWXSY22} adopt the graph matching technique to perform model aggregation, which requires a large number of extra computing resources in the server. 
Many deep model fusion methods \cite{journals/corr/abs-2309-15698} are also applied in the research field of FL, such as model ensemble \cite{conf/nips/0081C0L0DS022} and CVAE \cite{conf/iclr/HeinbaughLS23}.  
The aforementioned algorithms utilize the local parameters or gradients directly without considering the \textit{near-orthogonality} of client updates, which may deteriorate the convergence performance of the FL framework.

The proposed method FedMRUR adopts the hyperbolic graph fusion technique to reduce the model inconsistency and a normalized update aggregation scheme to mitigate the norm reduction of the global update. Compared with the previous works, we utilize the squared Lorentzian distance of the features in the local and global model as the regularization term. This term can more precisely measure the model bias in the low-dimensional subspace. For the update aggregation at the server, FedMRUR averages the local updates norm as the global update norm, which achieves to alleviate the norm reduction introduced by the near-orthogonality of the client updates.
    \section{Methodology}
\label{alo-a}

In this section, we first formally describe the problem step for FL and then introduce the FedMRUR and the two novel hyperbolic graph fusion and normalized aggregation  techniques in FedMRUR.

\subsection{Problem setup}

We consider collaboratively solving the stochastic non-convex optimization problem with $P$ clients : 
\begin{equation}\label{P-orig}
  \begin{aligned}
    \min_{w} f(w):=\frac{1}{P} \sum_{p\in \mathcal{P}} f_p(w), \text{ with } f_{p}(w):=\mathbb{E}_{z\sim D_p}[l(w,z)],
  \end{aligned}
\end{equation}
where $w$ is the machine learning model parameter and $z$ is a data sample following the specified distribution $D_p$ in client $p$; meanwhile $l(w, z)$ represents the model loss function evaluated by data $z$ with parameter $w$. $f_p(w)$ and $f(w)$ indicate the local and global loss function, respectively. 
The loss function $l(w,z)$, $f_p(w)$, and $f(w)$ are non-convex due to the complex machine learning model. 
The heterogeneity of distribution $D_p$ causes the model inconsistency across the clients, which may degrade the performance of the FL framework.

\textbf{Notations.} We define some notations to describe the proposed method conveniently. $\Vert \cdot \Vert $ denotes the spectral norm for a real symmetric matrix or $L_2$ norm for a vector. $\left\langle \cdot, \cdot \right\rangle$ denotes the inner product of two vectors. For any nature $a$, $b$, $a \land b$ and $a \lor b$ denote $\min\left\{a, b \right\}$ and $\max\left\{a, b \right\}$, respectively. The notation $O(\cdot)$, $\Theta(\cdot)$, and $\Omega(\cdot)$ are utilized to hide only absolute constants that don't depend on any problem parameter.

\begin{figure*}
  \centering
  \includegraphics[width=0.95\linewidth]{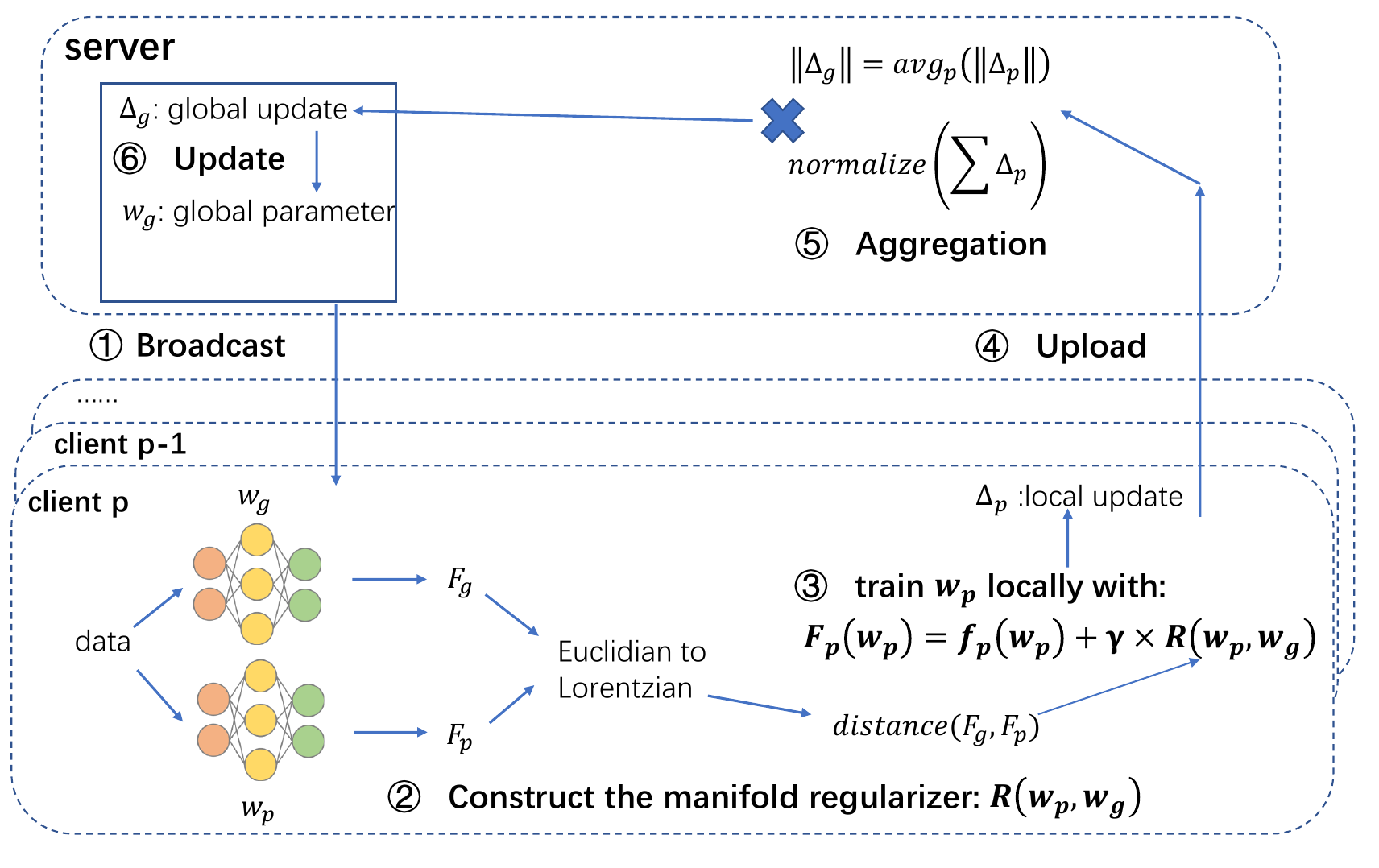}
  \vspace{-0.4cm}
  \caption{\small The workflow of FedMRUR. Once the global parameter $x_0$ is received, the client initializes the local model $x_p$ and starts hyperbolic graph fusion. In the hyperbolic graph fusion, the client first takes the local and global model to get their representations and maps them into the hyperbolic space. Then, the client use their distances in the hyperbolic space as a regularizer to constrain model divergence. Next, the client performs local training and uploads the updates to the server. The server adopts the normalized scheme to aggregate the local updates and performs the global model update.}
  \label{sys}
  \vspace{-0.4cm}
\end{figure*}

\subsection{FedMRUR Algorithm}   

\begin{algorithm}
    \caption{FedMRUR}\label{algo-s}
    \begin{algorithmic}[1]
      \STATE \textbf{Input:}  {initial parameter $w^{0}$; momentum coefficient $\alpha$; perturbation radius $\rho$;}
      {local interval $K$ ; communication rounds $T$; set of selected clients $S_t$; {global and} local learning rate $\eta_g, \eta_l$.} 
      \STATE \textbf{Output:} {Global parameter $w^{t}, \forall t \in T$.}
      \STATE \textbf{Initialization:} {Initialize $\triangle^0=\textbf{0}$ and $w^{0}$ as the global parameter at the server.}
      \STATE {For $t=0,1,...,T-1$ do:}
      \STATE {\quad The server broadcasts parameter $w^t$ and global update $\triangle^t$ to the selected clients $S_t$.}
      \STATE {\quad For client $ p \in S_t$ in parallel do:}
      \STATE {\quad client $p$ initialize the local parameter as $w_p^{t,0} = w^t$.}
      \STATE {\quad \quad For $k=0 ,..., K-1$ do:}
      \STATE {\quad \quad \quad {$\widetilde{w}_p^{t,k} = w_p^{t,k} + \rho \frac{\nabla F_p(w_p^{t,k})}{\Vert \nabla F_p(w_p^{t,k}) \Vert }$}.}
      \STATE {\quad \quad \quad $v_i^{t,k+1} = \alpha\nabla F_p(\widetilde{w}_p^{t,k}) + (1-\alpha) \triangle^t$.}
      \STATE {\quad \quad \quad $w_i^{t,k+1} = w_{i}^{t,k} - \eta_l v_i^{t,k+1}$.}
      \STATE {\quad \quad End for.}
      \STATE {\quad \quad {$\triangle_p^{t} = w_p^{t,K} - w_p^{t,0}$}}
      \STATE {\quad End for}
      \STATE {\quad Aggregate $\triangle^{t+1} = \frac{\sum_{p \in S_t}\Vert \triangle_p^t \Vert }{\lvert S_t \rvert \Vert \sum_{p \in S_t} \triangle_p^t \Vert } \sum_{i \in S_t} \triangle_p^t$.}
      \STATE {\quad Update global parameter $w^{t+1} \!=\! w^t - \eta_g \triangle^{t+1}$.}
      \STATE {End for.}
    \end{algorithmic}
  \end{algorithm}

In this part, we describe our proposed FedMRUR algorithm (see Figure \ref{sys} and Algorithm \ref{algo-s})  to mitigate the negative impacts of model inconsistency and improve performance.  
We add a manifold regularization term on the objective function to alleviate the model inconsistency. 
To eliminate the near-orthogonality of client updates, we design a new method to aggregate the local updates from the clients. Within one communication round, the server first broadcast the global model to the participating clients. 
During local training, the client takes the sampled data into the local and received global model and gets the representations. Then the client maps the representations into the hyperbolic space and computes their distance, which is used to measure the divergence between the local and global models. Next, the client adopts the distance as a manifold regular to constrain the model bias, achieving model fusion in the hyperbolic graph. After local training, the client uploads its local update to the server. The server aggregates the local update norms as the global update step and normalizes the sum of the local updates as the global update direction. Utilizing the normalized aggregation scheme, the server can update the model with a larger step and improve the convergence.

\textbf{Hyperbolic Graph Fusion.} 
In FL, the {Euclidean distances} between parameters\cite{journals/corr/abs-2002-05516,conf/mlsys/LiSZSTS20} (or gradients\cite{conf/icml/KarimireddyKMRS20,journals/corr/abs-2106-10874}) between the client and the server is utilized to correct the local training for alleviating the model inconsistency. 
However, the Euclidean distance between the model parameters can't correctly reflect the variation in functionality due to the complex structure of the modern machine learning model. The model inconsistency across the clients is still large, which impairs the performance of the FL framework.
{Since the most prevail machine learning model, neural network has a graph structure and the hyperbolic space exhibits minimal distortion in describing data with graph structure,} 
{the client maps the representations of the local and global model into the hyperbolic shallow space\cite{conf/icml/NickelK18} and }uses the squared Lorentzian distance\cite{conf/icml/LawLSZ19} between the representations to measure the model inconsistency. 

To eliminate the model inconsistency effectively, we adopt the hyperbolic graph fusion technique, adding the distance of representations in the hyperbolic space as a regularization term to the loss function. Then the original problem (\ref{P-orig}) can be reformulated as:
\begin{equation}\label{P-new}
  \begin{aligned}
    \min_{w_0} F(w_0) = \frac{1}{P} \sum_{p} [f_p(w_p) + \gamma * R(w_p, w_g)],  ~~ s.t.~ w_g = \frac{1}{P} \sum_{p} w_p
  \end{aligned}
\end{equation}
where $R(w_p, w_g)$ is the hyperbolic graph fusion regularization term, defined as:
\begin{equation}\label{R-mani}
   R(w_p, w_g)  = \exp \left(\Vert L_p - L_g \Vert_\mathcal{L}^2 / \sigma \right) , ~~
    \Vert L_p - L_g \Vert_\mathcal{L}^2  = -2\beta - 2\langle L_p, L_g \rangle_\mathcal{L}. 
\end{equation}
In (\ref{P-new}) and (\ref{R-mani}), $L_p$ and $L_g$ are the mapped Lorentzian vectors corresponding to $Z_p$ and $Z_g$, the representations from local model $w_p$ and global model $w_g$. 
{$\gamma$ and $\sigma$ are parameters to tune the impact of the model divergence on training process.}
$\beta$ is the parameter of the Lorentz model and $\langle x, y \rangle_\mathcal{L}$ denotes the Lorentzian scalar product defined as:
\begin{equation}\label{L-pro}
  \begin{aligned}
    \langle x, y \rangle_\mathcal{L} = -x_0 \cdot y_0 + \sum_{i=1}^{d} x_i \cdot y_i,
  \end{aligned}
\end{equation}
where $x$ and $y$ are $d+1$ dimensional mapped Lorentzian vectors.
The new problem (\ref{P-new}) can be divided into each client and client $p$ uses its local optimizer to solve the following sub-problem:
\begin{equation}\label{P-sub}
    \begin{aligned}
      \min_{w_p} & F_p(w_p) = f_p(w_p) + \gamma * R(w_p, w_g).\\
    \end{aligned}
\end{equation}
The regularization term $R(w_p,w_g)$ has two benefits for local training: (1) It mitigates the local over-fitting by constraining the local representation to be closer to the global representation in the hyperbolic space (Lorentzian model); (2) It adopts representation distances in a low-dimensional hyperbolic space to measure model deviation, which can be more precisely and save computation.

\begin{figure}[htbp]
  \centering
  \subfigure[Vanilla Aggregation.]{
    \begin{minipage}{0.40\linewidth}
      \centering
      \includegraphics[width=\linewidth]{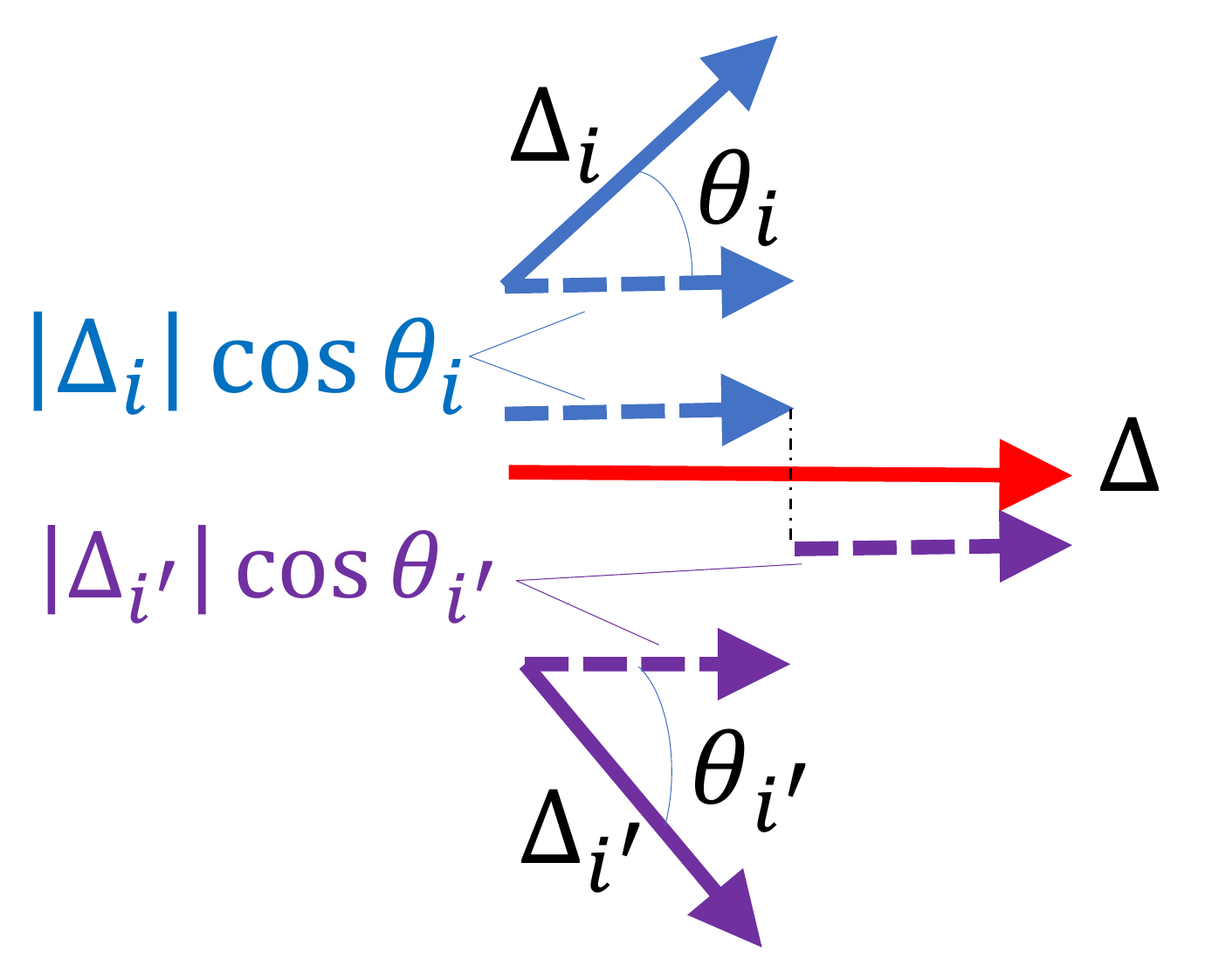}
      \vspace{-0.2cm}
      \label{direct-agg}
    \end{minipage}
  }
  \subfigure[Normalized Aggregation.]{
    \begin{minipage}{0.32\linewidth}
      \centering
      \includegraphics[width=\linewidth]{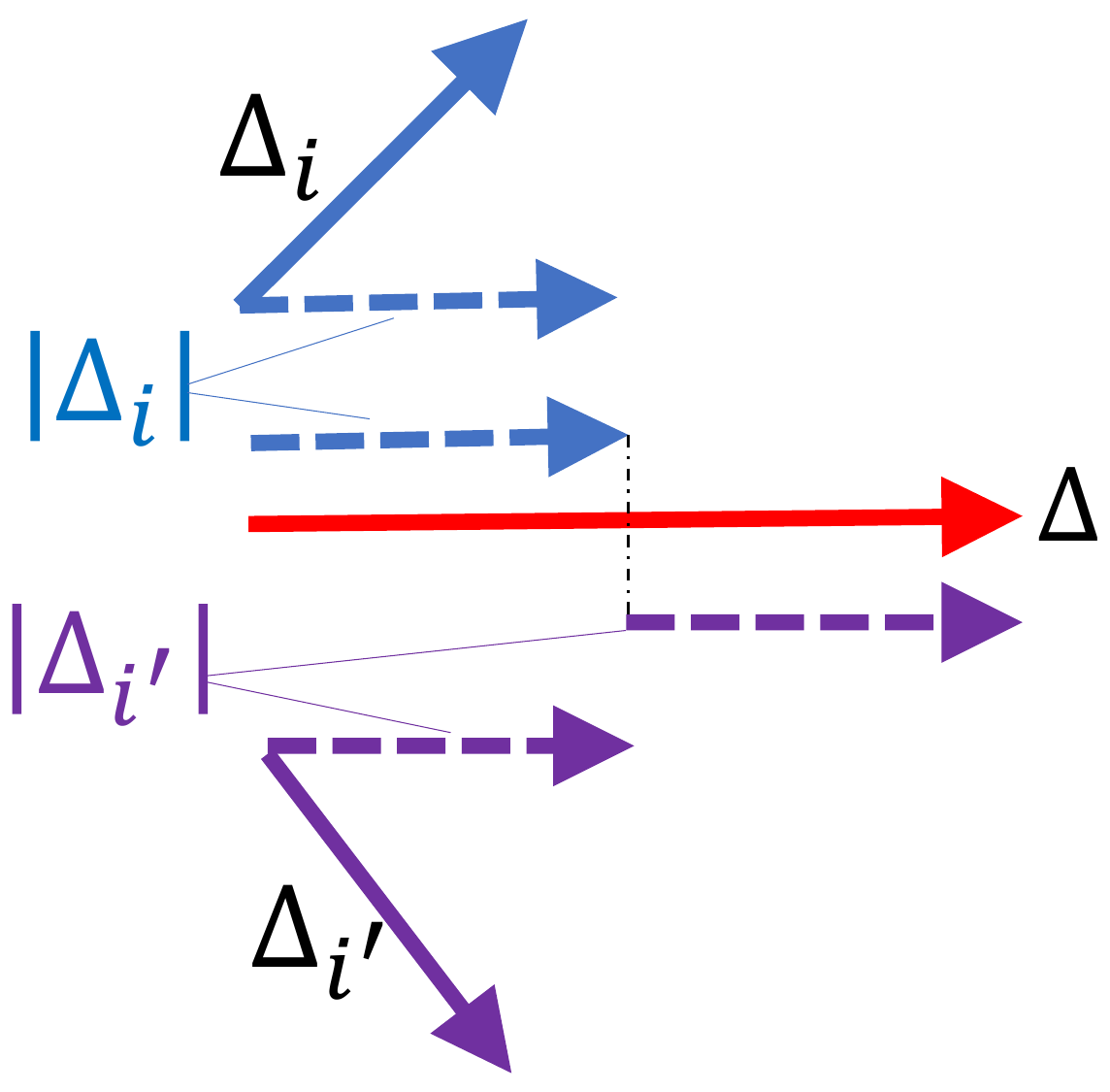}
      \vspace{-0.2cm}
      \label{normal-acc}
    \end{minipage}
  }
  \caption{\small A toy schematic to compare the naive aggregation and normalized aggregation {of the local updates}, where the number of clients is 2 and the local intervals are set as 1. The solid line indicates the client's local update $\triangle_i$, $\theta_i$ is the angle between the local update and the global update, and the dotted line represents the clients' contribution on the global update. The red lines are the aggregated global update $\triangle$.
  {The main difference is $\left\lVert \triangle \right\rVert$, the norm of the global update. When adopting the naive aggregation method, the global norm $\left\lVert \triangle \right\rVert = \sum_{i} \left\lVert \triangle_i \right\rVert \cos \theta_i $. When adopting the normalized aggregation method, the global norm $\left\lVert \triangle \right\rVert = \sum_{i} \left\lVert \triangle_i \right\rVert$.}    
  {We can see that the norm of global update in the direct aggregation is less than the norm in the normalized aggregation,} due to the fact that $\cos \theta \le 1$. }
  \label{agg-way}
  \vspace{-0.3cm}
\end{figure}

\textbf{Normalized Aggregation of Local Updates.} According to \cite{conf/nips/CharlesGHSS21}, after a number of communication rounds, the cosine similarities of the local updates across clients are almost zero, which comes from the model(gradient) inconsistency between the server and client sides. 
In the vanilla aggregation way, the server takes the mean of local updates from participated clients as the global gradient. 
As shown in Figure \ref{direct-agg}, {client $i$ makes $\left\lVert \triangle_i \right\rVert \cos \theta_i $ contribution on the aggregation result.}
{When the data heterogeneous is significant, the cosine similarities across the local updates are small. 
Correspondingly, the cosine similarities $\cos \theta_i$ between the clients and the server narrows down, and the global update norm shrinks which slows down the convergence.}
To alleviate the negative impact of near-orthogonal local updates, we propose a new normalized aggregation method to compute the global update. 
The direction of the global update can be acquired by normalizing the sum of the local updates and the result is the same as the one obtained by the vanilla way. For the norm, the server computes it by taking the average of norms of the received local updates. 
As shown in Figure \ref{normal-acc}, with the proposed normalized aggregation method, {the client $i$'s contribution on the global update increases from $\left\lVert \triangle_i \right\rVert \cos \theta_i$ to $\left\lVert \triangle_i \right\rVert$.} 
{Accordingly, the norm of the global update $\left\lVert \triangle \right\rVert $ grows and accelerates the convergence.}

Our proposed FedMRUR is characterized by Figure \ref{sys} and the detailed training process is summarized in Algorithm \ref{algo-s}. Concretely, firstly, the server broadcasts the global parameter and updates it to the selected clients $S_t$. 
At the initialization stage of local training, client $i$ utilizes the manifold regularization scheme to construct its own local loss function $F_i$ with the received global parameter $w^t$. 
Then, client $i$ adopts the Sharpness Aware Minimization (SAM) \cite{conf/iclr/ForetKMN21} optimizer to compute the gradient $\widetilde{g}_i$ with data sampled randomly. 
The local updater $v_i$ consists of the stochastic gradient $\alpha\widetilde{g}_i$ and the momentum term $(1-\alpha)\triangle^t$, the received global update from the last round. 
Client $i$ applies $v_i$ to perform multiple SGD and uploads the accumulated local update $\triangle_i^t$ to the server. 
The server takes two steps to construct the global update: 1) aggregating and normalizing the accumulated local updates from the participated clients $S_t$ as the direction of the global update; 2) averaging the norms of accumulated local updates as the norm of the global update. 
Finally, the server utilizes the constructed global update to perform one step SGD and get a new global parameter.

\begin{remark}
    {FedMRUR is on the top of MoFedSAM \cite{conf/icml/QuLDLTL22} due to its excellent performance and our method can also be integrated with other federated learning methods, including FedExp, FedCM, SCAFFOLD, FedDYN, \textit{etc.}, to improve the performance.}
\end{remark}

    \section{Convergence Analysis}

In this section, we provide the theoretical analysis of our proposed FedMRUR for general non-convex FL setting. Due to space limitations, the detailed proofs are placed in \textbf{Appendix}. Before introducing the convergence results, we first state some commonly used assumptions as follows.

\begin{assumption}\label{L-obj}
    $f_p(x)$ is $L$-smooth and $R(x,x_0)$ is $r$-smooth with fixed $x_0$ for all client $p$, \textit{i.e.},
    \begin{equation}\nonumber
        \left\lVert \nabla f_p(a) - {\nabla} f_p(b) \right\rVert \le L \left\lVert a-b \right\rVert,~~  \left\lVert \nabla R(a, x_0) - \nabla R(b, x_0) \right\rVert \le r \left\lVert a-b \right\rVert.
    \end{equation}
\end{assumption}

\begin{assumption}\label{V-gra}
    The stochastic gradient $g_p^{t,k}$ with the randomly sampled data on the local client $p$ is an unbiased estimator of $\nabla F_p(x_p^{t,k})$ with bounded variance, \textit{i.e.},
    \begin{equation}\nonumber
        {E}[g_p^{t,k}] = \nabla F_p(x_p^{t,k}),~  {E} \left\lVert g_p^{t,k} - \nabla F_p(x_p^{t,k}) \right\rVert^2 \le \sigma_l^2 
    \end{equation}
\end{assumption}

\begin{assumption}\label{H-gra}
    The dissimilarity of the dataset among the local clients is bounded by the local and global gradients, \textit{i.e.},
    \begin{equation}
        {E} \left\lVert \nabla F_p(x) - \nabla F(x) \right\rVert^2 \le \sigma_g^2
    \end{equation}
\end{assumption}

Assumption \ref{L-obj} guarantees the gradient Lipschitz continuity for the objective function and regularizer term.
Assumption \ref{V-gra} guarantees the stochastic gradient is bounded by zero mean and constant variance. 
Assumption \ref{H-gra} gives the heterogeneity bound for the non-iid dataset across clients. All the above assumptions are widely used in many classical studies \cite{conf/iclr/AcarZNMWS21,conf/iclr/YangFL21,conf/iclr/ReddiCZGRKKM21,journals/corr/abs-2106-10874,jhunjhunwalafedexp,conf/icml/KarimireddyKMRS20}, and our convergence analysis depends on them to study the properties of the proposed method.

\textbf{Proof sketch.} To explore the essential insights of the proposed FedMRUR, we first bound the client drift over all clients within the t-th communication round. Next, we characterize the global parameter moving within a communication round, which is similar to the one in centralized machine learning algorithms with momentum acceleration technology. Then, the upper bound for the global update $\triangle_t$ is provided. Lastly, we use $\left\lVert \nabla F(x_t) \right\rVert $ the global gradient norm as the metric of the convergence analysis of FedMRUR. The next theorem characterizes the convergence rate for FedMRUR.

\begin{thm}\label{thm}
    Let all the assumptions hold and with partial client participation. If $\eta_l \le \frac{1}{\sqrt{30}\alpha KL}$, $\eta_g \le \frac{S}{2\alpha L(S-1)}$ satisfying $\frac{3}{4}- \frac{2(1-\alpha L)}{KN} - 70(1-\alpha)K^2(L+r)^2\eta_l^2 - \frac{90\alpha(L+r)^3\eta_g\eta_l^2}{S} - \frac{3\alpha(L+r)\eta_g}{2S}$, then for all $K \ge 0$ and $T \ge 1$, we have:  
    \begin{equation}\label{thm-e-1}
        \begin{aligned}
            \frac{1}{\sum_{t=1}^{T} d_t} \sum_{t=1}^{T} \mathbb{E} \left\lVert \nabla F(w^t) \right\rVert^2 d_t\le \frac{F^0 - F^*}{C \alpha \eta_g \sum_{t=1}^{T} d_t} + \Phi ,
        \end{aligned}
    \end{equation}
    where 
    \begin{equation}\nonumber
        \begin{aligned}
            \Phi = & \frac{1}{C}\left[10\alpha^2(L+r)^4\eta_l^2\rho^2\sigma_l^2 + 35\alpha^2K(L+r)^2\eta_l^23(\sigma_g^2+6(L+r)^2\rho^2) 
            + 28\alpha^2K^3(L+r)^6\eta_l^4\rho^2  \right. \\ 
            & \left. + 2K^2L^4\eta_l^2 \rho^2 + \frac{\alpha(L+r)^3\eta_g^2\rho^2}{2KS}\sigma_l^2 + \frac{\alpha(L+r)\eta_g}{K^2SN} ( 30 NK^2(L+r)^4\eta_l^2\rho^2\sigma_l^2 \right. \\
            & \left. + 270NK^3(L+r)^2\eta_l^2\sigma_g^2 + 540NK^2(L+r)^4\eta_l^2\rho^2 + 72K^4(L+r)^6\eta_l^4\rho^2 \right. \\
            & \left. + 6NK^4(L+r)^2\eta_l^2\rho^2 + 4NK^2\sigma_g^2 + 3NK^2(L+r)^2\rho^2 )  \right]. 
        \end{aligned}
    \end{equation}
    and $d_t = \frac{\sum_{i} \left\lVert \triangle_i^t \right\rVert}{\left\lVert \sum_i \triangle_i^t \right\rVert} \ge 1$.
    Specifically, we set $\eta_g=\Theta (\frac{\sqrt{SK}}{\sqrt{T}})$ and $\eta_l=\Theta (\frac{1}{\sqrt{ST}K(L+r)})$, the convergence rate of the FedMRUR under partial client participation can be bounded as:
    \begin{equation}\label{thm-e-2}
        \begin{aligned}
            \sum_{t=1}^{T} E \left\lVert \nabla F(x_t) \right\rVert^2 = & O \left(\frac{1}{\sqrt{SKT}}\right) + O \left(\frac{\sqrt{K}}{{ST}}\right) + O \left(\frac{1}{\sqrt{K}T}\right).
        \end{aligned}
    \end{equation}
\end{thm}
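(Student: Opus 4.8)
The plan is to run a descent-lemma argument on the regularized surrogate objective $F = \frac{1}{P}\sum_p F_p$, which inherits smoothness from Assumption~\ref{L-obj} with constant $(L+r)$ since $F_p = f_p + \gamma R$. Writing the global step as $w^{t+1}-w^t = -\eta_g d_t\cdot\frac{1}{\lvert S_t\rvert}\sum_{p\in S_t}\triangle_p^t$ with $d_t = \frac{\sum_p\Vert\triangle_p^t\Vert}{\Vert\sum_p\triangle_p^t\Vert}\ge 1$, smoothness gives
\[
F(w^{t+1}) \le F(w^t) - \eta_g d_t\,\langle \nabla F(w^t),\,\tfrac{1}{\lvert S_t\rvert}\textstyle\sum_p\triangle_p^t\rangle + \tfrac{(L+r)\eta_g^2}{2}\Vert\triangle^{t+1}\Vert^2.
\]
The first thing I would isolate is the SAM perturbation: because $\widetilde w_p^{t,k}$ differs from $w_p^{t,k}$ by exactly $\rho$ in norm, $(L+r)$-smoothness yields $\Vert\nabla F_p(\widetilde w_p^{t,k}) - \nabla F_p(w_p^{t,k})\Vert \le (L+r)\rho$, so every perturbed gradient can be replaced by the true gradient at the cost of an additive $(L+r)^2\rho^2$ bias. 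This is the source of the ubiquitous $\rho^2$ terms in $\Phi$.

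Next I would follow the three-stage program announced in the proof sketch. First, bound the \emph{client drift} $\mathbb{E}\Vert w_p^{t,k} - w^t\Vert^2$ by unrolling the local recursion $w_p^{t,k+1} = w_p^{t,k} - \eta_l(\alpha\nabla F_p(\widetilde w_p^{t,k}) + (1-\alpha)\triangle^t)$; a Young-inequality split followed by a discrete Gr\"onwall argument over $k=0,\dots,K-1$ produces the characteristic $\eta_l^2 K^2$ factors multiplying $\sigma_l^2$, $\sigma_g^2$, $\rho^2$, and $\Vert\triangle^t\Vert^2$. Second, I would bound the aggregated update $\mathbb{E}\Vert\triangle^{t+1}\Vert^2$; here partial participation enters, and I would condition on the sampled set $S_t$, use unbiasedness of uniform sampling to replace $\frac{1}{\lvert S_t\rvert}\sum_{p\in S_t}$ by $\frac{1}{P}\sum_p$, and absorb the sampling variance into $\sigma_g^2/S$ terms, which is precisely what yields the linear speedup in $S$. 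Third, I would treat the inner product by writing $\mathbb{E}[\frac{1}{\lvert S_t\rvert}\sum_p\triangle_p^t] = -\eta_l\alpha\sum_k\frac{1}{P}\sum_p\nabla F_p(\widetilde w_p^{t,k}) - \eta_l(1-\alpha)K\triangle^t$, so that the $\alpha$-weighted true-gradient part supplies the desired $-\Vert\nabla F(w^t)\Vert^2$ while the drift and momentum remainders are controlled by the first two bounds.

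The delicate part is the momentum term $(1-\alpha)\triangle^t$, which couples consecutive rounds and cannot be discarded round-by-round. I would introduce a Lyapunov potential of the form $F(w^t) + c\,\Vert\triangle^t - \nabla F(w^t)\Vert^2$ (equivalently, telescope the momentum recursion) so that the cross-round terms cancel when summed over $t$; this is the step that forces the stated constraints $\eta_l \le \frac{1}{\sqrt{30}\alpha KL}$ and $\eta_g \le \frac{S}{2\alpha L(S-1)}$, which keep the coefficient of $\Vert\nabla F(w^t)\Vert^2$ bounded away from zero. I would identify that coefficient as $C\alpha\eta_g d_t$, with $C$ the positive lower bound defined by the displayed $\tfrac{3}{4}-\cdots$ expression. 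Crucially, the random factor $d_t\ge 1$ is correlated with the updates and must \emph{not} be pulled out of the expectation; instead I would leave it attached to $\Vert\nabla F(w^t)\Vert^2$ on the left-hand side, which is exactly why the convergence metric is the $d_t$-weighted average in~\eqref{thm-e-1}. Telescoping over $t=1,\dots,T$ collapses the potential to $F^0-F^*$ and collects the residual noise into $\Phi$; substituting $\eta_g=\Theta(\sqrt{SK}/\sqrt{T})$ and $\eta_l=\Theta(1/(\sqrt{ST}K(L+r)))$ and matching orders in each term of $\Phi$ then yields the three-term rate~\eqref{thm-e-2}.

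I expect the main obstacle to be exactly the interaction between the cross-round momentum coupling and the data-dependent normalization factor $d_t$: obtaining a clean telescoping requires choosing the potential so the momentum remainder is nonnegative and absorbable, while simultaneously ensuring that carrying $d_t$ as a weight, rather than crudely bounding it by $\sqrt{S}$, does not destroy the positivity of the descent coefficient $C$.
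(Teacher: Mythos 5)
Your plan is correct and, in its overall architecture, is the same as the paper's own proof: the identical three-stage decomposition (client drift bounded by unrolling the local recursion with a Gr\"onwall-type argument, Lemma \ref{lem-drift}; a bound on $\mathbb{E}\left\lVert \triangle^{t+1}\right\rVert^2$ under partial participation with the sampling probabilities made explicit, Lemma \ref{global update}; and a descent lemma in which the normalization factor $d_t$ stays attached to $\left\lVert \nabla F(\tilde{w}^t)\right\rVert^2$ rather than being crudely bounded by $\sqrt{S}$, Lemma \ref{descent}), followed by telescoping and the same choice of $\eta_g$, $\eta_l$. The one place you genuinely diverge is the cross-round momentum coupling. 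The paper introduces no Lyapunov potential of the form $F(w^t) + c\left\lVert \triangle^t - \nabla F(w^t)\right\rVert^2$; instead, inside the drift recursion it disposes of the $(1-\alpha)\triangle^t$ term by invoking the approximation $\triangle^t \approx \nabla F(\tilde{w}^t)$ (this is the stated justification of step (\ref{Zd})), and it defers the remaining accumulated-momentum estimates to Lemma B.1 of the MoFedSAM paper \cite{conf/icml/QuLDLTL22}. Your potential-function treatment is the more rigorous of the two: it converts that heuristic substitution into an honest telescoping argument, at the cost of carrying one extra quadratic error term across rounds, and it arrives at the same step-size constraints and the same three-term rate. The paper's shortcut is lighter but is precisely the least airtight step of its argument, so your plan would repair, rather than merely replicate, that part of the proof.
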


\begin{remark}
    Compared with the inequality $\frac{F^0 - F^*}{C \alpha \eta_g T} + \Phi$ of Theorem D.7 in MoFedSAM paper\cite{conf/icml/QuLDLTL22}, the second constant term in (\ref{thm-e-1}) is same and the first term is less than the first term  in MoFedSAM paper, which validates FedMRUR achieves faster convergence than MoFedSAM.  
\end{remark}

\begin{remark}
    From (\ref{thm-e-2}), we can find that when $T$ is large enough, the dominant term $O (\frac{1}{\sqrt{SKT}})$ in the bound achieves a linear speedup property with respect to the number of clients. It means that to achieve $\epsilon-$precision, there are $O (\frac{1}{SK\epsilon^2})$ communication rounds required at least for non-convex and L-smooth objective functions.
\end{remark}
    \begin{figure}[htbp]
    \centering
    \subfigure[Test accuracy on CIFAR-100 in the non-iid ($\mu=0.3$, $\mu=0.6$, $n=10$ and $n=20$) settings.]{
      \begin{minipage}{0.999\linewidth}
        \centering
        \includegraphics[width=0.244\linewidth]{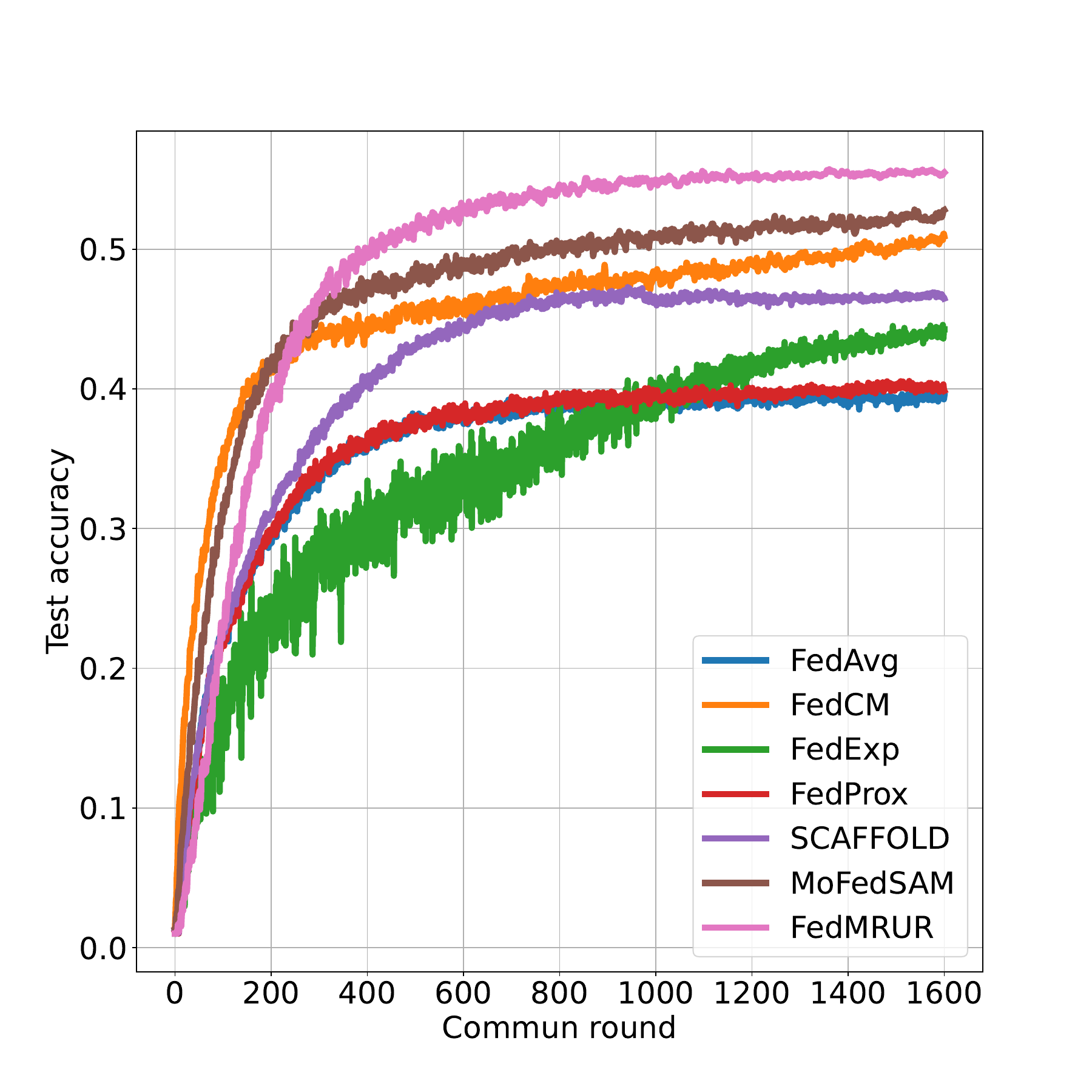}
        \includegraphics[width=0.244\linewidth]{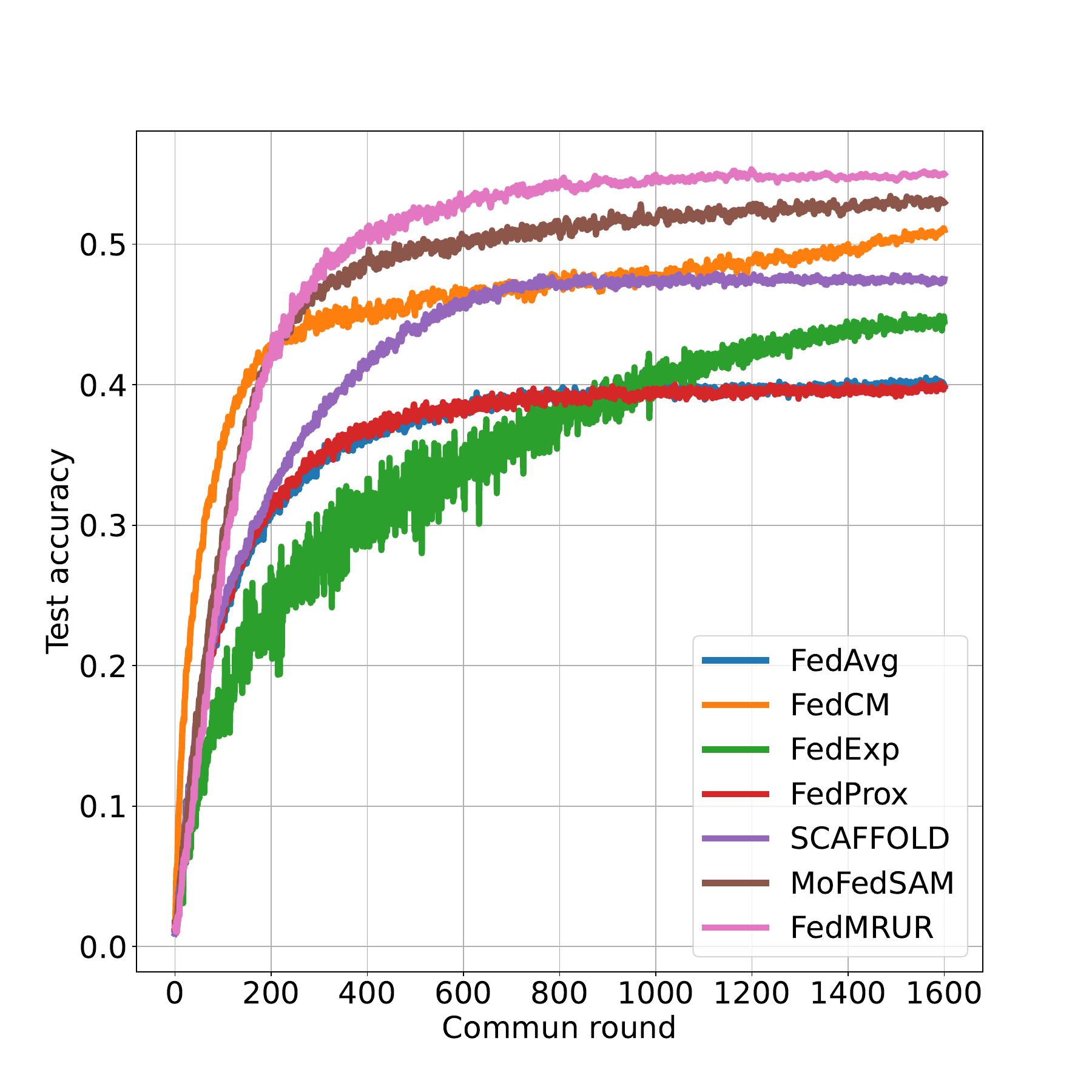}
        \includegraphics[width=0.244\linewidth]{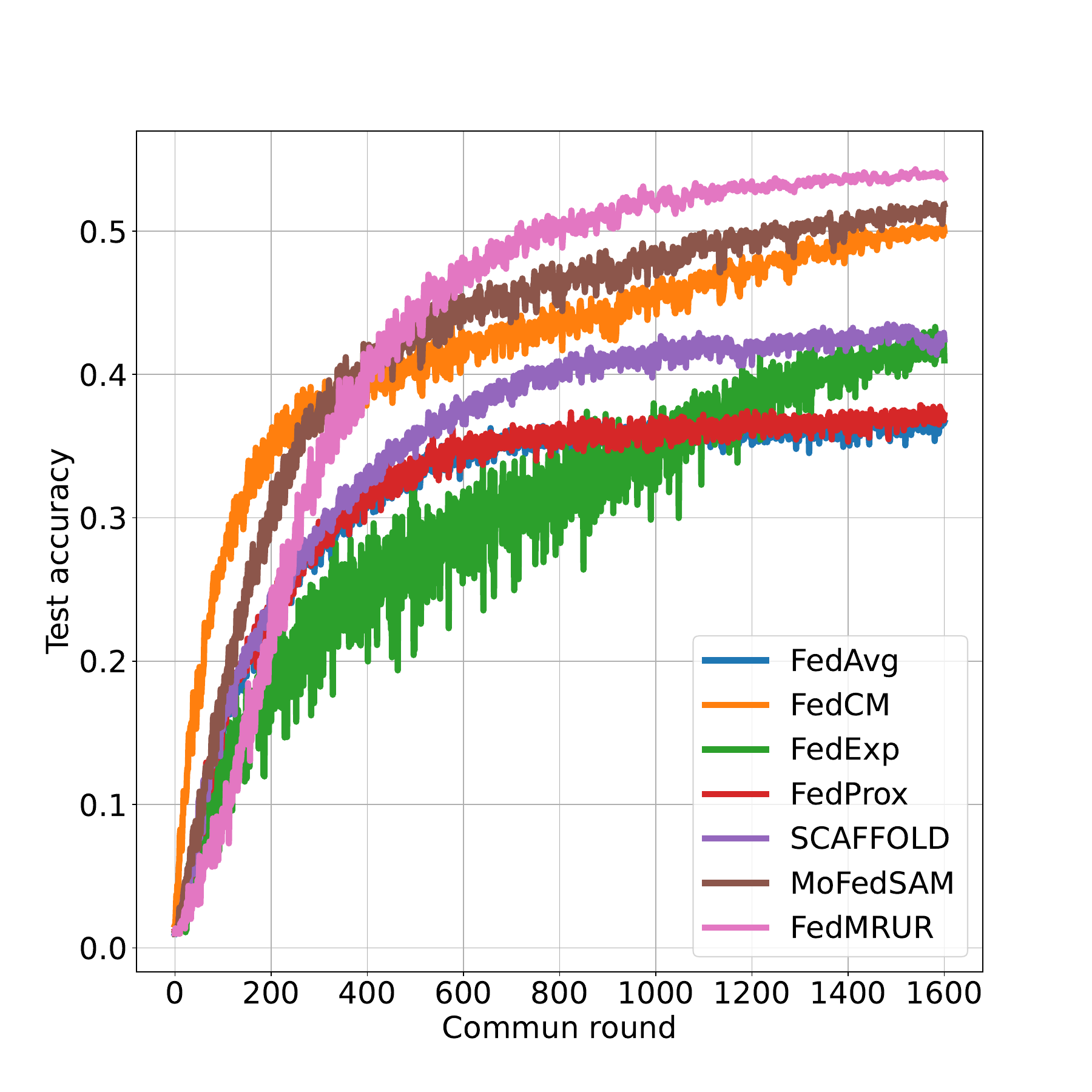}
        \includegraphics[width=0.244\linewidth]{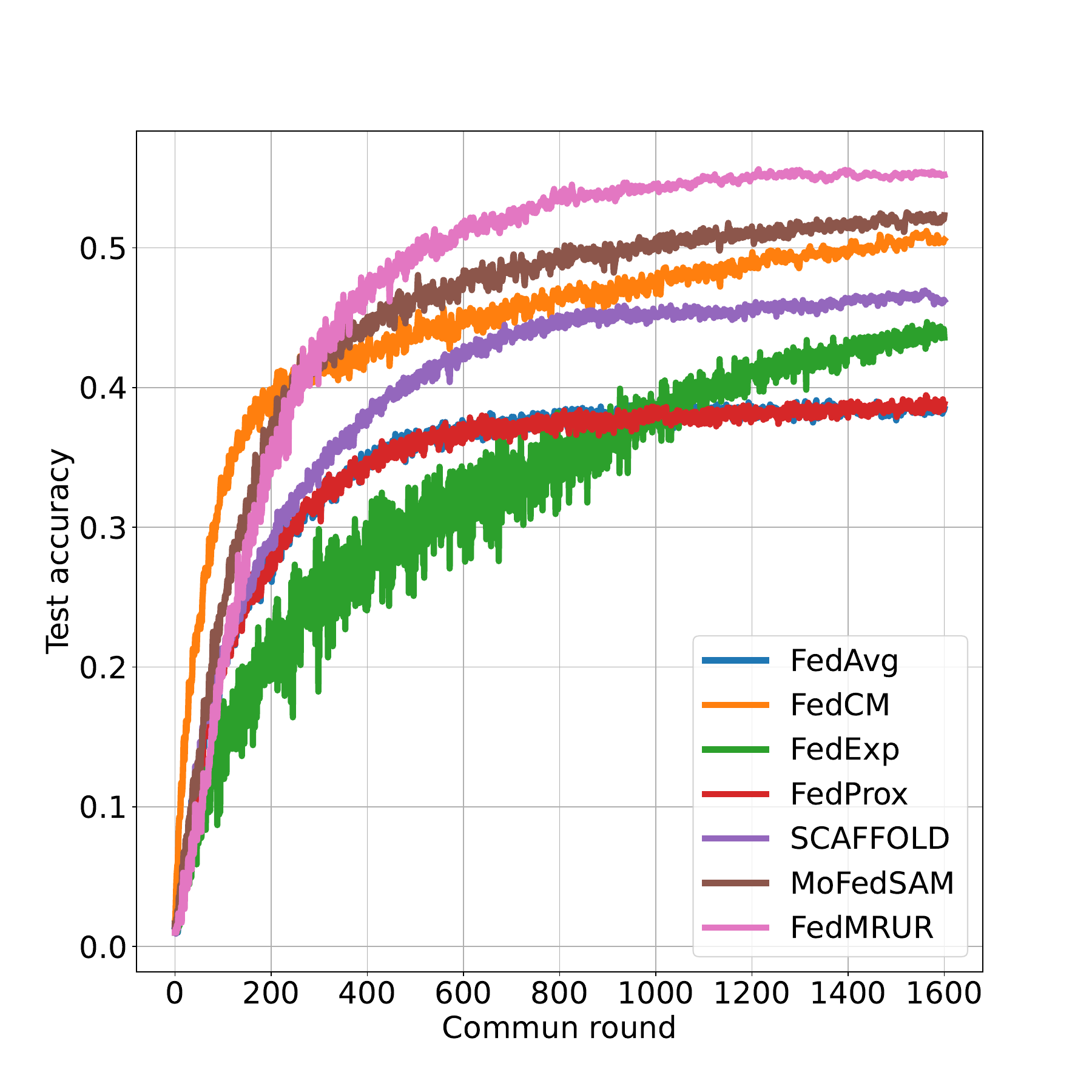}
        \label{cifar-100-acc}
      \end{minipage}
    }
    \vspace{-0.3cm}
    \subfigure[Test accuracy on TinyImageNet in the non-iid ($\mu=0.3$, $\mu=0.6$, $n=40$ and $n=80$) settings.]{
      \begin{minipage}{0.999\linewidth}
        \centering
        \includegraphics[width=0.244\linewidth]{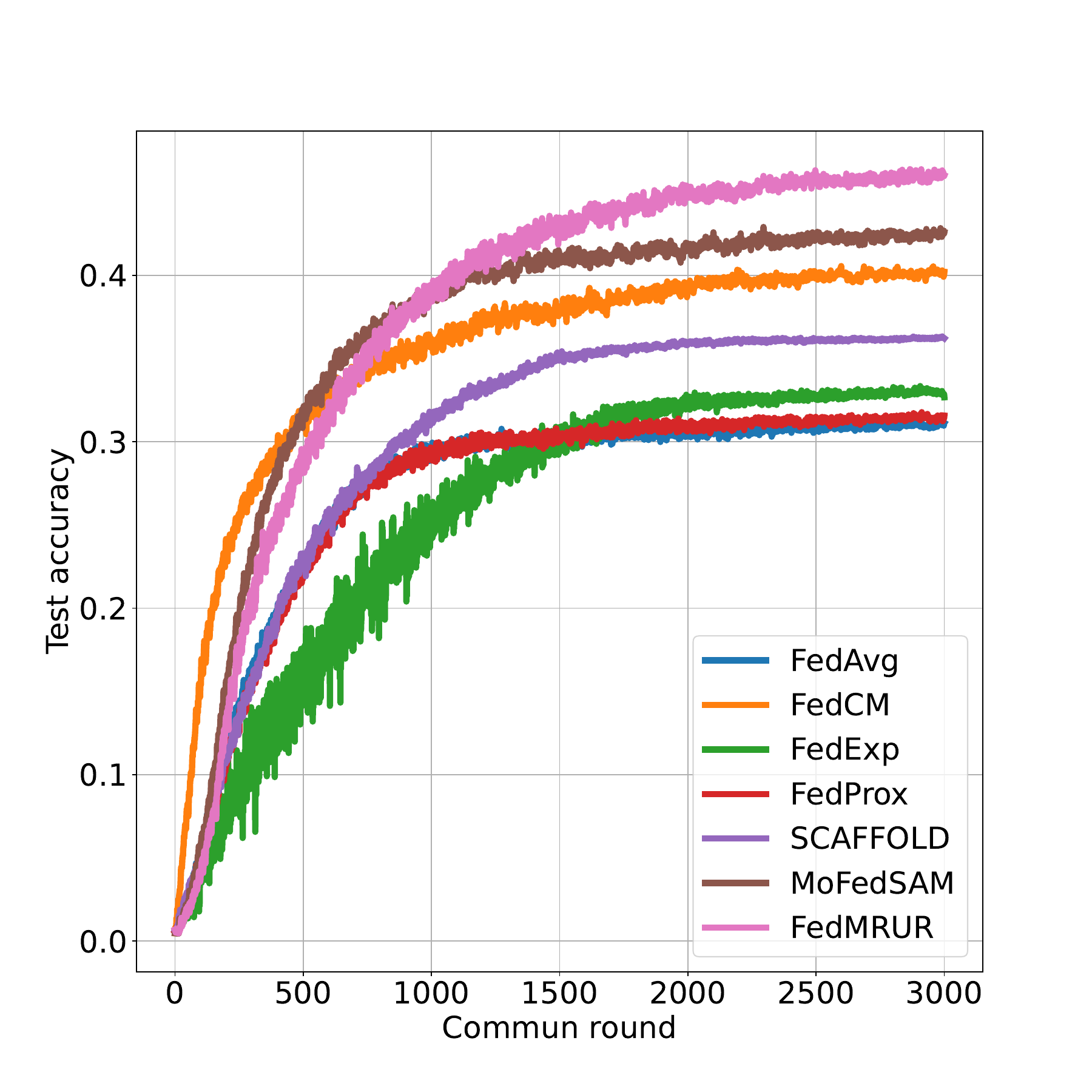}
        \includegraphics[width=0.244\linewidth]{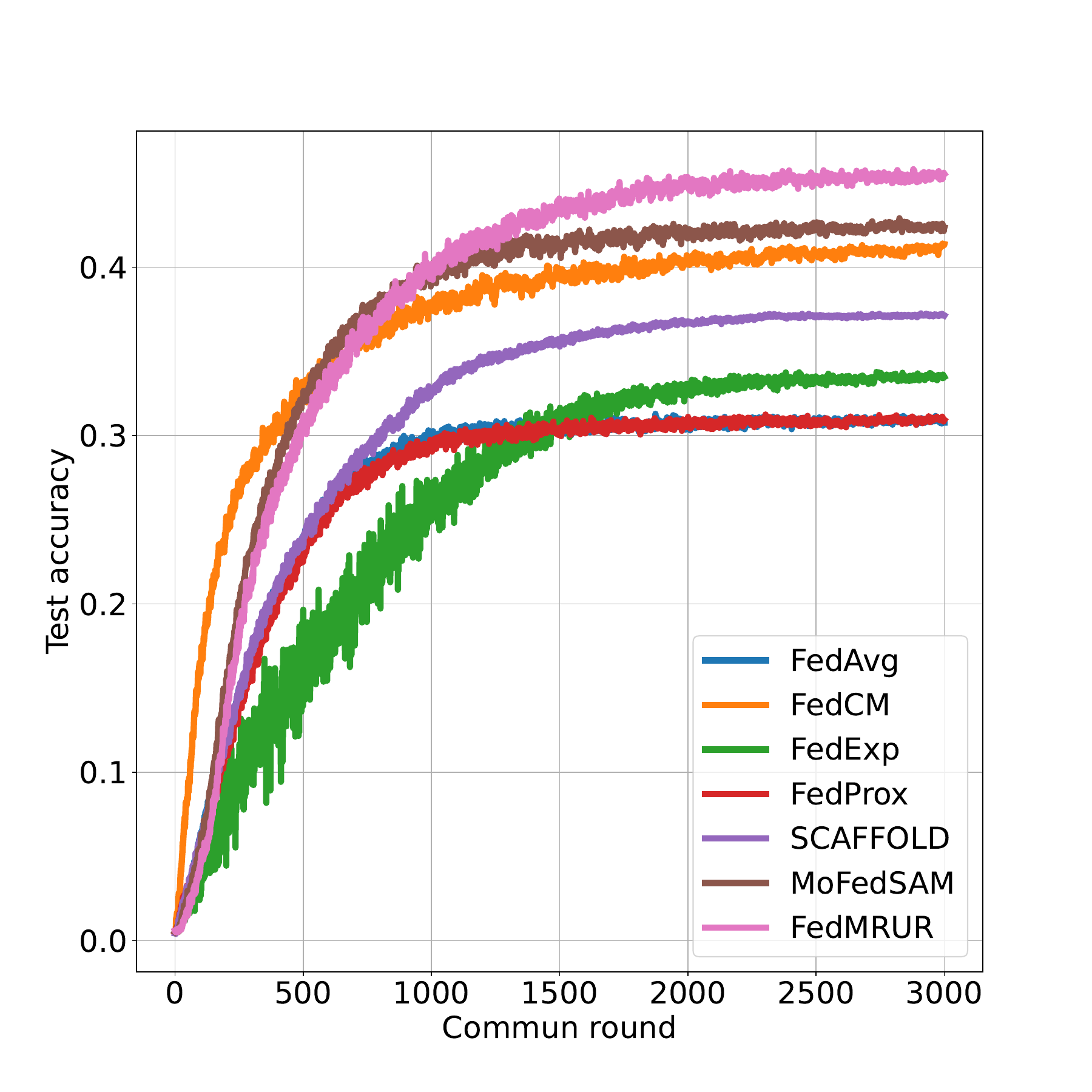}
        \includegraphics[width=0.244\linewidth]{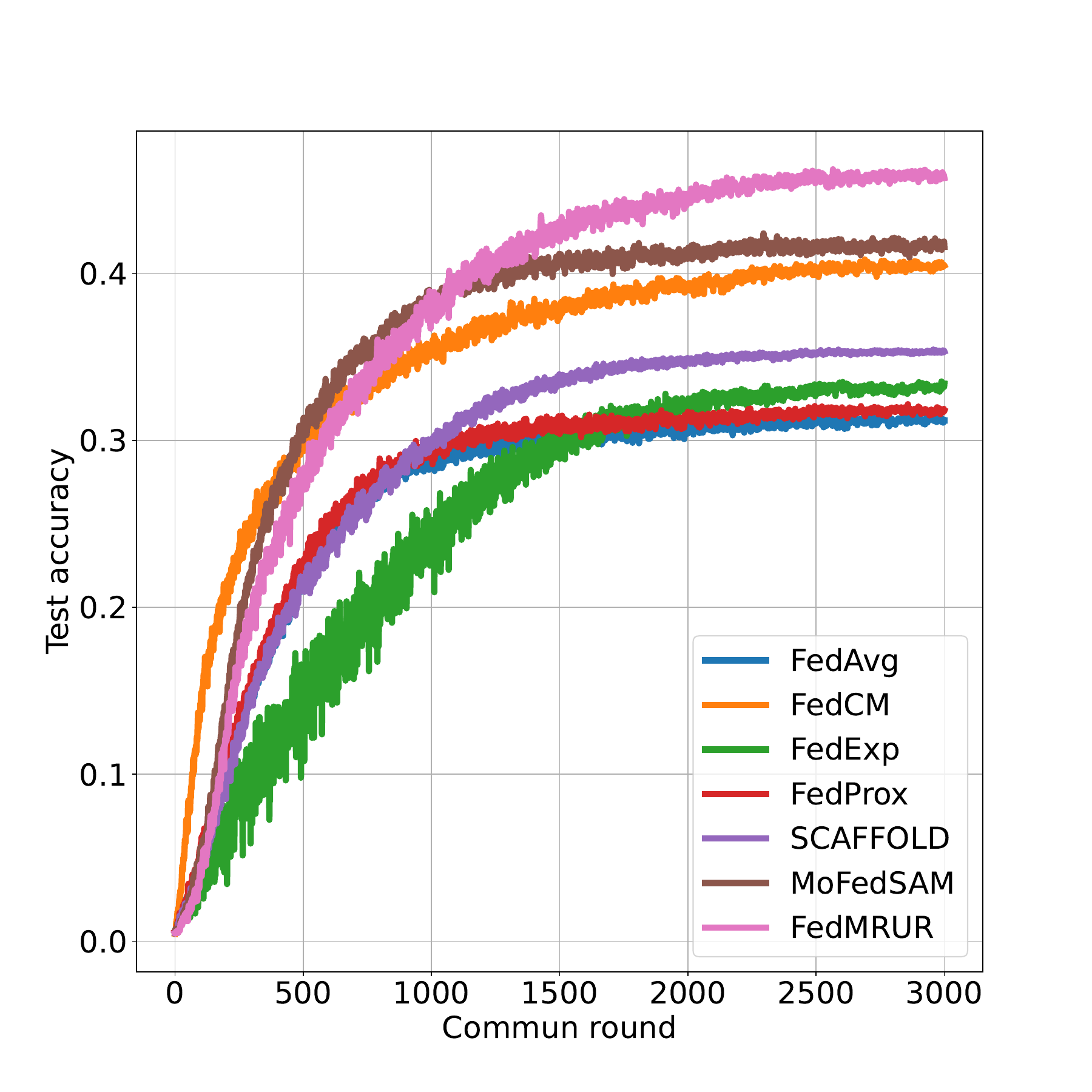}
        \includegraphics[width=0.244\linewidth]{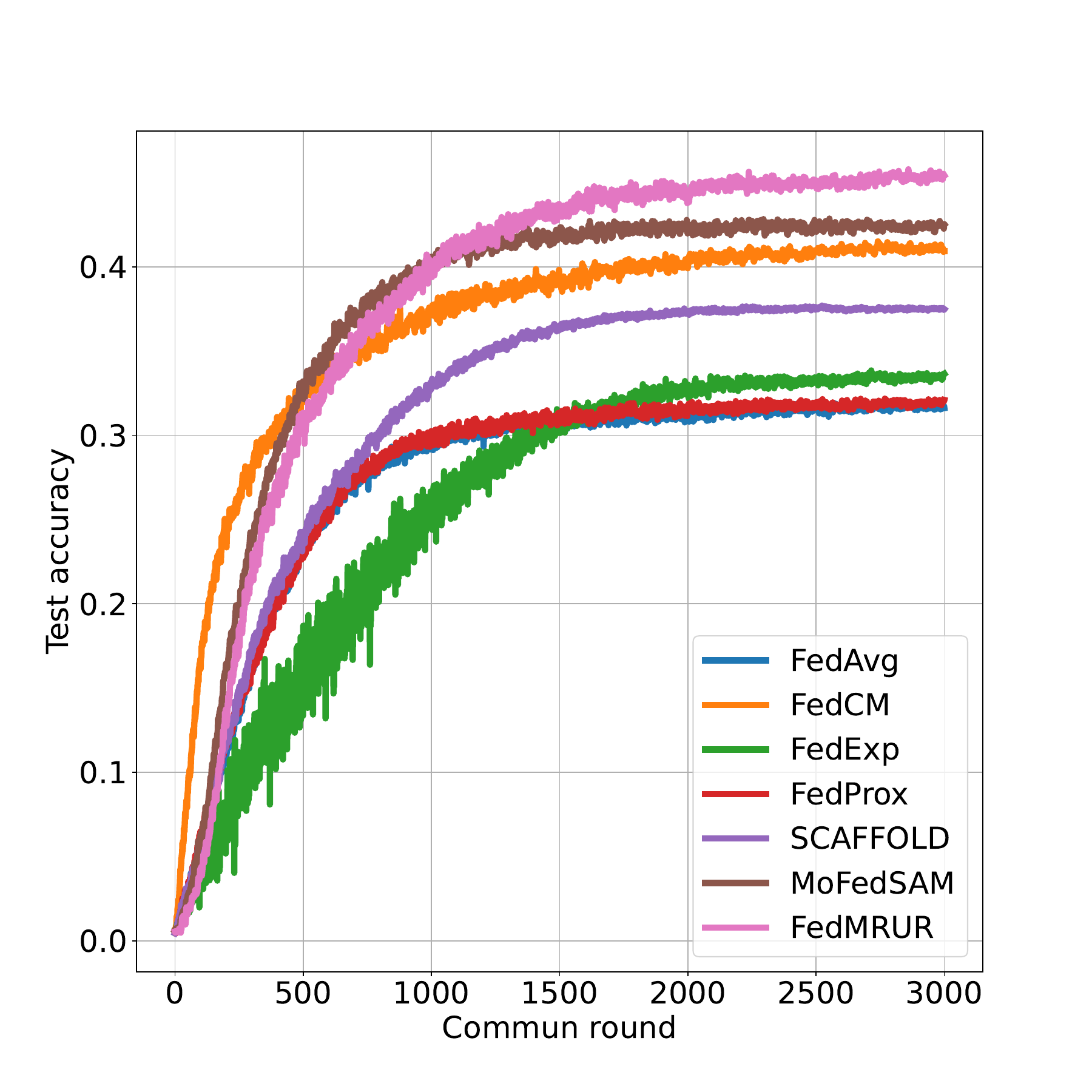}
        \label{tiny-acc}
      \end{minipage}
    }
    \caption{\small Test accuracy w.r.t. communication rounds of our proposed method and other approaches. Each method performs in $1600$ communication rounds. To compare them fairly, the basic optimizers are trained with the same hyperparameters.}
    \label{test-acc}
    \vspace{-0.3cm}
\end{figure}

\section{Experiments}

In this section, we validate the effectiveness of the proposed {FedMRUR} algorithm using the experimental results on CIFAR-10/100 \cite{krizhevsky2009learning} and TinyImageNet \cite{krizhevsky2009learning}. We demonstrate that {FedMRUR} outperforms the vanilla FL baselines under heterogeneous settings. We also present that both manifold regularization and the proposed normalized update aggregation can improve the performance of SGD in FL. The experiments of CIFAR-10 are placed in the \textbf{Appendix.}  

\subsection{Experimental Setup}

\textbf{Datasets.} We compare the performance of FL algorithms on CIFAR-10/100 and TinyImageNet datasets with $100$ clients. The CIFAR-10 dataset consists of $50K$ training images and $10K$ testing images. All the images are with $32 \times 32$ resolution belonging to $10$ categories. In the CIFAR-100 dataset, there are 100 categories of images with the same format as CIFAR-10. TinyImageNet includes $200$ categories of $100K$ training images and $10K$ testing images, whose resolutions are $64 \times 64$. For non-iid dataset partitioning over clients, we use Pathological-$n$ (abbreviated  as Path($n$)) and Dirichlet-$\mu$ (abbreviated  as Dir($\mu$)) sampling as \cite{hsu2019measuring}, where the coefficient $n$ is the number of data categories on each client and $\mu$ measures the heterogeneity. In the experiments, we select the Dirichlet coefficient $\mu$ from ${\{0.3, 0.6\}}$ for all datasets and set the number of categories coefficient $n$ from $\{3, 6\}$ on CIFAR-10, $\{10, 20\}$ on CIFAR-100 and $\{40, {80}\}$ on TinyImageNet.

\textbf{Implementation Details.} For all algorithms on all datasets, following \cite{conf/iclr/AcarZNMWS21,journals/corr/abs-2106-10874},
the local and global learning rates are set as $0.1$ and $1.0$,
the learning rate decay is set as $0.998$ per communication round and the weight decay is set as $5 \times 10^{-4}$. 
ResNet-18 together with group normalization is adopted as the backbone to train the model. 
{The clients' settings for different tasks are summarized in Table \ref{T0}.} 
Other optimizer hyperparameters are as follow: $\rho=0.5$ for SAM, $\alpha=0.1$ for client momentum, $\gamma=0.005$, $\sigma=10000.0$ and $\beta=1$ for manifold regularization.

\begin{table}[htbp]
    \centering
    \vspace{-0.2cm}
    \caption{\small The experiments settings for different tasks.}
    \label{T0}
    \small
    \scalebox{1.0}{
        \begin{tabular}{c|c|c|c|c}
            \hline
            Task  & num of clients & participated ratio & batch size & local epoch \\ \hline
            CIFAR & 200            & 0.05               & 50         & 3           \\ \hline
            Tiny  & 500            & 0.02               & 20         & 2           \\ \hline
        \end{tabular}
    }
\end{table}

\textbf{Baselines.} To compare the performances fairly, the random seeds are fixed. We compare the proposed {FedMRUR} with several competitive benchmarks: 
FedAvg \cite{conf/iclr/YangFL21}, the most widely used baseline, firstly applies local multiple training and partial participation for FL framework; SCAFFOLD \cite{conf/icml/KarimireddyKMRS20} utilizes the SVRG method to mitigate the client drift issue; FedProx \cite{conf/mlsys/LiSZSTS20} uses a proximal operator to tackle data heterogeneity; FedCM \cite{journals/corr/abs-2106-10874} incorporates the client-momentum term in local training to maintain the model consistency among clients; Based on FedCM, MoFedSAM \cite{conf/icml/QuLDLTL22} improves the generalization performance with local SAM \cite{conf/iclr/ForetKMN21} optimizer; FedExp \cite{jhunjhunwalafedexp} determines the server step size adaptively based on the local updates to achieve faster convergence.   
  
\subsection{Evaluation Results}

\begin{table}[t]
    \centering
    \caption{ \small  Test accuracy (\%) on CIFAR-100\& TinyImagenet datasets in both Dir($\mu$) and Path($n$)) distributions.}
    \label{ta:all_baselines}
    \vspace{-0.2cm}
    \small
    \scalebox{0.85}{\begin{tabular}{lcccc|cccc} 
    \toprule
    \multirow{2}{*}{Algorithm} & \multicolumn{4}{c|}{CIFAR-100}                                                                                          & \multicolumn{4}{c}{TinyImagenet}                                                                                            \\ 
    \cmidrule{2-9}
                               & \multicolumn{2}{c}{Dir($\mu$))}                            & \multicolumn{2}{c|}{Path($n$)}                           & \multicolumn{2}{c}{Dir($\mu$))}       & \multicolumn{2}{c}{Path($n$)}                                                   \\ 
    \cmidrule{2-9}
                               & $\mu$ = 0.6               & $\mu$ = 0.3            & n = 20                        & n = 10                        & $\mu$ = 0.6   & $\mu$ = 0.3   & n = 80                                                  & n = 40                    \\ 
    \midrule
    FedAvg                      & $39.87$               & $39.50$            & $38.47$              & $36.67$               & $30.78$   & $30.64$   &$31.62$                                       & $31.18 $           \\
    FedExp                     & $44.51$               & $44.26$            & $43.58$               & $41.00$               & $33.49$   & $32.68$  & $ 33.65$                                       & $33.39$           \\
    FedProx                     & $39.89$               & $39.86$            & $38.82$               & $37.15$               & $30.93$   & $31.05$   & $ 32.09$                                       & $31.77$          \\
    SCAFFOLD                    & $47.51$               & $46.47$            & $46.23$              & $ 42.45 $             & $37.14$  & $36.22$   & $37.48 $                                       & $35.32$           \\
    FedCM                    & $51.01$               & $50.93$            & $50.58$              & $50.03$                        & $41.37$ & $40.21$  & $40.93$                                       & $40.46$          \\
    MoFedSAM                      & $52.96$               & $52.81$            & $52.32$              & $ 51.87 $             & $42.36$  & $42.29$  & $42.52$                                        & $41.58$          \\ 
    \midrule
    FedMRUR                    & $55.81$               & $55.49$           & $55.21$              & $53.69$              &$45.54$                  & $45.41$                 & $45.42$                           & $45.71$           \\
    \bottomrule
    \end{tabular}}
    \vspace{-0.2cm}
\end{table}

Figure \ref{test-acc} and Table \ref{ta:all_baselines} demonstrate the performance of ResNet-18 trained using multiple algorithms on CIFAR-100 and TinyImageNet datasets under four heterogeneous settings. We plot the test accuracy of the algorithms for a simple image classification task in the figure. We can observe that: our proposed FedMRUR performs well with good stability and effectively alleviates the negative impact of the model inconsistency. Specifically, on the CIFAR100 dataset, FedMRUR achieves 55.49\% on the Dirichlet-0.3 setups, which is 5.07\% higher than the second-best test performance. FedMRUR effectively reduces the model inconsistency and the enlarged global update improves the speed of convergence. 

\begin{table}
    \centering
    \vspace{-0.2cm}
    \caption{\small Convergence speed comparison on CIFAR100\& TinyImageNet datasets. "Acc." represents the target test accuracy on the dataset. 
    "$\infty$" means that the algorithm is unable to achieve the target accuracy on the dataset.}
    \label{T1}
    \renewcommand{\arraystretch}{1.2}
    \small
    \scalebox{0.85}{\begin{tabular}{c|ccccc|lcccc}
    \hline
    Datasets                       & \multicolumn{5}{c|}{CIFAR-100}                                                                                                         & \multicolumn{5}{c}{TinyImageNet}                                                                                                       \\ \hline
    \multirow{2}{*}{Algorithms}    & \multicolumn{1}{c|}{\multirow{2}{*}{Acc.}} & \multicolumn{2}{c|}{Dir($\mu$)}                         & \multicolumn{2}{c|}{Path($n$)} & \multicolumn{1}{c|}{\multirow{2}{*}{Acc.}} & \multicolumn{2}{c|}{Dir($\mu$)}                         & \multicolumn{2}{c}{Path($n$)} \\ \cline{3-6} \cline{8-11} 
                                   & \multicolumn{1}{c|}{}                      & \multicolumn{1}{c|}{$0.6$} & \multicolumn{1}{c|}{$0.3$} & \multicolumn{1}{c|}{$20$} & $10$ & \multicolumn{1}{c|}{}                      & \multicolumn{1}{c|}{$0.6$} & \multicolumn{1}{c|}{$0.3$} & \multicolumn{1}{c|}{$80$} & $40$ \\ \hline
    FedAvg                         & \multicolumn{1}{c|}{\multirow{7}{*}{38\%}}     & \multicolumn{1}{c|}{513}      & \multicolumn{1}{c|}{494}      & \multicolumn{1}{c|}{655}   &  {$\infty$}  & \multicolumn{1}{c|}{\multirow{7}{*}{30\%}}     & \multicolumn{1}{c|}{972}      & \multicolumn{1}{c|}{1078}      & \multicolumn{1}{c|}{1002}    &  {1176}   \\ 
    FedExp                         & \multicolumn{1}{c|}{}                      & \multicolumn{1}{c|}{715}      & \multicolumn{1}{c|}{782}      & \multicolumn{1}{c|}{795}    &  {1076}   & \multicolumn{1}{c|}{}                      & \multicolumn{1}{c|}{1255}      & \multicolumn{1}{c|}{1362}     & \multicolumn{1}{c|}{1327}    &   {1439}  \\ 
    FedProx                        & \multicolumn{1}{c|}{}                      & \multicolumn{1}{c|}{480}      & \multicolumn{1}{c|}{488}      & \multicolumn{1}{c|}{638}    &  {$\infty$}   & \multicolumn{1}{c|}{}                      & \multicolumn{1}{c|}{1043}      & \multicolumn{1}{c|}{1030}      & \multicolumn{1}{c|}{1163}    &  {1615}   \\ 
    SCAFFOLD                       & \multicolumn{1}{c|}{}                      & \multicolumn{1}{c|}{301}      & \multicolumn{1}{c|}{322}      & \multicolumn{1}{c|}{389}    &  {585}   & \multicolumn{1}{c|}{}                      & \multicolumn{1}{c|}{785}      & \multicolumn{1}{c|}{850}      & \multicolumn{1}{c|}{766}    &  {967}  \\ 
    FedCM                          & \multicolumn{1}{c|}{}                      & \multicolumn{1}{c|}{120}      & \multicolumn{1}{c|}{126}      & \multicolumn{1}{c|}{157}    &  {255}   & \multicolumn{1}{c|}{}                      & \multicolumn{1}{c|}{342}      & \multicolumn{1}{c|}{401}      & \multicolumn{1}{c|}{366}    &  {474}  \\ 
    \multicolumn{1}{l|}{MoFedSAM} & \multicolumn{1}{c|}{}                      & \multicolumn{1}{c|}{154}      & \multicolumn{1}{c|}{146}      & \multicolumn{1}{c|}{211}    &  {300}   & \multicolumn{1}{c|}{}                      & \multicolumn{1}{c|}{436}      & \multicolumn{1}{c|}{447}      & \multicolumn{1}{c|}{415}    &  {460}  \\ 
    Our                            & \multicolumn{1}{c|}{}                      & \multicolumn{1}{c|}{157}      & \multicolumn{1}{c|}{179}      & \multicolumn{1}{c|}{223}    &   {341}  & \multicolumn{1}{c|}{}                      & \multicolumn{1}{c|}{473}      & \multicolumn{1}{c|}{517}      & \multicolumn{1}{c|}{470}    &   {570}  \\ \hline
    FedAvg                         & \multicolumn{1}{c|}{\multirow{7}{*}{42\%}}     & \multicolumn{1}{c|}{$\infty$}      & \multicolumn{1}{c|}{$\infty$}      & \multicolumn{1}{c|}{$\infty$}    &  {$\infty$}   & \multicolumn{1}{l|}{\multirow{7}{*}{35\%}}     & \multicolumn{1}{c|}{$\infty$}      & \multicolumn{1}{c|}{$\infty$}      & \multicolumn{1}{c|}{$\infty$}    &  {$\infty$}   \\ 
    FedExp                         & \multicolumn{1}{c|}{}                      & \multicolumn{1}{c|}{985}      & \multicolumn{1}{c|}{1144}      & \multicolumn{1}{c|}{1132}    &  {1382}   & \multicolumn{1}{l|}{}                      & \multicolumn{1}{c|}{$\infty$}      & \multicolumn{1}{c|}{$\infty$}      & \multicolumn{1}{c|}{$\infty$}    &   {$\infty$}  \\ 
    FedProx                        & \multicolumn{1}{c|}{}                      & \multicolumn{1}{c|}{$\infty$}      & \multicolumn{1}{c|}{$\infty$}      & \multicolumn{1}{c|}{$\infty$}    &   {$\infty$}  & \multicolumn{1}{l|}{}                      & \multicolumn{1}{c|}{$\infty$}      & \multicolumn{1}{c|}{$\infty$}      & \multicolumn{1}{c|}{$\infty$}    &   {$\infty$}  \\ 
    SCAFFOLD                       & \multicolumn{1}{c|}{}                      & \multicolumn{1}{c|}{406}      & \multicolumn{1}{c|}{449}      & \multicolumn{1}{c|}{558}    &   {998}  & \multicolumn{1}{l|}{}                      & \multicolumn{1}{c|}{1289}      & \multicolumn{1}{c|}{1444}      & \multicolumn{1}{c|}{1206}    &  {2064}   \\ 
    FedCM                          & \multicolumn{1}{c|}{}                      & \multicolumn{1}{c|}{173}      & \multicolumn{1}{c|}{193}      & \multicolumn{1}{c|}{260}    &  {527}  & \multicolumn{1}{l|}{}                      & \multicolumn{1}{c|}{599}      & \multicolumn{1}{c|}{735}      & \multicolumn{1}{c|}{674}    &  {879}  \\ 
    \multicolumn{1}{l|}{MoFedSAM} & \multicolumn{1}{c|}{}                      & \multicolumn{1}{c|}{197}      & \multicolumn{1}{c|}{192}      & \multicolumn{1}{c|}{260}    &  {392}   & \multicolumn{1}{l|}{}                      & \multicolumn{1}{c|}{598}      & \multicolumn{1}{c|}{624}      & \multicolumn{1}{c|}{583}    &  {685}  \\ 
    Our                            & \multicolumn{1}{c|}{}                      & \multicolumn{1}{c|}{192}      & \multicolumn{1}{c|}{230}      & \multicolumn{1}{c|}{266}    &   {424}  & \multicolumn{1}{l|}{}                      & \multicolumn{1}{c|}{671}      & \multicolumn{1}{c|}{707}      & \multicolumn{1}{c|}{653}    &   {788}  \\ \hline
    FedAvg                         & \multicolumn{1}{l|}{\multirow{7}{*}{45\%}}     & \multicolumn{1}{c|}{$\infty$}      & \multicolumn{1}{c|}{$\infty$}  & \multicolumn{1}{c|}{$\infty$}    &  {$\infty$}   & \multicolumn{1}{l|}{\multirow{7}{*}{40\%}}     & \multicolumn{1}{c|}{$\infty$}      & \multicolumn{1}{c|}{$\infty$}      & \multicolumn{1}{c|}{$\infty$}    &  {$\infty$}   \\ 
    FedExp                         & \multicolumn{1}{l|}{}                      & \multicolumn{1}{c|}{$\infty$}      & \multicolumn{1}{c|}{$\infty$}      & \multicolumn{1}{c|}{$\infty$}    &  {$\infty$}   & \multicolumn{1}{l|}{}                      & \multicolumn{1}{c|}{$\infty$}      & \multicolumn{1}{c|}{$\infty$}      & \multicolumn{1}{c|}{$\infty$}    &   {$\infty$}  \\ 
    FedProx                        & \multicolumn{1}{l|}{}                      & \multicolumn{1}{c|}{$\infty$}      & \multicolumn{1}{c|}{$\infty$}      & \multicolumn{1}{c|}{$\infty$}    &  {$\infty$}   & \multicolumn{1}{l|}{}                      & \multicolumn{1}{c|}{$\infty$}      & \multicolumn{1}{c|}{$\infty$}      & \multicolumn{1}{c|}{$\infty$}    &   {$\infty$}  \\ 
    SCAFFOLD                       & \multicolumn{1}{l|}{}                      & \multicolumn{1}{c|}{521}      & \multicolumn{1}{c|}{616}      & \multicolumn{1}{c|}{784}    &  {$\infty$}  & \multicolumn{1}{l|}{}                      & \multicolumn{1}{c|}{$\infty$}      & \multicolumn{1}{c|}{$\infty$}      & \multicolumn{1}{c|}{$\infty$}    &   {$\infty$}  \\ 
    FedCM                          & \multicolumn{1}{l|}{}                      & \multicolumn{1}{c|}{276}      & \multicolumn{1}{c|}{353}      & \multicolumn{1}{c|}{470}    &  {842}   & \multicolumn{1}{l|}{}                      & \multicolumn{1}{c|}{1451}      & \multicolumn{1}{c|}{2173}      & \multicolumn{1}{c|}{1587}    &  {2186}   \\ 
    \multicolumn{1}{l|}{MoFedSAM} & \multicolumn{1}{l|}{}                      & \multicolumn{1}{c|}{243}      & \multicolumn{1}{c|}{278}      & \multicolumn{1}{c|}{400}    &  {575}   & \multicolumn{1}{l|}{}                      & \multicolumn{1}{c|}{950}      & \multicolumn{1}{c|}{1106}      & \multicolumn{1}{c|}{959}    &   {1162}  \\ 
    Our                            & \multicolumn{1}{l|}{}                      & \multicolumn{1}{c|}{241}      & \multicolumn{1}{c|}{263}      & \multicolumn{1}{c|}{338}    &  {484}  & \multicolumn{1}{l|}{}                      & \multicolumn{1}{c|}{948}      & \multicolumn{1}{c|}{1050}      & \multicolumn{1}{c|}{953}    &  {1069}  \\ \hline
    \end{tabular}}
\end{table}

Table \ref{T1} depicts the convergence speed of multiple algorithms. From \cite{hsu2019measuring}, a larger $\mu$ indicates less data heterogeneity across clients. We can observe that: 1) our proposed FedMRUR achieves the fastest convergence speed at most of the time, especially when the data heterogeneity is large. This validates that FedMRUR can speed up iteration; 2) when the statistical heterogeneity is large, the proposed FedMRUR accelerates the convergence more effectively.

\subsection{Ablation Study}

\textbf{Impact of partial participation.}
Figures \ref{PR_loss} and \ref{PR_acc} depict the optimization performance of the proposed FedMRUR with different client participation rates on CIFAR-100, where the dataset splitting method is Dirichlet sampling with coefficient $\mu=0.3$ and the client participation ratios are chosen from $0.02$ to $0.2$. From this figure, we can observe that the client participation rate (PR) has a positive impact on the convergence speed, but the impact on test accuracy is little. Therefore, our method can work well under low PR settings especially when the communication resource is limited.

\begin{figure}[htbp]
    \centering
    \subfigure[]{
        \begin{minipage}{0.24\linewidth}
            \centering
            \includegraphics[width=0.99\linewidth]{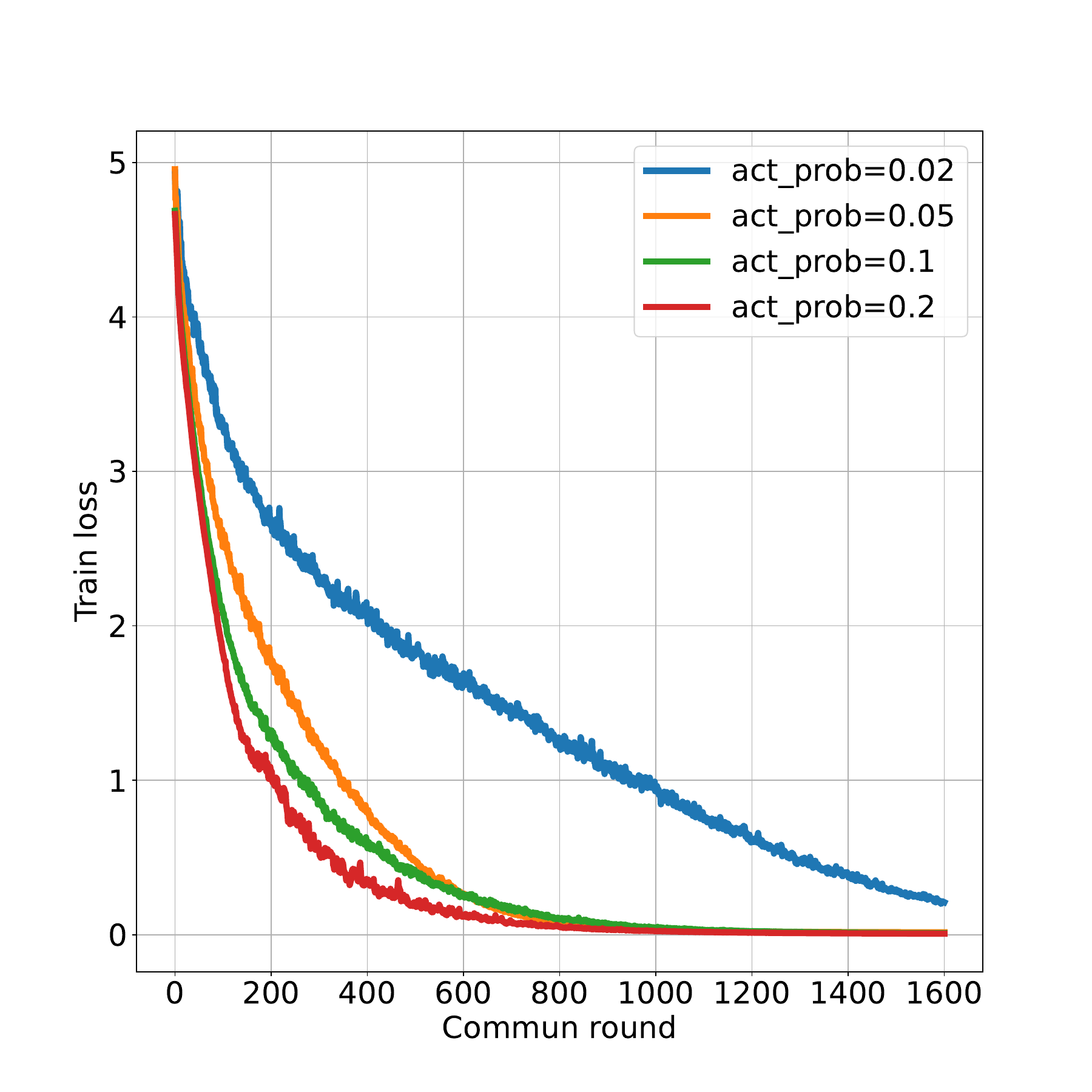}
            \label{PR_loss}
        \end{minipage}
    }\!\!\!\!
    \subfigure[]{
        \begin{minipage}{0.24\linewidth}
            \centering
            \includegraphics[width=0.99\linewidth]{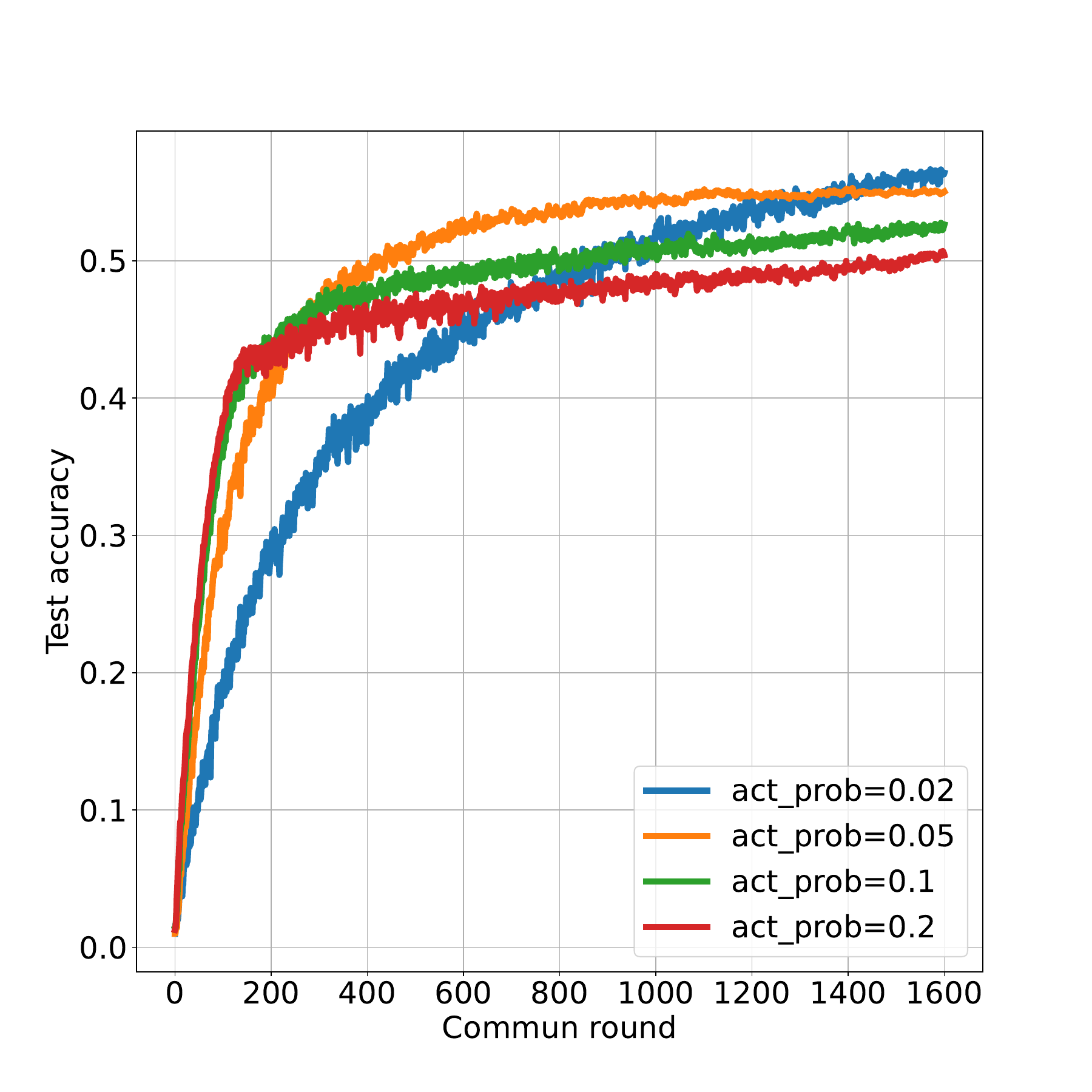}
            \label{PR_acc}
        \end{minipage}
    }\!\!\!\!
    \subfigure[]{
        \begin{minipage}{0.24\linewidth}
            \centering
            \includegraphics[width=0.99\linewidth]{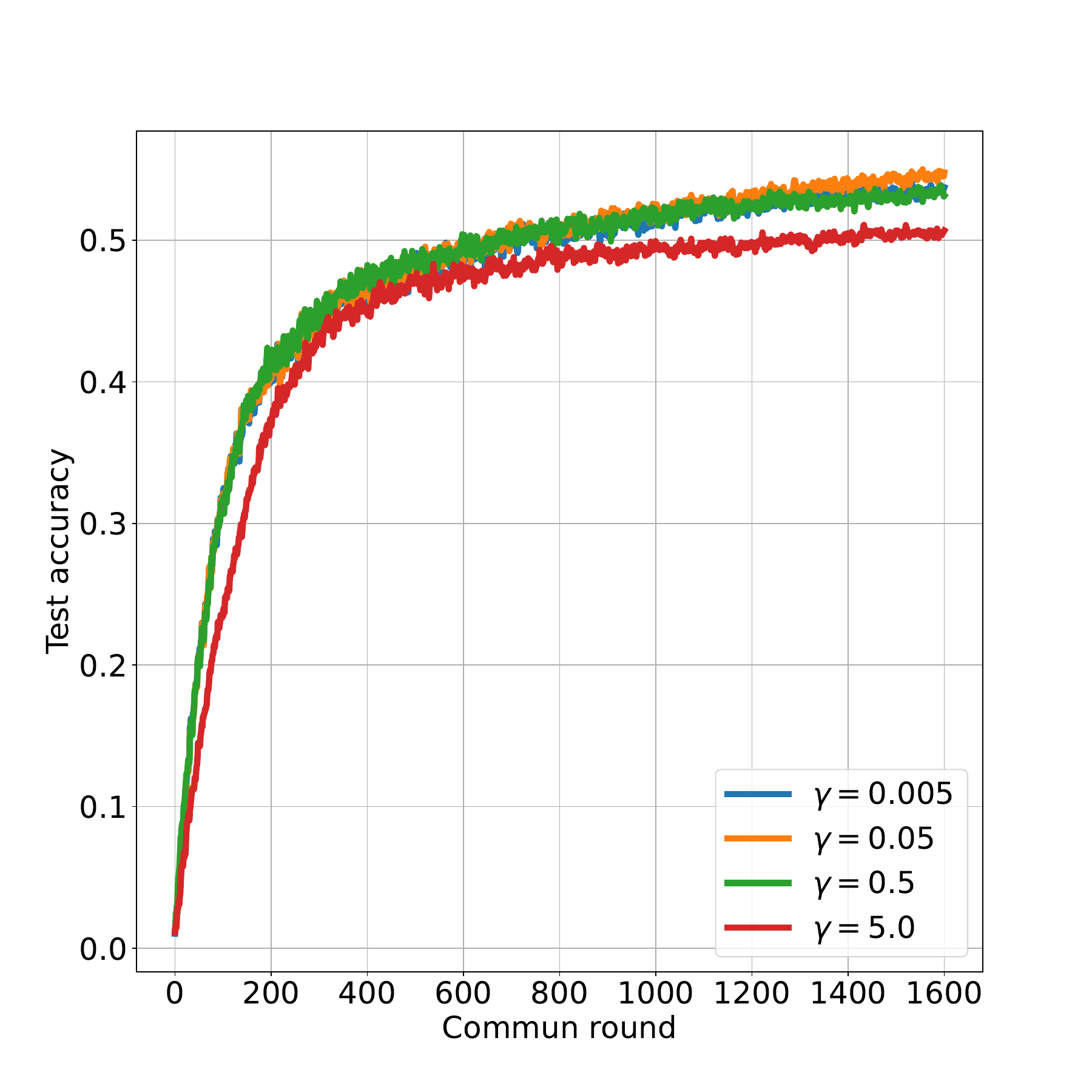}
            \label{multi-gamma-acc}
        \end{minipage}
    }\!\!\!\!
    \subfigure[]{
        \begin{minipage}{0.24\linewidth}
            \centering
            \includegraphics[width=0.99\linewidth]{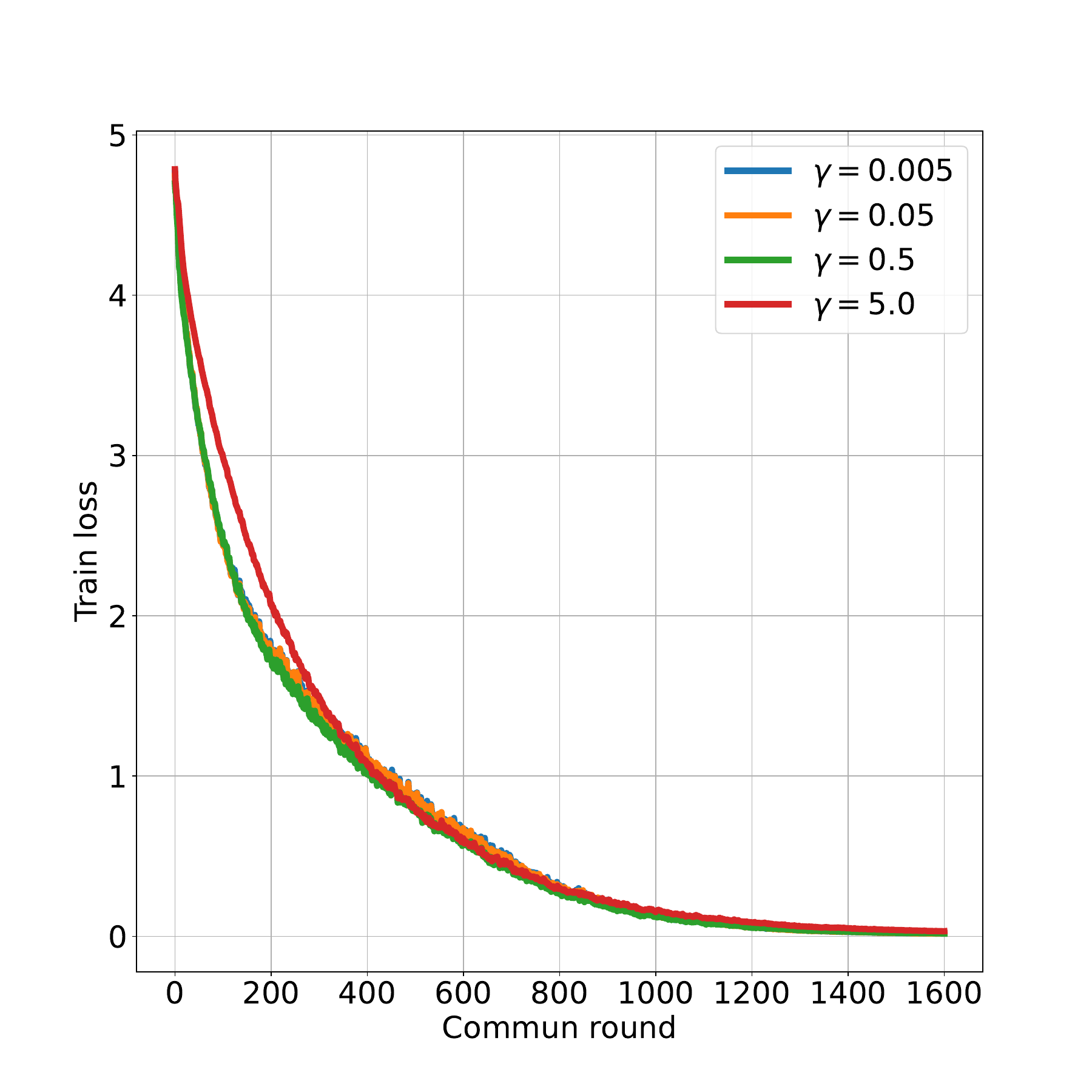}
            \label{multi-sigma-loss}
        \end{minipage}
    }
    \vspace{-0.2cm}
    \caption{(a). Training loss w.r.t different client participation rations;  (b). Test accuracy w.r.t different client participation ratios. 
    (c). Test accuracy with different $\gamma$. (d). Train loss with different $\gamma$.
    The performance of FedMRUR with different parameters on the CIFAR-100 dataset.}
    \label{Para}
    \vspace{-0.4cm}
\end{figure}

\textbf{Hyperparameters Sensitivity.} 
In Figures \ref{multi-gamma-acc} and \ref{multi-sigma-loss}, we compare the performance of the proposed FedMRUR with different hyper-parameters on the CIFAR-100 dataset. From the results, we can see that our algorithm achieves similar test accuracy and training loss under different $\gamma$ within a certain range ($\gamma \in [0.005, 0.5]$) and this indicates the proposed FedMRUR is insensitive to the hyperparameter $\gamma$. The hyperparameter $\gamma$ represents the method the punishment on the model bias.

\begin{wraptable}{r}{0.5\linewidth}
\centering
\vspace{-0.4cm}
\scriptsize
\caption{ \small  Test accuracy \% on CIFAR-100 datasets about without each ingredients of FedMRUR.}
\label{T-EC}
\begin{tabular}{l|cc|c} 
\toprule
Algorithm  & normalized(N) & hyperbolic(H)  &     Acc.               \\ 
\midrule
MoFedSAM    &  --    & -- & 52.81            \\
FedMRUR-N  & \checkmark     &  --    &  54.27 \\
FedMRUR-H  &   --   & \checkmark      & 53.57             \\
FedMRUR & \checkmark      &\checkmark     &   55.49  \\
\bottomrule
\end{tabular}
\vspace{-0.2cm}
\end{wraptable}
\textbf{Impact of Each Component.}  
Table \ref{T-EC} demonstrates the impact of each component of FedMRUR on the test accuracy for CIFAR-100 dataset on the Dirichlet-0.3 setups.  
For convenience, we abbreviate normalized aggregation as ``normalized(N)'' and  hyperbolic graph fusion as ``hyperbolic(H)'', respectively.  From the results, we can find that both the normalized local update aggregation scheme and the hyperbolic graph fusion can improve performance. This table validates that our algorithm design and theoretical analysis are correct and effective.

    \section{Conclusion}

In this work, we propose a novel and practical federated method, dubbed FedMRUR which applies the hyperbolic graph fusion technique to alleviate the model inconsistency in the local training stage and utilizes normalized updates aggregation scheme to compensate for the global norm reduction due to the \textit{near-orthogonality} of the local updates. We provide the theoretical analysis to guarantee its convergence and prove that FedMRUR achieves a linear-speedup property of $O (\frac{1}{\sqrt{SKT}})$. We also conduct extensive experiments to validate the significant improvement and efficiency of our proposed FedMRUR, which is consistent with the properties of our analysis. This work inspires the FL framework design to focus on exploiting the manifold structure of the learning models.

\textbf{Limitations\&Broader Impacts.} Our work focuses on the theory of federated optimization and proposes a novel FL algorithm. During the local training, the representations of the global model must be stored locally, which may bring extra pressure on the client.
This will help us in inspiration for new algorithms. Since FL has wide applications in machine learning, Internet of Things, and UAV networks, our work may be useful in these areas. 

\textbf{Acknowledgements.} 
This work is supported by National Key Research and Development Program of China under SQ2021YFC3300128, and National Natural Science Foundation of China under Grant 61971457. Thanks for the support from CENI-HEFEI and Laboratory for Future Networks in University of Science and Technology of China.

    {
    \small
    \bibliographystyle{plain}
    \bibliography{mybib}

\begin{thebibliography}{10}

\bibitem{conf/iclr/AcarZNMWS21}
Durmus Alp~Emre Acar, Yue Zhao, Ramon~Matas Navarro, Matthew Mattina, Paul~N.
  Whatmough, and Venkatesh Saligrama.
\newblock Federated learning based on dynamic regularization.
\newblock In {\em 9th International Conference on Learning Representations,
  {ICLR} 2021, Virtual Event, Austria, May 3-7, 2021}. OpenReview.net, 2021.

\bibitem{bonawitz2019towards}
Keith Bonawitz, Hubert Eichner, Wolfgang Grieskamp, Dzmitry Huba, Alex
  Ingerman, Vladimir Ivanov, Chloe Kiddon, Jakub Kone{\v{c}}n{\`y}, Stefano
  Mazzocchi, Brendan McMahan, et~al.
\newblock Towards federated learning at scale: System design.
\newblock {\em Proceedings of machine learning and systems}, 1:374--388, 2019.

\bibitem{conf/nips/CharlesGHSS21}
Zachary Charles, Zachary Garrett, Zhouyuan Huo, Sergei Shmulyian, and Virginia
  Smith.
\newblock On large-cohort training for federated learning.
\newblock In {\em 34th Advances in Neural Information Processing Systems,
  {NeurIPS} 2021, virtual, December 6-14, 2021}, pages 20461--20475, 2021.

\bibitem{conf/aistats/charles2021}
Zachary Charles and Jakub Kone{\v{c}}n{\`y}.
\newblock Convergence and accuracy trade-offs in federated learning and
  meta-learning.
\newblock In {\em The 24th International Conference on Artificial Intelligence
  and Statistics, {AISTATS} 2021, April 13-15, 2021, Virtual Event}, pages
  2575--2583. {PMLR}, 2021.

\bibitem{cutkosky2018distributed}
Ashok Cutkosky and R{\'o}bert Busa-Fekete.
\newblock Distributed stochastic optimization via adaptive sgd.
\newblock {\em Advances in Neural Information Processing Systems}, 31, 2018.

\bibitem{feng2022role}
Shanshan Feng, Lisi Chen, Kaiqi Zhao, Wei Wei, Xuemeng Song, Shuo Shang, Panos
  Kalnis, and Ling Shao.
\newblock Role: Rotated lorentzian graph embedding model for asymmetric
  proximity.
\newblock {\em IEEE Transactions on Knowledge and Data Engineering}, 2022.

\bibitem{conf/iclr/ForetKMN21}
Pierre Foret, Ariel Kleiner, Hossein Mobahi, and Behnam Neyshabur.
\newblock Sharpness-aware minimization for efficiently improving
  generalization.
\newblock In {\em 9th International Conference on Learning Representations,
  {ICLR} 2021, Virtual Event, Austria, May 3-7, 2021}, 2021.

\bibitem{gromov1987hyperbolic}
Mikhael Gromov.
\newblock Hyperbolic groups.
\newblock In {\em Essays in group theory}, pages 75--263. Springer, 1987.

\bibitem{conf/aistats/HaddadpourKMM21}
Farzin Haddadpour, Mohammad~Mahdi Kamani, Aryan Mokhtari, and Mehrdad Mahdavi.
\newblock Federated learning with compression: Unified analysis and sharp
  guarantees.
\newblock In {\em The 24th International Conference on Artificial Intelligence
  and Statistics, {AISTATS} 2021, April 13-15, 2021, Virtual Event},
  Proceedings of Machine Learning Research, pages 2350--2358. {PMLR}, 2021.

\bibitem{hamilton2017representation}
William~L Hamilton, Rex Ying, and Jure Leskovec.
\newblock Representation learning on graphs: Methods and applications.
\newblock {\em arXiv preprint arXiv:1709.05584}, 2017.

\bibitem{journals/corr/abs-2002-05516}
Filip Hanzely and Peter Richt{\'{a}}rik.
\newblock Federated learning of a mixture of global and local models.
\newblock {\em arXiv preprint arXiv:2002.05516}, 2020.

\bibitem{conf/iclr/HeinbaughLS23}
Clare~Elizabeth Heinbaugh, Emilio Luz{-}Ricca, and Huajie Shao.
\newblock Data-free one-shot federated learning under very high statistical
  heterogeneity.
\newblock In {\em The Eleventh International Conference on Learning
  Representations, {ICLR} 2023, Kigali, Rwanda, May 1-5, 2023}. OpenReview.net,
  2023.

\bibitem{hsu2019measuring}
Tzu-Ming~Harry Hsu, Hang Qi, and Matthew Brown.
\newblock Measuring the effects of non-identical data distribution for
  federated visual classification.
\newblock {\em arXiv preprint arXiv:1909.06335}, 2019.

\bibitem{jhunjhunwalafedexp}
Divyansh Jhunjhunwala, Shiqiang Wang, and Gauri Joshi.
\newblock Fedexp: Speeding up federated averaging via extrapolation.
\newblock In {\em 11-th International Conference on Learning Representations,
  {ICLR} 2023, Kigali, Rwanda, May 1-5, 2023}, 2023.

\bibitem{journals/ftml/KairouzMABBBBCC21}
Peter Kairouz, H~Brendan McMahan, Brendan Avent, Aur{\'e}lien Bellet, Mehdi
  Bennis, Arjun~Nitin Bhagoji, Kallista Bonawitz, Zachary Charles, Graham
  Cormode, Rachel Cummings, et~al.
\newblock Advances and open problems in federated learning.
\newblock {\em Foundations and Trends{\textregistered} in Machine Learning},
  14(1--2):1--210, 2021.

\bibitem{conf/icml/KarimireddyKMRS20}
Sai~Praneeth Karimireddy, Satyen Kale, Mehryar Mohri, Sashank~J. Reddi,
  Sebastian~U. Stich, and Ananda~Theertha Suresh.
\newblock {SCAFFOLD:} stochastic controlled averaging for federated learning.
\newblock In {\em Proc. 37th Int. Conf. Mach. Learn.}, pages 5132--5143, 2020.

\bibitem{conf/nips/KhanduriSYHLRV21}
Prashant Khanduri, Pranay Sharma, Haibo Yang, Mingyi Hong, Jia Liu, Ketan
  Rajawat, and Pramod~K. Varshney.
\newblock {STEM:} {A} stochastic two-sided momentum algorithm achieving
  near-optimal sample and communication complexities for federated learning.
\newblock In {\em Advances in Neural Information Processing Systems 34: Annual
  Conference on Neural Information Processing Systems 2021, NeurIPS 2021,
  December 6-14, 2021, virtual}, pages 6050--6061, 2021.

\bibitem{krizhevsky2009learning}
Alex Krizhevsky, Geoffrey Hinton, et~al.
\newblock Learning multiple layers of features from tiny images.
\newblock {\em Technical report}, 2009.

\bibitem{conf/icml/LawLSZ19}
Marc~Teva Law, Renjie Liao, Jake Snell, and Richard~S. Zemel.
\newblock Lorentzian distance learning for hyperbolic representations.
\newblock In {\em Proceedings of the 36th International Conference on Machine
  Learning, {ICML} 2019, 9-15 June 2019, Long Beach, California, {USA}},
  Proceedings of Machine Learning Research, pages 3672--3681, 2019.

\bibitem{conf/mlsys/LiSZSTS20}
Tian Li, Anit~Kumar Sahu, Manzil Zaheer, Maziar Sanjabi, Ameet Talwalkar, and
  Virginia Smith.
\newblock Federated optimization in heterogeneous networks.
\newblock In {\em Proc. Mach. Learn. Sys.}, 2020.

\bibitem{journals/corr/abs-2309-15698}
Weishi Li, Yong Peng, Miao Zhang, Liang Ding, Han Hu, and Li~Shen.
\newblock Deep model fusion: {A} survey.
\newblock 2023.

\bibitem{conf/cvpr/LiXSLLSZ22}
Xin{-}Chun Li, Yi{-}Chu Xu, Shaoming Song, Bingshuai Li, Yinchuan Li, Yunfeng
  Shao, and De{-}Chuan Zhan.
\newblock Federated learning with position-aware neurons.
\newblock In {\em {IEEE/CVF} Conference on Computer Vision and Pattern
  Recognition, {CVPR} 2022, New Orleans, LA, USA, June 18-24, 2022}, pages
  10072--10081. {IEEE}, 2022.

\bibitem{lim2020federated}
Wei Yang~Bryan Lim, Nguyen~Cong Luong, Dinh~Thai Hoang, Yutao Jiao, Ying-Chang
  Liang, Qiang Yang, Dusit Niyato, and Chunyan Miao.
\newblock Federated learning in mobile edge networks: A comprehensive survey.
\newblock {\em IEEE Communications Surveys \& Tutorials}, 22(3):2031--2063,
  2020.

\bibitem{conf/icml/LiuLWXSY22}
Chang Liu, Chenfei Lou, Runzhong Wang, Alan~Yuhan Xi, Li~Shen, and Junchi Yan.
\newblock Deep neural network fusion via graph matching with applications to
  model ensemble and federated learning.
\newblock In {\em International Conference on Machine Learning, {ICML} 2022,
  17-23 July 2022, Baltimore, Maryland, {USA}}, pages 13857--13869. {PMLR},
  2022.

\bibitem{liu2017survey}
Weibo Liu, Zidong Wang, Xiaohui Liu, Nianyin Zeng, Yurong Liu, and Fuad~E
  Alsaadi.
\newblock A survey of deep neural network architectures and their applications.
\newblock {\em Neurocomputing}, 234:11--26, 2017.

\bibitem{journals/corr/abs-2302-05726}
Yixing Liu, Yan Sun, Zhengtao Ding, Li~Shen, Bo~Liu, and Dacheng Tao.
\newblock Enhance local consistency in federated learning: {A} multi-step
  inertial momentum approach.
\newblock 2023.

\bibitem{conf/aistats/McMahanMRHA17}
Brendan McMahan, Eider Moore, Daniel Ramage, Seth Hampson, and
  Blaise~Ag{\"{u}}era y~Arcas.
\newblock Communication-efficient learning of deep networks from decentralized
  data.
\newblock In {\em Proceedings of the 20th International Conference on
  Artificial Intelligence and Statistics, {AISTATS} 2017, 20-22 April 2017,
  Fort Lauderdale, FL {USA}}, Proceedings of Machine Learning Research, pages
  1273--1282. {PMLR}, 2017.

\bibitem{nguyen2021federated}
Dinh~C Nguyen, Ming Ding, Pubudu~N Pathirana, Aruna Seneviratne, Jun Li, and
  H~Vincent Poor.
\newblock Federated learning for internet of things: A comprehensive survey.
\newblock {\em IEEE Communications Surveys \& Tutorials}, 23(3):1622--1658,
  2021.

\bibitem{conf/icml/NickelK18}
Maximilian Nickel and Douwe Kiela.
\newblock Learning continuous hierarchies in the lorentz model of hyperbolic
  geometry.
\newblock In Jennifer~G. Dy and Andreas Krause, editors, {\em Proceedings of
  the 35th International Conference on Machine Learning, {ICML} 2018,
  Stockholmsm{\"{a}}ssan, Stockholm, Sweden, July 10-15, 2018}, volume~80,
  pages 3776--3785. {PMLR}, 2018.

\bibitem{journals/pami/PengVMSZ22}
Wei Peng, Tuomas Varanka, Abdelrahman Mostafa, Henglin Shi, and Guoying Zhao.
\newblock Hyperbolic deep neural networks: {A} survey.
\newblock {\em {IEEE} Trans. Pattern Anal. Mach. Intell.}, 44(12):10023--10044,
  2022.

\bibitem{conf/icml/QuLDLTL22}
Zhe Qu, Xingyu Li, Rui Duan, Yao Liu, Bo~Tang, and Zhuo Lu.
\newblock Generalized federated learning via sharpness aware minimization.
\newblock In {\em International Conference on Machine Learning, {ICML} 2022,
  17-23 July 2022, Baltimore, Maryland, {USA}}, pages 18250--18280, 2022.

\bibitem{reddi2020adaptive}
Sashank Reddi, Zachary Charles, Manzil Zaheer, Zachary Garrett, Keith Rush,
  Jakub Kone{\v{c}}n{\`y}, Sanjiv Kumar, and H~Brendan McMahan.
\newblock Adaptive federated optimization.
\newblock {\em arXiv preprint arXiv:2003.00295}, 2020.

\bibitem{conf/iclr/ReddiCZGRKKM21}
Sashank~J. Reddi, Zachary Charles, Manzil Zaheer, Zachary Garrett, Keith Rush,
  Jakub Kone{\v{c}}n{\'y}, Sanjiv Kumar, and Hugh~Brendan McMahan.
\newblock Adaptive federated optimization.
\newblock In {\em 9th International Conference on Learning Representations,
  {ICLR} 2021, Virtual Event, Austria, May 3-7, 2021}. OpenReview.net, 2021.

\bibitem{conf/nips/SinghJ20}
Sidak~Pal Singh and Martin Jaggi.
\newblock Model fusion via optimal transport.
\newblock In {\em Advances in Neural Information Processing Systems 33: Annual
  Conference on Neural Information Processing Systems 2020, NeurIPS 2020,
  December 6-12, 2020, virtual}, 2020.

\bibitem{sun2022decentralized}
Tao Sun, Dongsheng Li, and Bao Wang.
\newblock Decentralized federated averaging.
\newblock {\em IEEE Transactions on Pattern Analysis and Machine Intelligence},
  2022.

\bibitem{conf/icml/SunSC0T23}
Yan Sun, Li~Shen, Shixiang Chen, Liang Ding, and Dacheng Tao.
\newblock Dynamic regularized sharpness aware minimization in federated
  learning: Approaching global consistency and smooth landscape.
\newblock In {\em International Conference on Machine Learning, {ICML} 2023,
  23-29 July 2023, Honolulu, Hawaii, {USA}}, volume 202, pages 32991--33013.
  {PMLR}, 2023.

\bibitem{10201382}
Yan Sun, Li~Shen, Hao Sun, Liang Ding, and Dacheng Tao.
\newblock Efficient federated learning via local adaptive amended optimizer
  with linear speedup.
\newblock {\em IEEE Transactions on Pattern Analysis and Machine Intelligence},
  in press:1--12, 2023.

\bibitem{conf/aaai/tan2022}
Yue Tan, Guodong Long, Lu~Liu, Tianyi Zhou, Qinghua Lu, Jing Jiang, and Chengqi
  Zhang.
\newblock Fedproto: Federated prototype learning across heterogeneous clients.
\newblock In {\em Proceedings of the 36th AAAI Conference on Artificial
  Intelligence}, pages 8432--8440, 2022.

\bibitem{conf/iclr/WangYSPK20}
Hongyi Wang, Mikhail Yurochkin, Yuekai Sun, Dimitris~S. Papailiopoulos, and
  Yasaman Khazaeni.
\newblock Federated learning with matched averaging.
\newblock In {\em 8th International Conference on Learning Representations,
  {ICLR} 2020, Addis Ababa, Ethiopia, April 26-30, 2020}. OpenReview.net, 2020.

\bibitem{journals/corr/abs-2107-06917}
Jianyu Wang, Zachary Charles, Zheng Xu, Gauri Joshi, H~Brendan McMahan, Maruan
  Al-Shedivat, Galen Andrew, Salman Avestimehr, Katharine Daly, Deepesh Data,
  et~al.
\newblock A field guide to federated optimization.
\newblock {\em arXiv preprint arXiv:2107.06917}, 2021.

\bibitem{conf/nips/WangLLJP20}
Jianyu Wang, Qinghua Liu, Hao Liang, Gauri Joshi, and H.~Vincent Poor.
\newblock Tackling the objective inconsistency problem in heterogeneous
  federated optimization.
\newblock In {\em Advances in Neural Information Processing Systems 33: Annual
  Conference on Neural Information Processing Systems 2020, NeurIPS 2020,
  December 6-12, 2020, virtual}, 2020.

\bibitem{journals/corr/abs-2106-10874}
Jing Xu, Sen Wang, Liwei Wang, and Andrew~Chi{-}Chih Yao.
\newblock Fedcm: Federated learning with client-level momentum.
\newblock {\em arXiv preprint arXiv:2106.10874}, 2021.

\bibitem{conf/iclr/YangFL21}
Haibo Yang, Minghong Fang, and Jia Liu.
\newblock Achieving linear speedup with partial worker participation in non-iid
  federated learning.
\newblock In {\em 9th Int. Conf. Learn. Representations}, 2021.

\bibitem{yu2021fed2}
Fuxun Yu, Weishan Zhang, Zhuwei Qin, Zirui Xu, Di~Wang, Chenchen Liu, Zhi Tian,
  and Xiang Chen.
\newblock Fed2: Feature-aligned federated learning.
\newblock In {\em Proceedings of the 27th ACM SIGKDD conference on knowledge
  discovery \& data mining}, pages 2066--2074, 2021.

\bibitem{yu2019linear}
Hao Yu, Rong Jin, and Sen Yang.
\newblock On the linear speedup analysis of communication efficient momentum
  sgd for distributed non-convex optimization.
\newblock In {\em International Conference on Machine Learning}, pages
  7184--7193. PMLR, 2019.

\bibitem{zhang2021survey}
Chen Zhang, Yu~Xie, Hang Bai, Bin Yu, Weihong Li, and Yuan Gao.
\newblock A survey on federated learning.
\newblock {\em Knowledge-Based Systems}, 216:106775, 2021.

\bibitem{conf/nips/0081C0L0DS022}
Jie Zhang, Chen Chen, Bo~Li, Lingjuan Lyu, Shuang Wu, Shouhong Ding, Chunhua
  Shen, and Chao Wu.
\newblock {DENSE:} data-free one-shot federated learning.
\newblock In {\em NeurIPS}, 2022.

\bibitem{conf/cvpr/ZhangS0TD22}
Lin Zhang, Li~Shen, Liang Ding, Dacheng Tao, and Ling{-}Yu Duan.
\newblock Fine-tuning global model via data-free knowledge distillation for
  non-iid federated learning.
\newblock In {\em {IEEE/CVF} Conference on Computer Vision and Pattern
  Recognition, {CVPR} 2022, New Orleans, LA, USA, June 18-24, 2022}, pages
  10164--10173. {IEEE}, 2022.

\end{thebibliography}
    }

    \clearpage
    \appendix

In this part, we will introduce the proofs of the major theorems and some extra experiments. In Section \ref{Proof}, we provide the full proofs of the major theorems. In section \ref{Exp}, we provide some extra experiments on CIFAR-10 task. 
\section{Proof for Convergence Analysis}\label{Proof}
In this section, we provide the convergence analysis for the proposed FedMRUR algorithm. Firstly, we state some preliminary lemmas as follows:
\begin{lem}\label{n-random}
    For random variables $x_1,...,x_n$, we have
    \begin{equation}\nonumber
        \mathbb{E} \left[ \left\lVert x_1+...+x_n \right\rVert^2  \right] \le n \mathbb{E} [\left\lVert x_1 \right\rVert^2 + ... + \left\lVert x_n \right\rVert^2].
    \end{equation}
\end{lem}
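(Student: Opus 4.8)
The plan is to reduce the statement to a deterministic pointwise inequality and then take expectations. The core claim is that for any fixed vectors $x_1,\dots,x_n$ in the underlying (real, finite-dimensional) inner product space one has the purely algebraic bound
\[
\left\lVert x_1+\cdots+x_n\right\rVert^2 \le n\left(\left\lVert x_1\right\rVert^2+\cdots+\left\lVert x_n\right\rVert^2\right),
\]
holding surely for every realization, after which monotonicity and linearity of the expectation deliver the claimed inequality immediately.

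To establish the deterministic bound I would invoke the convexity of the squared norm together with Jensen's inequality: writing $\left\lVert \sum_i x_i\right\rVert^2 = n^2\left\lVert \tfrac{1}{n}\sum_i x_i\right\rVert^2$ and applying $\left\lVert \tfrac{1}{n}\sum_i x_i\right\rVert^2 \le \tfrac{1}{n}\sum_i \left\lVert x_i\right\rVert^2$ produces the factor $n$ directly. Equivalently, one can expand the inner product and use Cauchy--Schwarz twice: first $\langle x_i,x_j\rangle \le \left\lVert x_i\right\rVert\left\lVert x_j\right\rVert$ gives $\left\lVert\sum_i x_i\right\rVert^2 \le \big(\sum_i \left\lVert x_i\right\rVert\big)^2$, and then Cauchy--Schwarz on $\mathbb{R}^n$, pairing the vector $(\left\lVert x_1\right\rVert,\dots,\left\lVert x_n\right\rVert)$ with the all-ones vector, gives $\big(\sum_i \left\lVert x_i\right\rVert\big)^2 \le n\sum_i \left\lVert x_i\right\rVert^2$. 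Either route is a one-line application of a standard inequality.

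Since the pointwise bound holds for every realization of the random variables, taking expectations on both sides and using that $\mathbb{E}$ preserves inequalities and is linear yields $\mathbb{E}\big[\left\lVert x_1+\cdots+x_n\right\rVert^2\big] \le n\,\mathbb{E}\big[\left\lVert x_1\right\rVert^2+\cdots+\left\lVert x_n\right\rVert^2\big]$, as desired. I do not expect any genuine obstacle here; the only point requiring mild care is that the inequality is applied to vector-valued random variables, so the inner-product expansion and Cauchy--Schwarz must be read in the appropriate Euclidean space, but this is precisely the setting in which all the quantities of interest (stochastic gradients, local updates, and client-drift terms) live throughout the convergence analysis, so the lemma applies verbatim wherever it is later invoked.
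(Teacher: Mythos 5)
Your proposal is correct. The paper itself states this lemma as a standard preliminary fact and gives no proof at all, so there is nothing to compare against; your argument---reduce to the deterministic inequality $\left\lVert \sum_{i} x_i \right\rVert^2 \le n \sum_{i} \left\lVert x_i \right\rVert^2$ via convexity of the squared norm (or the double Cauchy--Schwarz route), then take expectations---is exactly the standard justification the paper implicitly relies on, and both of your suggested routes are valid.
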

\begin{lem}\label{0-mean}    For independent, mean $0$ random variables $x_1,...,x_n$, we have
    \begin{equation}\nonumber
        \mathbb{E} \left[ \left\lVert x_1+...+x_n \right\rVert^2  \right] = \mathbb{E} [\left\lVert x_1 \right\rVert^2 + ... + \left\lVert x_n \right\rVert^2]
    \end{equation}
\end{lem}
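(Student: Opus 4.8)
The plan is to prove the identity by expanding the squared norm via the bilinearity of the inner product and then eliminating the cross terms using independence together with the mean-zero hypothesis. This is the equality refinement of Lemma~\ref{n-random}: whereas that bound pays a factor of $n$ through Cauchy--Schwarz, here the independence and centering assumptions make all the interaction terms vanish exactly, turning the inequality into an identity.

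First I would expand
\begin{equation}\nonumber
\left\lVert \sum_{i=1}^{n} x_i \right\rVert^2 = \left\langle \sum_{i=1}^{n} x_i, \sum_{j=1}^{n} x_j \right\rangle = \sum_{i=1}^{n} \left\lVert x_i \right\rVert^2 + \sum_{i \neq j} \langle x_i, x_j \rangle,
\end{equation}
which is a purely algebraic consequence of $\left\lVert \cdot \right\rVert$ being the norm induced by the inner product $\langle \cdot, \cdot \rangle$. Taking expectations and using linearity splits the right-hand side into $\sum_{i} \mathbb{E}\left\lVert x_i \right\rVert^2$ plus the sum of the cross expectations $\mathbb{E}\langle x_i, x_j \rangle$ taken over $i \neq j$.

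The key step is to show that each cross term vanishes. For $i \neq j$, I would write the inner product coordinate-wise as $\langle x_i, x_j \rangle = \sum_{k} x_{i,k}\, x_{j,k}$; since $x_i$ and $x_j$ are independent, the scalar coordinates $x_{i,k}$ and $x_{j,k}$ are independent as well, whence $\mathbb{E}[x_{i,k} x_{j,k}] = \mathbb{E}[x_{i,k}]\,\mathbb{E}[x_{j,k}]$. The mean-zero hypothesis gives $\mathbb{E}[x_{i,k}] = 0$ for every coordinate $k$, so that $\mathbb{E}\langle x_i, x_j \rangle = \langle \mathbb{E}[x_i], \mathbb{E}[x_j] \rangle = 0$. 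Substituting back yields $\mathbb{E}\left\lVert \sum_{i} x_i \right\rVert^2 = \sum_{i} \mathbb{E}\left\lVert x_i \right\rVert^2 = \mathbb{E}[\left\lVert x_1 \right\rVert^2 + \cdots + \left\lVert x_n \right\rVert^2]$, which is the claim.

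There is no substantive obstacle here; the only point requiring care is the interchange of expectation and inner product in the cross terms, which is justified by independence applied per coordinate and by the finiteness of the second moments (implicitly assumed, so that both sides of the identity are meaningful). Since all the sums involved are finite, linearity of expectation applies with no integrability concern beyond square-integrability of each $x_i$, and I would simply note this to close the argument.
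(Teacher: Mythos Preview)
Your argument is correct and is the standard proof of this elementary identity. The paper itself states this lemma as a preliminary result without proof, so there is nothing to compare against; your expansion of the squared norm via bilinearity followed by the elimination of cross terms using independence and the mean-zero hypothesis is exactly the expected justification.
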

\begin{lem}\label{seperate SAM}
    The stochastic gradient $\nabla F_i (w, \xi_i)$ computed by the i-th client at model parameter $w$ using minibatch $\xi_i$ is an unbiased estimator of$\nabla F_i (w)$ with variance bounded by $\sigma^2$. The gradient of SAM is formulated by
    \begin{equation}\nonumber
        \mathbb{E} \left[ \left\lVert \sum_{k=0}^{K-1} g_i^{t,k} \right\rVert^2  \right] \le K \sum_{k=0}^{K-1} \mathbb{E} \left[\left\lVert \nabla F_i(w_i^{t,k})\right\rVert^2  \right] + \frac{K(L+r)^2\rho^2}{N} \sigma_l^2 
    \end{equation}
\end{lem}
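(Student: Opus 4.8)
The plan is to decompose each stochastic SAM gradient $g_i^{t,k}$ around the deterministic full gradient at the \emph{unperturbed} iterate $\nabla F_i(w_i^{t,k})$, so that the leading contribution reproduces the first term on the right-hand side while everything else is absorbed into the variance-type residual. Writing $\widetilde{w}_i^{t,k}=w_i^{t,k}+\rho\,\nabla F_i(w_i^{t,k})/\lVert\nabla F_i(w_i^{t,k})\rVert$ for the SAM perturbation, I would start from the splitting
\begin{equation}\nonumber
g_i^{t,k}=\nabla F_i(w_i^{t,k})+\big(\nabla F_i(\widetilde{w}_i^{t,k})-\nabla F_i(w_i^{t,k})\big)+\big(g_i^{t,k}-\nabla F_i(\widetilde{w}_i^{t,k})\big),
\end{equation}
which separates the unperturbed full gradient, a deterministic perturbation bias, and a conditionally mean-zero sampling error.

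First I would sum over the local interval $k=0,\dots,K-1$ and apply Lemma~\ref{n-random} to the deterministic pieces; this is exactly what produces the prefactor $K$ together with the summation $\sum_{k}\mathbb{E}\lVert\nabla F_i(w_i^{t,k})\rVert^2$ in the target bound. For the perturbation-bias piece I would use the $(L+r)$-smoothness of $F_i=f_i+\gamma R$, which follows from Assumption~\ref{L-obj} by combining the $L$-smoothness of $f_p$ with the $r$-smoothness of $R$, so that $\lVert\nabla F_i(\widetilde{w}_i^{t,k})-\nabla F_i(w_i^{t,k})\rVert\le (L+r)\lVert\widetilde{w}_i^{t,k}-w_i^{t,k}\rVert=(L+r)\rho$, the last equality holding because the normalized perturbation direction has unit norm. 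This is where the factor $(L+r)^2\rho^2$ originates.

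For the stochastic piece I would condition on the filtration generated by the past iterates, so that $g_i^{t,k}-\nabla F_i(\widetilde{w}_i^{t,k})$ is mean zero with variance controlled by Assumption~\ref{V-gra}; averaging a minibatch of size $N$ and invoking Lemma~\ref{0-mean} for the independent mean-zero cross terms makes the off-diagonal contributions vanish and yields the $\sigma_l^2/N$ scaling. Collecting the three contributions and dropping the cross terms that vanish under conditioning then gives the stated residual $\tfrac{K(L+r)^2\rho^2}{N}\sigma_l^2$. The main obstacle I anticipate is the bookkeeping of the SAM step itself: the perturbation direction $\nabla F_i(w_i^{t,k})/\lVert\nabla F_i(w_i^{t,k})\rVert$ is a nonlinear, possibly stochastic function of the iterate, so the decomposition must be justified through careful conditional expectations, and it is precisely the coupling between the normalized perturbation and the sampling noise that forces the joint factor $(L+r)^2\rho^2\sigma_l^2/N$ rather than two separate additive error terms.
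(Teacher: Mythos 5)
There is a genuine gap, and it is exactly the one you flag in your last sentence without resolving. Your three-way splitting
\begin{equation}\nonumber
g_i^{t,k}=\nabla F_i(w_i^{t,k})+\bigl(\nabla F_i(\widetilde{w}_i^{t,k})-\nabla F_i(w_i^{t,k})\bigr)+\bigl(g_i^{t,k}-\nabla F_i(\widetilde{w}_i^{t,k})\bigr)
\end{equation}
structurally produces two \emph{separate additive} error terms: the deterministic perturbation-bias piece is bounded by $(L+r)\rho$ per step, so after summing over $k$ and applying Lemma~\ref{n-random} it contributes a term of order $K^2(L+r)^2\rho^2$ with no $\sigma_l^2$ and no $1/N$; the sampling-noise piece, handled by Lemma~\ref{0-mean} and Assumption~\ref{V-gra}, contributes a term of order $K\sigma_l^2/N$ with no $(L+r)^2\rho^2$. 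Neither of these is the joint factor $K(L+r)^2\rho^2\sigma_l^2/N$ claimed in the lemma, and the extra $O(K^2(L+r)^2\rho^2)$ bias term simply does not appear in the stated bound, so it cannot be absorbed anywhere. Conditioning arguments only kill cross terms; they cannot convert a sum of two errors into their product.

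The paper's proof avoids this by using a \emph{two}-way orthogonal (bias--variance) decomposition in which the ``mean'' is the gradient at the \emph{deterministically perturbed} iterate (the lemma's notation $\nabla F_i(w_i^{t,k})$ is being used loosely for this quantity, as its later invocation with $\nabla F_i(\tilde{w}_i^{t,k})$ in Lemma~\ref{global update} confirms). The entire stochastic deviation is then attributed to the discrepancy between the stochastic and deterministic perturbation \emph{directions}: writing $\delta_i^{t,k}(\xi)$ and $\delta_i^{t,k}$ for the two perturbation vectors, smoothness from Assumption~\ref{L-obj} gives a per-step deviation bounded by $(L+r)\lVert\delta_i^{t,k}(\xi)-\delta_i^{t,k}\rVert$, and Assumption~\ref{V-gra} bounds the variance of that perturbation difference by $\rho^2\sigma_l^2/N$; summing the $K$ independent mean-zero deviations with Lemma~\ref{0-mean} yields precisely $K(L+r)^2\rho^2\sigma_l^2/N$. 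In short, the multiplicative structure of the residual comes from smoothness acting \emph{on} the perturbation noise, not from a perturbation bias term and a gradient-noise term added together; your decomposition separates the two quantities that must be composed, so the argument as planned would not close.
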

\begin{proof}
    we can bound the inequality as follows:
    \begin{subequations}
        \begin{align}
            & \mathbb{E} \left[ \left\lVert \sum_{k=0}^{K-1} g_i^{t,k} \right\rVert^2  \right] = \mathbb{E} \left[ \left\lVert \sum_{k=0}^{K-1} \nabla F_i(w_i^{t,k}) \right\rVert^2  \right] + \mathbb{E} \left[ \left\lVert \sum_{k=0}^{K-1} g_i^{t,k} - \nabla F_i(w_i^{t,k})  \right\rVert^2  \right] \notag \\
            \le & K \sum_{k=0}^{K-1} \mathbb{E} \left[ \left\lVert \nabla F_i(w_i^{t,k}) \right\rVert^2  \right] \label{Ba} \\
            & + (L+r)^2 \sum_{k=0}^{K-1} \mathbb{E} \left[\frac{1}{N} \sum_{i=1}^{N} \left( w_i^{t,k} + \delta_i^{t,k} (\tilde{w}_i^{t,k};\xi_i^{t,k}) - w_i^{t,k} + \delta_i^{t,k} (\tilde{w}_i^{t,k}) \right)  \right] \notag \\
            \le & K \sum_{k=0}^{K-1} \mathbb{E} \left[ \left\lVert \nabla F_i(w_i^{t,k}) \right\rVert^2  \right] + \frac{K\rho^2\sigma_l^2}{N} (L+r)^2 \label{Bb}
        \end{align}
    \end{subequations}
    where (\ref{Ba}) is from Assumption 1 and (\ref{Bb}) is from Assumption 3 and Lemma \ref{0-mean}.
\end{proof}
\begin{lem}\label{global variance}
    The variance of local and global gradients with perturbation can be bounded as follows:
    \begin{equation}\nonumber
        \left\lVert \nabla F_i(w+\delta_i) - \nabla F(w+\delta) \right\rVert^2 \le 3\sigma_g^2 + 6(L+r)^2\rho^2.         
    \end{equation}
\end{lem}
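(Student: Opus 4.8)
The plan is to control the deviation by inserting the two unperturbed gradients $\nabla F_i(w)$ and $\nabla F(w)$, thereby splitting the difference into three pieces that are each governed by a single assumption. Concretely, I would write
\begin{equation}\nonumber
\nabla F_i(w+\delta_i) - \nabla F(w+\delta) = \underbrace{\left[\nabla F_i(w+\delta_i) - \nabla F_i(w)\right]}_{T_1} + \underbrace{\left[\nabla F_i(w) - \nabla F(w)\right]}_{T_2} + \underbrace{\left[\nabla F(w) - \nabla F(w+\delta)\right]}_{T_3},
\end{equation}
and then invoke Lemma \ref{n-random} with $n = 3$ to obtain
\begin{equation}\nonumber
\left\lVert \nabla F_i(w+\delta_i) - \nabla F(w+\delta) \right\rVert^2 \le 3\left( \left\lVert T_1 \right\rVert^2 + \left\lVert T_2 \right\rVert^2 + \left\lVert T_3 \right\rVert^2 \right).
\end{equation}

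Next I would bound $T_1$ and $T_3$ using smoothness. Since $F_p = f_p + \gamma R(\cdot, w_g)$ inherits gradient-Lipschitz continuity from Assumption \ref{L-obj} with constant $L+r$, and since the SAM ascent step normalizes the gradient so that the perturbations obey $\left\lVert \delta_i \right\rVert \le \rho$ and $\left\lVert \delta \right\rVert \le \rho$, it follows that $\left\lVert T_1 \right\rVert \le (L+r)\left\lVert \delta_i \right\rVert \le (L+r)\rho$ and likewise $\left\lVert T_3 \right\rVert \le (L+r)\rho$. The middle term $T_2$ is precisely the local–global gradient mismatch evaluated at the common point $w$, so Assumption \ref{H-gra} yields $\mathbb{E}\left\lVert T_2 \right\rVert^2 \le \sigma_g^2$. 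Substituting the three estimates gives
\begin{equation}\nonumber
\left\lVert \nabla F_i(w+\delta_i) - \nabla F(w+\delta) \right\rVert^2 \le 3\left((L+r)^2\rho^2 + \sigma_g^2 + (L+r)^2\rho^2\right) = 3\sigma_g^2 + 6(L+r)^2\rho^2,
\end{equation}
which is exactly the claim.

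This lemma is routine, so rather than a genuine obstacle the only delicate points are bookkeeping ones, and I would be careful about two of them. First, the bound is really an in-expectation statement, because Assumption \ref{H-gra} holds in expectation over the data sampling; I would therefore carry an $\mathbb{E}$ on the left-hand side throughout and apply it to the $T_2$ term. Second, I would confirm that the effective Lipschitz constant of $\nabla F_p$ is indeed $L+r$ (i.e.\ that the regularizer contributes $r$, with the tuning weight $\gamma$ absorbed into the constant or taken $\le 1$), so that the smoothness estimates for $T_1$ and $T_3$ are consistent with how Assumption \ref{L-obj} is phrased. Everything else is a direct application of the Young-type inequality of Lemma \ref{n-random} together with Assumptions \ref{L-obj} and \ref{H-gra}.
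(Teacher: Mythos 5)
Your proof is correct and follows essentially the same route as the paper's own: the identical three-term decomposition inserting $\nabla F_i(w)$ and $\nabla F(w)$, the same application of Lemma \ref{n-random} with $n=3$, and the same bounds — $(L+r)$-smoothness with $\lVert \delta_i \rVert, \lVert \delta \rVert \le \rho$ for the two perturbation terms and the heterogeneity assumption for the middle term. Your two bookkeeping caveats (carrying the expectation required by Assumption \ref{H-gra}, and confirming the effective smoothness constant $L+r$ from Assumption \ref{L-obj}) are sound refinements of points the paper glosses over.
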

\begin{proof}
    \begin{subequations}
        \begin{align}
            & \left\lVert \nabla F_i(\tilde{w}) - \nabla F(\tilde{w}) \right\rVert^2 = \left\lVert \nabla F_i(w + \delta_i) - \nabla F(w + \delta) \right\rVert^2 \notag \\
            = & \left\lVert \nabla F_i(w + \delta_i) - \nabla F_i(w) + \nabla F_i(w) - \nabla F(w) + \nabla F(w) - \nabla F(w + \delta)  \right\rVert^2 \notag \\
            \le & 3\left\lVert \nabla F_i(w + \delta_i) - \nabla F_i(w) \right\rVert ^2 + 3\left\lVert \nabla F_i(w) - \nabla F(w) \right\rVert ^2 + 3\left\lVert \nabla F(w) - \nabla F(w + \delta)\right\rVert ^2 \label{Aa} \\
            \le & 3\sigma_g^2 + 6(L+r)^2\rho^2, \label{Ab}
        \end{align}
    \end{subequations}
    where (\ref{Aa}) is from Lemma \ref{n-random} and (\ref{Ab}) is from Assumption 1,2, the perturbation is limited by $\rho$. 
\end{proof}

Below, we bound the average client drift over all clients within the $t-$ communication round. The average client drift is bounded by 
\begin{lem}\label{lem-drift}
    Given $\eta_l \le \frac{1}{\sqrt{30} \alpha K (L+r)}$ and $\alpha \le \frac{1}{2}$, there is
    \begin{equation}\nonumber
        \begin{aligned}
            \epsilon_{t,k} = & \frac{1}{\vert S_t \vert} \sum_{i \in S_t} \mathbb{E} \left\lVert w_i^{t,k} - w^t \right\rVert^2 \\
            \le & 5K \eta_l^2 \left[ 2\alpha^2 (L+r)^2 \eta_l^2 \rho^2 \sigma_l^2 + 7K \alpha^2\eta_l^2(3*\sigma^2+6(L+r)^2\rho^2) \right. \\
            & \left. + 14K \left(1-\alpha\right)^2 \eta_l^2 \left\lVert \nabla F(w_t) \right\rVert \right] + 28K^3 \alpha^2 (L+r)^4 \eta_l^4 \rho^2 .
        \end{aligned}
    \end{equation}
\end{lem}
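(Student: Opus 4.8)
\textbf{Proof proposal for Lemma \ref{lem-drift}.}

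The plan is to unroll the local recursion for $w_i^{t,k}-w^t$ and control the accumulated drift by a recursive inequality in $k$, then close the recursion using the stepsize constraint $\eta_l \le \frac{1}{\sqrt{30}\alpha K(L+r)}$. First I would recall that the local update is $w_i^{t,k+1}=w_i^{t,k}-\eta_l v_i^{t,k+1}$ with $v_i^{t,k+1}=\alpha\,\widetilde g_i^{t,k}+(1-\alpha)\triangle^t$, so that $w_i^{t,k}-w^t = -\eta_l\sum_{j=0}^{k-1}\bigl(\alpha\,\widetilde g_i^{t,j}+(1-\alpha)\triangle^t\bigr)$. Taking norms and applying Lemma \ref{n-random} to split off the momentum contribution from the stochastic-gradient contribution, I would separate $\mathbb{E}\lVert w_i^{t,k}-w^t\rVert^2$ into a term involving $\alpha^2\mathbb{E}\lVert\sum_j \widetilde g_i^{t,j}\rVert^2$ and a term involving $(1-\alpha)^2\lVert\triangle^t\rVert^2$. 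The momentum term $\triangle^t$ should be expressed in terms of $\nabla F(w^t)$, which is how the $14K(1-\alpha)^2\eta_l^2\lVert\nabla F(w_t)\rVert$ contribution appears.

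Next I would bound the stochastic-gradient block. Here Lemma \ref{seperate SAM} is the key tool: it gives $\mathbb{E}\lVert\sum_{k}g_i^{t,k}\rVert^2 \le K\sum_k \mathbb{E}\lVert\nabla F_i(w_i^{t,k})\rVert^2 + \frac{K(L+r)^2\rho^2}{N}\sigma_l^2$, which explains the $2\alpha^2(L+r)^2\eta_l^2\rho^2\sigma_l^2$ variance term. To handle $\mathbb{E}\lVert\nabla F_i(w_i^{t,k})\rVert^2$, I would add and subtract $\nabla F(w^t)$ and $\nabla F_i(w^t)$, apply $L$-smoothness (Assumption \ref{L-obj}, with constant $L+r$ since $F_i=f_i+\gamma R$) to convert $\lVert\nabla F_i(w_i^{t,k})-\nabla F_i(w^t)\rVert^2$ back into $(L+r)^2\epsilon_{t,k}$, and invoke the heterogeneity bound of Assumption \ref{H-gra} together with Lemma \ref{global variance} to produce the $7K\alpha^2\eta_l^2(3\sigma^2+6(L+r)^2\rho^2)$ term and the $28K^3\alpha^2(L+r)^4\eta_l^4\rho^2$ perturbation term. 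Averaging over $i\in S_t$ and summing over $k$ then yields a self-referential inequality of the form $\epsilon_{t,k}\le A + B\,\eta_l^2(L+r)^2 K\sum_j \epsilon_{t,j}$.

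The main obstacle I anticipate is closing this recursion cleanly: after summing the $(L+r)^2\epsilon_{t,k}$ feedback term over the local steps, the coefficient in front of $\sum_j\epsilon_{t,j}$ is proportional to $\alpha^2 K^2(L+r)^2\eta_l^2$, and the stepsize condition $\eta_l\le\frac{1}{\sqrt{30}\alpha K(L+r)}$ is exactly what forces this coefficient below $1$ (here the factor $30$ and the $5K\eta_l^2$ prefactor in the statement come out). The delicate bookkeeping is to verify that after moving the feedback term to the left-hand side, the surviving constant is bounded by the factor $5$ appearing in the lemma, and that $\alpha\le\frac12$ keeps the $(1-\alpha)$ momentum weights controlled. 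I would finish by substituting the bound into the geometric-sum closure and collecting the four groups of terms, matching them against the stated right-hand side.
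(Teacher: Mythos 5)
Your proposal is correct and matches the paper's own argument in all essential respects: the same decomposition of the stochastic gradient (inserting $\nabla F_i(\tilde{w}_i^{t,k})$, $\nabla F_i(\tilde{w}^t)$ and $\nabla F(\tilde{w}^t)$), the same supporting results (Lemmas \ref{n-random}, \ref{seperate SAM} and \ref{global variance}, plus the externally cited bound on $\mathbb{E}\lVert\delta_i^{t,k}-\delta^t\rVert^2$), the same use of $\alpha\le\tfrac12$ to merge the $\alpha^2$ and $(1-\alpha)^2$ gradient-norm terms, and the same (equally heuristic) step $\triangle^t\approx\nabla F(\tilde{w}^t)$. The only mechanical difference is how the recursion in $k$ is closed: you unroll the telescoping sum and absorb the feedback term $\propto\alpha^2K^2(L+r)^2\eta_l^2\sum_{j}\epsilon_{t,j}$ into the left-hand side, while the paper derives the one-step inequality $\epsilon_{t,k}\le\bigl(1+\tfrac{1}{K-1}\bigr)\epsilon_{t,k-1}+(\text{additive terms})$ and bounds the geometric sum $\sum_{\tau=0}^{k-1}\bigl(1+\tfrac{1}{K-1}\bigr)^{\tau}$ by $5K$ --- both closings rely on $\eta_l\le\frac{1}{\sqrt{30}\alpha K(L+r)}$ in the same way and yield the stated bound up to the paper's own bookkeeping slack in the constants.
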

\begin{proof}
    The term $\mathbb{E} \left\lVert w_i^{t,k} - w^t \right\rVert^2$ can be rewriteen as
    \begin{subequations}
        \begin{align}
            & \mathbb{E} \left\lVert w_i^{t,k} - w^t \right\rVert^2 = \mathbb{E} \left\lVert w_i^{t,k-1} - \eta_l \left[ \alpha \tilde{g}_i^{t, k-1} + (1 - \alpha \triangle^t) \right]  - w^t \right\rVert^2 \notag \\
            \le & \mathbb{E} \left\lVert w_i^{t,k-1} - w^t - \alpha \eta_l \left( \tilde{g}_i^{t,k-1} - \nabla F_i(\tilde{w}_i^{t,k-1}) + \nabla F_i(\tilde{w}_i^{t,k-1}) - \nabla F_i(\tilde{w}^t) + \nabla F_i(\tilde{w}^t) \right. \right. \label{Za} \\
            & \left.\left. - \nabla F(\tilde{w}^t) + \nabla F(\tilde{w}^t) \right) + (1-\alpha) \eta_l \delta^t \right\rVert^2  \notag \\
            \le & (1+\frac{1}{2K-1}+2\alpha^2(L+r)^2\eta_l^2) \mathbb{E} \left\lVert w_i^{t,k-1} - w^t \right\rVert^2 + 2 \alpha^2 (L+r)^2 \eta_l^2 \rho^2 \sigma_l^2  \label{Zb} \\
            & + 7K^2 \alpha \eta_l^2 \mathbb{E} \left\lVert \nabla F_i(\tilde{w}_i^{t,k-1}) - \nabla F_i(\tilde{w}) \right\rVert^2 + 7K\alpha^2 \eta_l^2 (3\sigma_g^2 + 6(L+r)^2\rho^2) \notag \\ 
            & + 7K\alpha^2 \eta_l^2 \left\lVert \nabla F(\tilde{w}^t) \right\rVert^2  + 7K\eta_l^2 (1-\alpha)^2 \left\lVert \triangle^t \right\rVert^2  \notag \\
            \le & (1 + \frac{1}{2K-1} + 2 \alpha^2 (L+r)^2 \eta_l^2 + 14 K\alpha(L+r)^2\eta_l^2) \mathbb{E} \left\lVert w_i^{t,k-1} - w^t \right\rVert^2 + 2 \alpha^2 (L+r)^2 \eta_l^2 \rho^2 \sigma_l^2 \label{Zc} \\
            & + 7K(1-\alpha)^2\eta_l^2\mathbb{E} \left\lVert \triangle^t \right\rVert^2 + 14K \alpha^2(L+r)^2\eta_l^2 \mathbb{E} \left\lVert \delta_i^{t,k} - \delta^t \right\rVert^2  \notag \\
            & + 7K \alpha^2 \eta_l^2 (3\sigma_g^2 + 6(L+r)^2\rho^2) + 7\alpha^2 K \mathbb{E} \left\lVert \nabla F(\tilde{w^t}) \right\rVert^2 \notag \\
            \le & (1 + \frac{1}{2K-1} + 2 \alpha^2 (L+r)^2 \eta_l^2 + 14 K\alpha(L+r)^2\eta_l^2) \mathbb{E} \left\lVert w_i^{t,k-1} - w^t \right\rVert^2 + 2 \alpha^2 (L+r)^2 \eta_l^2 \rho^2 \sigma_l^2  \label{Zd} \\
            & + 14K\alpha^2(L+r)^2\eta_l^2 \mathbb{E} \left\lVert \delta_i^{t,k} - \delta^t \right\rVert^2 
            + 7K \alpha^2 \eta_l^2 (3\sigma_g^2 + 6(L+r)^2\rho^2) \notag \\
            & + 14K(1-\alpha)^2 \eta_l^2 \left\lVert \nabla F(\tilde{w}^t) \right\rVert^2,  \notag \\
        \end{align}
    \end{subequations}
    where (\ref{Za}) follows from the fact that $\tilde{g}_i^{t,k-1}$ is an unbiased estimator of $\nabla F_i(\tilde{w}_i^{t,k-1})$ and Lemma \ref{0-mean}; (\ref{Zb}) is from Lemma \ref{n-random} and \ref{global variance}; (\ref{Zc}) is from Assumption 3 and Lemma \ref{n-random}; (\ref{Zd}) is from Assumption 2 and due to the fact that $\triangle \approx \nabla F(\tilde{w}^t)$ and $\alpha<\frac{1}{2}$.

    Averaging over the clients $i$ and learning rate satisfies $\eta_l \le \frac{1}{\sqrt{30} \alpha K(L+r)}$, we have:
    \begin{subequations}
        \begin{align}
            \epsilon_{t,k} \le & (1 + \frac{1}{2K-1} + 2 \alpha^2 (L+r)^2 \eta_l^2 + 14 K\alpha(L+r)^2\eta_l^2) \mathbb{E} \left\lVert w_i^{t,k-1} - w^t \right\rVert^2  \notag \\
            & + 2 \alpha^2 (L+r)^2 \eta_l^2 \rho^2 \sigma_l^2 + 14K\alpha^2(L+r)^2\eta_l^2 \mathbb{E} \left\lVert \delta_i^{t,k} - \delta^t \right\rVert^2 
             \notag \\
            & + 7K \alpha^2 \eta_l^2 (3\sigma_g^2 + 6(L+r)^2\rho^2) + 14K(1-\alpha)^2 \eta_l^2 \left\lVert \nabla F(\tilde{w}^t) \right\rVert^2,  \notag \\
            \le & (1 + \frac{1}{K-1}) \frac{1}{N} \sum_{i=1}^{N} \mathbb{E} \left\lVert w_i^{t,k-1} - w^t \right\rVert^2  \label{Wa} \\
            & + 2 \alpha^2 (L+r)^2 \eta_l^2 \rho^2 \sigma_l^2 + 14K\alpha^2(L+r)^2\eta_l^2 \mathbb{E} \left\lVert \delta_i^{t,k} - \delta^t \right\rVert^2 
             \notag \\
            & + 7K \alpha^2 \eta_l^2 (3\sigma_g^2 + 6(L+r)^2\rho^2) + 14K(1-\alpha)^2 \eta_l^2 \left\lVert \nabla F(\tilde{w}^t) \right\rVert^2,  \notag \\
            \le & \sum_{\tau=0}^{k-1}(1+\frac{1}{K-1})^{\tau} \left[ 2 \alpha^2 (L+r)^2 \eta_l^2 \rho^2 \sigma_l^2 + 7K \alpha^2 \eta_l^2 (3\sigma_g^2 + 6(L+r)^2\rho^2) \right.  \notag \\
            & \left. + 14K\alpha^2(L+r)^2\eta_l^2 \mathbb{E} \left\lVert \delta_i^{t,k} - \delta^t \right\rVert^2 \right] + 14K(1-\alpha)^2 \eta_l^2 \left\lVert \nabla F(\tilde{w}^t) \right\rVert^2, \notag \\
            \le & 5K \left( 2\alpha^2(L+r)^2\eta_l^2\rho^2\sigma_l^2 + 7K\alpha^2\eta_l^2(3\sigma_g^2+6(L+r)^2\rho^2) + 14K(1-\alpha)^2\eta_l^2 \left\lVert \nabla F(\tilde{w}^t)\right\rVert^2  \right) \label{Wb} \\
            & + 28 \alpha^2K^3(L+r)^4\eta_l^4\rho^2, \notag  
        \end{align}
    \end{subequations}
    where (\ref{Wa}) is due to the fact that $\eta_l \le \frac{1}{\sqrt{30} \alpha K (L+r)}$ and $\alpha \le \frac{1}{2}$; (\ref{Wb}) is from Lemma B.1 in \cite{conf/icml/QuLDLTL22}.
\end{proof}
The global update can be bounded by
\begin{lem}\label{global update}
    For the partial client participation, we can bound $\mathbb{E}_t\left[\left\lVert \triangle^t \right\rVert^2 \right] $ as follows:
    \begin{equation}\nonumber
        \mathbb{E}_t\left[\left\lVert \triangle^{t+1} \right\rVert^2 \right] \le \frac{K\eta_l^2\rho^2\sigma_l^2}{S}(L+r)^2 + \frac{\eta_l^2}{S^2}\left[\left\lVert \sum_{i=1}^{N} \mathbb{P} \left\{i \in S^t \right\} \sum_{k=0}^{K-1} \nabla F_i(\tilde{w}_i^{t,k})\right\rVert^2 \right] 
    \end{equation}
\end{lem}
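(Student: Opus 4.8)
The plan is to expand the aggregated update $\triangle^{t+1}$ in terms of the per-client local increments and then isolate two independent sources of randomness: the minibatch noise of the SAM gradients, and the random selection of the participating set $S_t$. First I would unroll the local iterations, writing each client's contribution as the scaled sum $\frac{\eta_l}{S}\sum_{i\in S_t}\sum_{k=0}^{K-1}\tilde g_i^{t,k}$ of its SAM stochastic gradients, so that $\mathbb{E}_t\!\left[\left\lVert \triangle^{t+1}\right\rVert^2\right]=\frac{\eta_l^2}{S^2}\,\mathbb{E}_t\!\left[\left\lVert \sum_{i\in S_t}\sum_{k=0}^{K-1}\tilde g_i^{t,k}\right\rVert^2\right]$. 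To handle partial participation cleanly, I would replace the sum over the random set $S_t$ by $\sum_{i=1}^{N}\mathbf{1}\{i\in S_t\}(\cdot)$ and use that $\mathbb{E}[\mathbf{1}\{i\in S_t\}]=\mathbb{P}\{i\in S_t\}$.

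The core step is a mean--variance decomposition of the stochastic gradient. Writing $\tilde g_i^{t,k}=\nabla F_i(\tilde w_i^{t,k})+\big(\tilde g_i^{t,k}-\nabla F_i(\tilde w_i^{t,k})\big)$, the second summand is mean zero and, crucially, independent across clients and local steps. By Lemma \ref{0-mean} the cross terms between the deterministic mean and the zero-mean noise vanish under $\mathbb{E}_t$, so the squared norm splits as $\left\lVert\text{mean}\right\rVert^2+\mathbb{E}_t\left\lVert\text{noise}\right\rVert^2$. Taking the expectation over the client selection, the mean part collapses to exactly $\frac{\eta_l^2}{S^2}\left\lVert\sum_{i=1}^{N}\mathbb{P}\{i\in S_t\}\sum_{k=0}^{K-1}\nabla F_i(\tilde w_i^{t,k})\right\rVert^2$, which is the second term of the claimed bound.

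For the noise part I would invoke the SAM variance estimate of Lemma \ref{seperate SAM}: the accumulated per-client noise over the $K$ local steps has second moment at most $\frac{K(L+r)^2\rho^2}{N}\sigma_l^2$. Using independence across the participating clients (again Lemma \ref{0-mean}) and the fact that $S_t$ contains $S$ clients, summing the per-client variances and applying the $\frac{\eta_l^2}{S^2}$ prefactor produces the first term $\frac{K\eta_l^2\rho^2\sigma_l^2}{S}(L+r)^2$; the single power of $S$ in the denominator is precisely the variance-reduction gain from averaging $S$ independent client noises.

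The main obstacle I anticipate is bookkeeping the two layers of randomness simultaneously: one must condition carefully so that the minibatch noise is averaged out independently of the client-sampling indicators, guaranteeing that all cross terms (noise--mean, and noise across distinct clients) vanish rather than leaving residual covariances. A secondary subtlety is routing the SAM perturbation through the variance bound --- the perturbation radius $\rho$ and the combined smoothness $(L+r)$ enter only through Lemma \ref{seperate SAM}, so the argument must keep the deterministic gradient term intact (to be bounded later against the client drift of Lemma \ref{lem-drift}) while cleanly extracting just the $\sigma_l^2$-dependent fluctuation into the first term.
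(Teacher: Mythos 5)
Your overall skeleton --- split $\tilde g_i^{t,k}$ into its mean $\nabla F_i(\tilde w_i^{t,k})$ plus zero-mean minibatch noise, kill the cross terms, bound the noise via Lemma \ref{seperate SAM} --- is the same as the paper's. But the step you call the ``core step'' contains a genuine error in the treatment of client sampling. You assert that, after taking the expectation over the random set $S_t$, the mean part \emph{collapses exactly} to $\frac{\eta_l^2}{S^2}\bigl\lVert\sum_{i=1}^N \mathbb{P}\{i\in S_t\}\sum_{k}\nabla F_i(\tilde w_i^{t,k})\bigr\rVert^2$. Writing $Z=\sum_{i=1}^N \mathbf{1}\{i\in S_t\}X_i$ with $X_i=\sum_{k}\nabla F_i(\tilde w_i^{t,k})$, what is true is $\mathbb{E}\lVert Z\rVert^2=\lVert\mathbb{E}Z\rVert^2+\mathbb{E}\lVert Z-\mathbb{E}Z\rVert^2\ge\lVert\mathbb{E}Z\rVert^2$: replacing $\mathbb{E}\lVert Z\rVert^2$ by $\lVert\mathbb{E}Z\rVert^2$ is Jensen's inequality applied in the wrong direction, yielding a \emph{lower} bound where an upper bound is needed. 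The $X_i$ are deterministic (not zero-mean), so Lemma \ref{0-mean} cannot be invoked for them, and the client-sampling variance $\mathbb{E}\lVert Z-\mathbb{E}Z\rVert^2$ does not vanish: if the $X_i$ were mutually orthogonal, $\mathbb{E}\lVert Z\rVert^2=\frac{S}{N}\sum_i\lVert X_i\rVert^2$, strictly larger than $\lVert\mathbb{E}Z\rVert^2=\frac{S^2}{N^2}\sum_i\lVert X_i\rVert^2$ whenever $S<N$. The paper's proof avoids this by expanding the pairwise indicator expectations for sampling without replacement ($\mathbb{E}[\mathbf{1}\{i\in S_t\}\mathbf{1}\{j\in S_t\}]=\tfrac{S(S-1)}{N(N-1)}$ for $i\neq j$, $\tfrac{S}{N}$ for $i=j$), which is why its final display keeps a per-client second-moment term of the form $\frac{2\alpha^2}{K^2SN}\sum_{i=1}^N\mathbb{E}_t\lVert\sum_k\nabla F_i(\tilde w_i^{t,k})\rVert^2$ alongside the aggregated term $\frac{2\alpha^2(S-1)}{K^2SN^2}\mathbb{E}_t\lVert\sum_i\sum_k\nabla F_i(\tilde w_i^{t,k})\rVert^2$; those surviving terms are then absorbed later via the drift bound of Lemma \ref{lem-drift}, not discarded at this stage.

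A second, structural mismatch: your expansion $\triangle^{t+1}=\frac{\eta_l}{S}\sum_{i\in S_t}\sum_k\tilde g_i^{t,k}$ is not what Algorithm \ref{algo-s} computes. The local direction is the momentum mixture $v_i^{t,k+1}=\alpha\tilde g_i^{t,k}+(1-\alpha)\triangle^t$, and the server additionally rescales by $\frac{\sum_{i\in S_t}\lVert\triangle_i^t\rVert}{S\lVert\sum_{i\in S_t}\triangle_i^t\rVert}$. The paper's proof carries the mixture through: its decomposition isolates $\alpha\nabla F_i(\tilde w_i^{t,k})+(1-\alpha)\triangle^t$ and (using $\triangle^t\approx\nabla F(\tilde w^t)$) this is the source of the $\alpha^2$ prefactors and of the extra term $\frac{2(1-\alpha^2)}{KS}\lVert\nabla F(\tilde w^t)\rVert^2$ in its bound. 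By dropping $\alpha$ and the momentum you are bounding a different recursion than the one consumed by the descent analysis (Lemma \ref{descent}), so even after repairing the sampling step your argument would not plug into the rest of the convergence proof without reworking those pieces.
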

\begin{proof}
    \begin{subequations}
        \begin{align}
            & \mathbb{E}_t\left[\left\lVert \triangle^{t+1} \right\rVert^2 \right] = \frac{1}{K^2S^2\eta_l^2} \mathbb{E}_t\left[ \sum_{i \in S_t} \left\lVert \sum_{k} \left( \alpha \eta_l \tilde{g}_i^{t,k} + \eta_l(1-\alpha)\triangle^t  \right) \right\rVert^2 \right] \notag \\ 
            = & \frac{\alpha^2}{K^2S^2} \mathbb{E}_t \left[\sum_{i \in S_t} \left\lVert \sum_{k=0}^{K-1} \tilde{g}_i^{t,k} - \nabla F_i(\tilde{x}_i^{t,k}) \right\rVert^2 \right] + \frac{1}{K^2S^2} \mathbb{E}_t \left[\sum_{i \in S_t} \left\lVert \sum_{k} \left( \alpha \nabla F_i(\tilde{x}_i^{t,k}) + (1-\alpha)\triangle^t  \right) \right\rVert^2 \right] \label{Ya} \\
            \le & \frac{\alpha^2(L+r)^2\rho^2}{KS} \sigma_l^2 + \frac{2(1-\alpha^2)}{KS} \left\lVert \nabla F(\tilde{w}^t) \right\rVert^2 + \frac{2\alpha^2}{K^2S^2} \left[\sum_{i} \mathbb{P} \left\{i \in S_t \right\} \left\lVert \sum_{k=0}^{K-1} \nabla F_i(w_i^{t,k}) \right\rVert^2  \right] \label{Yb} \\ 
            = & \frac{\alpha^2(L+r)^2\rho^2}{KS} \sigma_l^2 + \frac{2(1-\alpha^2)}{KS} \left\lVert \nabla F(\tilde{w}^t) \right\rVert^2 + \frac{2\alpha^2}{K^2SN} \sum_{i=1}^{N} \mathbb{E}_t \left\lVert \sum_{k=0}^{K-1} \nabla F_i(\tilde{w}_i^{t,k}) \right\rVert^2 \notag \\
            & + \frac{2\alpha^2(S-1)}{K^2SN^2} \mathbb{E}_t \left\lVert \sum_{i=1}^{N} \sum_{k=0}^{K-1} \nabla F_i(\tilde{w}_i^{t,k}) \right\rVert^2,  \notag \\
        \end{align}
    \end{subequations}
    where (\ref{Ya}) is from Lemma \ref{global variance} and (\ref{Yb}) is from Lemma \ref{seperate SAM}.
\end{proof}
Next, we provide the following lemma to demonstrate the descent behavior of FedMRUR under partial client participation setting.
\begin{lem}\label{descent}
    For all $t\in \left[ T-1 \right] $ and $i \in S_t$, with the choice of learning rate, the iterates generated by FedMRUR under partial client participation satisfy:
    \begin{equation}\nonumber
        \begin{aligned}
            \mathbb{E}_t \left[ F(w^{t+1}) \right] \le & F(\tilde{w}^t) - K \eta_g \eta_l d_t (\frac{1}{2} - 20K^2L^2\eta_l^2)\left\lVert \nabla F(\tilde{w}^t) \right\rVert^2 + K\eta_g\eta_l\left( 6K^2\eta_l^2\alpha^4\rho^2 \right. \\
            & \left. + 5K^2\eta_l\alpha^4\rho^2\sigma^2 + 20K^3\eta_l^3\alpha^2\sigma_g^2 + 16K^3\eta_l^4\alpha^6\rho^2 + \frac{\eta_g\eta_l\alpha^3\rho^2}{N} \sigma_l^2 \right).  
        \end{aligned}
    \end{equation}
\end{lem}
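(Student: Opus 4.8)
The plan is to derive the one-round descent by combining the $L$-smoothness of the global objective with the two structural lemmas already established: the drift bound of Lemma~\ref{lem-drift} and the global-update bound of Lemma~\ref{global update}. First I would invoke Assumption~\ref{L-obj} to expand the objective along the server step $w^{t+1}=w^t-\eta_g\triangle^{t+1}$, taking the conditional expectation $\mathbb{E}_t$ over the client sampling and the stochastic gradients. Treating $\tilde{w}^t$ as the effective expansion point (absorbing the global perturbation offset into the cross term), this yields
\begin{equation}\nonumber
\mathbb{E}_t[F(w^{t+1})] \le F(\tilde{w}^t) - \eta_g\,\mathbb{E}_t\langle \nabla F(\tilde{w}^t),\triangle^{t+1}\rangle + \frac{L\eta_g^2}{2}\,\mathbb{E}_t\|\triangle^{t+1}\|^2.
\end{equation}
The second-order term is controlled directly by Lemma~\ref{global update}, so the heart of the argument is the cross term.

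For the cross term I would substitute the normalized aggregate $\triangle^{t+1}=d_t\,\frac{1}{|S_t|}\sum_{i\in S_t}\triangle_i^t$ and unroll each local update $\triangle_i^t=-\eta_l\sum_{k=0}^{K-1}\big(\alpha\tilde{g}_i^{t,k}+(1-\alpha)\triangle^t\big)$. Using Assumption~\ref{V-gra} to replace the stochastic gradients by their means $\nabla F_i(\tilde{w}_i^{t,k})$ and the approximation $\triangle^t\approx\nabla F(\tilde{w}^t)$ already exploited in Lemma~\ref{lem-drift}, the inner product collapses to a sum of terms $\langle \nabla F(\tilde{w}^t),\nabla F_i(\tilde{w}_i^{t,k})\rangle$. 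The decisive manipulation is the polarization identity $2\langle a,b\rangle=\|a\|^2+\|b\|^2-\|a-b\|^2$, which extracts the descent contribution $-K\eta_g\eta_l d_t\|\nabla F(\tilde{w}^t)\|^2$, with the factor $d_t\ge 1$ carried through from the normalization, at the cost of the client-drift residual $\|\nabla F_i(\tilde{w}_i^{t,k})-\nabla F(\tilde{w}^t)\|^2$.

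I would then bound that residual by splitting it through smoothness, $\|\nabla F_i(\tilde{w}_i^{t,k})-\nabla F_i(\tilde{w}^t)\|^2\le(L+r)^2\epsilon_{t,k}$, plus the heterogeneity and perturbation pieces supplied by Lemma~\ref{global variance}, and then feed in the drift estimate $\epsilon_{t,k}$ from Lemma~\ref{lem-drift}. Collecting the descent term with the second-order term, the part of the drift that is quadratic in $\eta_l$ and proportional to $\|\nabla F(\tilde{w}^t)\|^2$ merges into the coefficient $\big(\tfrac12-20K^2L^2\eta_l^2\big)$, while the remaining $\sigma_l^2$, $\sigma_g^2$, and $\rho^2$ contributions assemble into the additive error stated in the lemma. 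The learning-rate conditions $\eta_l\le\frac{1}{\sqrt{30}\alpha K(L+r)}$ and $\eta_g\le\frac{S}{2\alpha L(S-1)}$ are precisely what keeps the geometric factor in the drift recursion bounded and the descent coefficient positive.

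The main obstacle I expect is the bookkeeping in the cross term: correctly propagating the normalization factor $d_t$ so that it multiplies the descent term alone and not the noise, and handling the momentum coupling in which $\triangle^t$ re-enters every local update, so that the $(1-\alpha)$ contribution is absorbed rather than left uncontrolled. Managing the SAM perturbations $\delta_i^{t,k}$ consistently across the $(L+r)$-smoothness bounds, together with the partial-participation sampling probabilities, is the other delicate point; but these are exactly the places where Lemmas~\ref{seperate SAM} and \ref{global update} have been pre-assembled to plug in.
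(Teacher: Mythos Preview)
Your proposal is correct and follows essentially the same route as the paper: the paper likewise expands via smoothness at $\tilde{w}^t$, isolates the cross term (writing it as $\langle\nabla F(\tilde{w}^t),\,\mathbb{E}[-\triangle^{t+1}+\alpha\nabla F(\tilde{w}^t)]\rangle$ and bounding it by a polarization-style inequality that produces the negative $\|\sum_{i,k}\nabla F_i(\tilde{w}_i^{t,k})\|^2$ term), then plugs in Lemma~\ref{global update} for the quadratic term and Lemma~\ref{lem-drift} (together with Lemma~\ref{global variance}) for the drift residual. The only nuance you slightly understate is that $d_t$ does in fact leak into some of the noise terms in the paper's derivation (later controlled by $d_t\le S$), but your overall plan and the paper's coincide.
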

\begin{proof}
    Let's define $\epsilon_\delta = \frac{1}{N} \sum_{i} \mathbb{E} \left[ \delta_{i,k} - \delta \right]^2$, where $\delta = \mathop{\mathrm{argmax}}\limits_{\delta} {F(w + \delta)} $.
    \begin{equation}\label{descent-1}
        \begin{aligned}
            & \mathbb{E}_t \left[ F(w^{t+1}) \right] \le F(w^t) + E_t\left\langle \nabla F(\tilde{w}^t), \tilde{w}^{t+1} - \tilde{w}^{t} \right\rangle + \frac{L+r}{2} \mathbb{E}_t \left[ \left\lVert \tilde{w}^{t+1} - \tilde{w}^t \right\rVert^2  \right]  \\
            = & F(w^t) - \alpha \eta_g \left\lVert \nabla F(\tilde{w}^t) \right\rVert^2 + \eta_g \left\langle \nabla F(\tilde{w}^t), \mathbb{E}\left[ -\triangle^{t+1} + \alpha \nabla F(\tilde{w}^t) \right]  \right\rangle + \frac{L+r}{2} \eta_g^2 \mathbb{E}_t \left[ \left\lVert \triangle^{r+1} \right\rVert^2 \right]   \\
        \end{aligned}
    \end{equation}
    Let's denote the $\frac{\sum_{i \in S_t}\Vert \triangle_i^t \Vert }{\Vert \sum_{i \in S_t} \triangle_i^t \Vert}$ as $d_t$ and bound the third term in (\ref{descent-1}) as follows:
    \begin{equation}\label{descent-2}
        \begin{aligned}
            \left\langle \nabla F(\tilde{w}^t), \mathbb{E}\left[ -\triangle^{t+1} + \alpha \nabla F(\tilde{w}^t) \right] \right\rangle \le & \left( \frac{3\alpha}{2} -1 \right)d_t \left\lVert  \nabla F(\tilde{w}^t) \right\rVert^2 + \alpha (L+r)^2 d_t (\epsilon_{t,k} + \epsilon_{\delta}) \\
            & - \frac{\alpha d_t}{2K^2N^2} \mathbb{E}_t \left\lVert \sum_{i,k} \nabla F_i(\tilde{w}_i^{t,k}) \right\rVert^2  
        \end{aligned}
    \end{equation}
    Plugging (\ref{descent-2}) into (\ref{descent-1}), we have:
    \begin{subequations}
        \begin{align}
            & \mathbb{E}_t \left[ F(\tilde{w}^{t+1}) \right] \notag \\
            \le & F(\tilde{w}^{t}) - \left( \eta_g - \frac{\alpha \eta_g}{2} \right) d_t \left\lVert \nabla F(\tilde{w}^t)\right\rVert^2 + \alpha(L+r)^2\eta_g d_t(\epsilon_{t,k} + \epsilon_{\delta}) \notag \\ 
            & - \frac{\alpha\eta_gd_t}{2K^2N^2} \mathbb{E}_t \left\lVert \sum_{i,k} \nabla F_i(\tilde{w}_i^{t,k}) \right\rVert^2
            + \frac{(L+r)\eta_g^2}{2} \mathbb{E}_t \left[\left\lVert \triangle^{t+1}  \right\rVert^2 \right] \notag \\
            \le & F(\tilde{w}^{t}) -  \left( \frac{3\alpha \eta_g d_t}{4}- \frac{2(1-\alpha)^2(L+r)\eta_g d_t}{KS} \right) \left\lVert \nabla F(\tilde{w}^t) \right\rVert^2 + \alpha(L+r)^2\eta_g d_t(\epsilon_{t,k} + \epsilon_{\delta}) \label{Ua} \\
            & + \frac{\alpha^2(L+r)^3\rho^2\eta_g^2}{2KS}\sigma_l^2 - \frac{\alpha\eta_g d_t}{2K^2N^2} \mathbb{E}_t \left\lVert \sum_{i,k} \alpha \nabla F_i(\tilde{w}_i^{t,k}) \right\rVert^2 \notag \\
            & + \frac{(L+r)\alpha^2\eta_g^2}{2K^2SN} \sum_{i} \mathbb{E}_t \left\lVert \sum_{k} \nabla F_i(\tilde{w}_i^{t,k}) \right\rVert^2 + \frac{(L+r)\alpha^2(S-1)\eta_g^2}{K^2SN^2} \mathbb{E}_t \left\lVert \sum_{k} \nabla F_i(\tilde{w}_i^{t,k}) \right\rVert^2 \notag \\
            \le & F(\tilde{w}^{t}) - \alpha \eta_g d_t \left(\frac{3}{4} - \frac{2(1-\alpha)(L+r)}{KN} - 70(1-\alpha) K^2(L+r)^2\eta_l^2 - \frac{90\alpha (L+r)^3\eta_g \eta_l^2}{Sd_t} \right. \label{Ub} \\
            & \left. - \frac{3\alpha (L+r) \eta_g}{2S} \right) \left\lVert \nabla F(\tilde{w}^t) \right\rVert^2 + \beta \eta_g \left( 10\alpha^2(L+r)^4\eta_l^2\rho^2\sigma_l^2 + 28 \alpha^2 K^3 (L+r)^6\eta_l^4\rho^2 \notag \right. \\
            & \left. + 35 \alpha^2K(L+r)^2\eta_l^2 (3\sigma_g^2+6(L+r)^2\rho^2) + 2K^2(L+r)^4\eta_l^2\rho^2 + \frac{\alpha(L+r)^3\eta_g^2 d_t^2 \rho^2}{2KS}\sigma_l^2 \right. \notag \\
            & \left. + \frac{\alpha L \eta_g d_t}{K^2SN} ( 30NK^2L^4\eta_l^2\rho^2\sigma_l^2 + 270 NK^3(L+r)^2\eta_l^2\sigma_g^2 + 540 NK^2(L+r)^4\eta_l^2\rho^2 \right. \notag \\
            & \left. + 72K^4(L+r)^6\eta_l^4\rho^2 + 6NK^4L^2\eta_l^2\rho^2 + 4NK^2\sigma_g^2 + 3NK^2(L+r)^2\rho^2 )  \right) \notag \\
            \le & F(\tilde{w}^{t}) - C \alpha \eta_g d_t \left\lVert \nabla F(\tilde{w}^t) \right\rVert^2 + \beta \eta_g \left( 10\alpha^2(L+r)^4\eta_l^2\rho^2\sigma_l^2 + 28 \alpha^2 K^3 (L+r)^6\eta_l^4\rho^2 \label{Uc} \right. \\
            & \left. + 35 \alpha^2K(L+r)^2\eta_l^2 (3\sigma_g^2+6(L+r)^2\rho^2) + 2K^2(L+r)^4\eta_l^2\rho^2 + \frac{\alpha(L+r)^3\eta_g^2 d_t^2\rho^2}{2KS}\sigma_l^2 \right. \notag \\
            & \left. + \frac{\alpha L \eta_g d_t}{K^2SN} ( 30NK^2L^4\eta_l^2\rho^2\sigma_l^2 + 270 NK^3(L+r)^2\eta_l^2\sigma_g^2 + 540 NK^2(L+r)^4\eta_l^2\rho^2 \right. \notag \\
            & \left. + 72K^4(L+r)^6\eta_l^4\rho^2 + 6NK^4L^2\eta_l^2\rho^2 + 4NK^2\sigma_g^2 + 3NK^2(L+r)^2\rho^2 )  \right) \notag \\
        \end{align}
    \end{subequations}
    where (\ref{Ua}) is from Lemma \ref{global update}; (\ref{Ub}) is from Lemmas \ref{lem-drift}, Lemma B.1 in \cite{conf/icml/QuLDLTL22} and due to the fact that $\eta_g \le \frac{S}{2\alpha L(S-1)}$ and (\ref{Uc}) is from the condition that $\frac{3}{4} - \frac{2(1-\alpha)L}{KN} - 70(1-\alpha)K^2(L+r)^2\eta_l^2 - \frac{90\alpha(L+r)^3\eta_g\eta_l^2}{S} - \frac{3\alpha(L+r)\eta_g}{2S} > C >0$ and $\alpha \le \frac{1}{2}$ hold.
\end{proof}
Finally, we provide following two theorems to charcterize the convergence rate of FedMRUR:
\begin{thm}[Extension of Theorem 1]\label{thm-1}
    Let all the assumptions hold and with partial client participation. If we choose learning rate $\eta_l \le \frac{1}{\sqrt{30}\alpha KL}$, $\eta_g \le \frac{S}{2\alpha L(S-1)}$ satisfying $\frac{3}{4}- \frac{2(1-\alpha L)}{KN} - 70(1-\alpha)K^2(L+r)^2\eta_l^2 - \frac{90\alpha(L+r)^3\eta_g\eta_l^2}{S} - \frac{3\alpha(L+r)\eta_g}{2S}$, then for all $K \ge 0$ and $T \ge 1$, we have:
    \begin{equation}\nonumber
        \frac{1}{\sum_{t=1}^{T} d_t} \sum_{t=1}^{T} \mathbb{E} \left\lVert \nabla F(w^t) \right\rVert^2 d_t \le \frac{F^0 - F^*}{C \alpha \eta_g \sum_{t=1}^{T} d_t} + \Phi ,
    \end{equation}
    where 
    \begin{equation}\nonumber
        \begin{aligned}
            \Phi = & \frac{1}{C}\left[10\alpha^2(L+r)^4\eta_l^2\rho^2\sigma_l^2 + 35\alpha^2K(L+r)^2\eta_l^23(\sigma_g^2+6(L+r)^2\rho^2) + 28\alpha^2K^3(L+r)^6\eta_l^4\rho^2  \right. \\ 
            & \left. + 2K^2L^4\eta_l^2 \rho^2 + \frac{\alpha(L+r)^3\eta_g^2\rho^2}{2KS}\sigma_l^2 + \frac{\alpha(L+r)\eta_g}{K^2SN} ( 30 NK^2(L+r)^4\eta_l^2\rho^2\sigma_l^2 \right. \\
            & \left. + 270NK^3(L+r)^2\eta_l^2\sigma_g^2 + 540NK^2(L+r)^4\eta_l^2\rho^2 + 72K^4(L+r)^6\eta_l^4\rho^2  \right. \\
            & \left.+ 6NK^4(L+r)^2\eta_l^2\rho^2 + 4NK^2\sigma_g^2 + 3NK^2(L+r)^2\rho^2 )  \right]. 
        \end{aligned}
    \end{equation}
    If we set $\eta_g=\Theta (\frac{\sqrt{SK}}{\sqrt{T}})$ and $\eta_l = \Theta \left( \frac{1}{\sqrt{ST}K(L+r)} \right)$, the convergence rate of the FedMRUR under partial client participation is:
    \begin{equation}\nonumber
        \begin{aligned}
            \frac{1}{T} \sum_{t=1}^{T} \mathbb{E} \left\lVert \nabla F(w^t) \right\rVert^2 = O (\frac{1}{\sqrt{SKT}}) + O \left(\frac{\sqrt{K}}{{ST}} \right) + O \left( \frac{1}{\sqrt{K}T} \right). 
        \end{aligned}
    \end{equation}
\end{thm}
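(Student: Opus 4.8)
The plan is to obtain the theorem by telescoping the one-round descent inequality of Lemma~\ref{descent} and then specializing the two learning rates. The last line~(\ref{Uc}) of that lemma's proof already has the convenient form
\begin{equation}\nonumber
\mathbb{E}_t\left[F(\tilde w^{t+1})\right] \le F(\tilde w^{t}) - C\alpha\eta_g d_t \left\lVert \nabla F(\tilde w^t)\right\rVert^2 + \eta_g\,\Psi,
\end{equation}
where $\Psi$ collects all the $\sigma_l^2,\sigma_g^2,\rho^2$ error terms and $C>0$ is the positive constant guaranteed by the stated step-size condition (with the Lorentz constant $\beta=1$). First I would take total expectation over the randomness of every round and rearrange into $C\alpha\eta_g d_t\,\mathbb{E}\lVert\nabla F(\tilde w^t)\rVert^2 \le \mathbb{E}[F(\tilde w^t)] - \mathbb{E}[F(\tilde w^{t+1})] + \eta_g\Psi$.

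Summing this over $t=1,\dots,T$, the $F$-differences on the right telescope to $F(\tilde w^1)-F(\tilde w^{T+1}) \le F^0 - F^*$, using that $F^*$ lower-bounds $F$. Dividing through by $C\alpha\eta_g\sum_t d_t$ then yields exactly the weighted-average bound~(\ref{thm-e-1}), and reading $\Psi$ off from~(\ref{Uc}) and setting $\Phi=\Psi/C$ recovers the displayed expression for $\Phi$. A minor technical point I would handle here is that the descent is phrased for the perturbed iterate $\tilde w^t$ whereas the theorem is stated for $\nabla F(w^t)$; I would close this gap with $L$-smoothness and the bounded perturbation radius $\rho$ (the same estimates already used in Lemma~\ref{global variance}), so that swapping $\tilde w^t$ for $w^t$ only rescales the $O(\rho^2)$ constants that are already present in $\Phi$.

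For the explicit rate~(\ref{thm-e-2}), I would substitute $\eta_g=\Theta(\sqrt{SK}/\sqrt T)$ and $\eta_l=\Theta(1/(\sqrt{ST}K(L+r)))$. Since $d_t\ge 1$ gives $\sum_t d_t\ge T$, the first term obeys $\frac{F^0-F^*}{C\alpha\eta_g\sum_t d_t}\le \frac{F^0-F^*}{C\alpha\eta_g T}=O(1/\sqrt{SKT})$, and the same bound lets me pass from the weighted average to the standard metric $\frac1T\sum_t\mathbb{E}\lVert\nabla F(w^t)\rVert^2$. Then I would push the two chosen step sizes through each summand of $\Phi$ and keep only the leading order in $S,K,T$: the terms carrying a bare $\eta_l^2$ scale like $1/(STK^2)$ and are dominated, the summands with an extra $\eta_g$ factor produce the lower-order pieces $O(\sqrt K/(ST))$ and $O(1/(\sqrt K T))$, and all quartic-in-$\eta_l$ remainders are strictly higher order.

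The main obstacle is precisely this final bookkeeping. The quantity $\Phi$ contains roughly a dozen heterogeneous monomials in $\eta_l,\eta_g,K,S,N,L,r,\rho,\sigma_l,\sigma_g$, and the crux is to verify that after substituting the two schedules \emph{every} one of them is absorbed into $O(1/\sqrt{SKT})$, $O(\sqrt K/(ST))$, or $O(1/(\sqrt K T))$, with no surviving term of larger order — in particular checking that the $\eta_g$-weighted drift contributions do not spoil the dominant $O(1/\sqrt{SKT})$ rate. The telescoping and rearrangement are routine; the care lies entirely in tracking these constants and confirming the claimed linear speedup in $S$ and $K$.
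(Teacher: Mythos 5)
Your overall route coincides with the paper's: telescope inequality (\ref{Uc}) of Lemma \ref{descent} over $t=1,\dots,T$, divide by $C\alpha\eta_g\sum_{t}d_t$, use $1 \le d_t \le S$ so that $\sum_t d_t \ge T$, and then substitute the step-size schedules into $\Phi$. Your extra remark about passing from $\nabla F(\tilde{w}^t)$ to $\nabla F(w^t)$ via smoothness and the bounded perturbation radius is a point the paper glosses over entirely, and your way of closing it is sound.

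There is, however, a genuine gap precisely at the step you yourself call the crux. The paper's proof chooses a third schedule that you omit: $\rho = \Theta(1/\sqrt{T})$. Without shrinking the SAM radius, the bookkeeping you describe does not close. Concretely, under $\eta_g^2=\Theta(SK/T)$ the summand $\frac{\alpha(L+r)^3\eta_g^2\rho^2}{2KS}\sigma_l^2$ is of order $\rho^2/T$, which for constant $\rho$ exceeds $O\bigl(1/(\sqrt{K}T)\bigr)$ by a factor $\sqrt{K}\rho^2$; and the summand $\frac{\alpha(L+r)\eta_g}{K^2SN}\cdot 3NK^2(L+r)^2\rho^2 = \frac{3\alpha(L+r)^3\eta_g\rho^2}{S}$ is of order $\sqrt{K}\rho^2/\sqrt{ST}$, which exceeds \emph{all three} claimed terms. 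Both are tamed only by taking $\rho^2=O(1/T)$, which is exactly what the paper does. So your assertion that "the summands with an extra $\eta_g$ factor produce the lower-order pieces" is false as stated; the proof needs the additional choice $\rho=\Theta(1/\sqrt{T})$. (Separately, note that even with that choice the summand $\frac{\alpha(L+r)\eta_g}{K^2SN}\cdot 4NK^2\sigma_g^2 = \frac{4\alpha(L+r)\eta_g\sigma_g^2}{S}$ is of order $\sqrt{K}/\sqrt{ST}$ and is not visibly absorbed into the claimed rate; this difficulty is inherited from the paper's own statement of $\Phi$, but a careful write-up should flag it rather than claim every term is dominated.)
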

\begin{proof}
    Summing (\ref{Uc}) in Lemma \ref{descent} over $t = \left\{1,...,T \right\} $ and multiplying both sides by $\frac{1}{C \alpha \eta_g \sum_{t=1}^T {d_t}}$, we have
    \begin{equation}\nonumber
        \begin{aligned}
            \frac{1}{T} \sum_{t=1}^{T} \mathbb{E} \left\lVert \nabla F(w^t) \right\rVert^2 \le & \frac{F(\tilde{w}^t - F(\tilde{w}^{t+1}))}{C\alpha \eta_g \sum_{t=1}^T d_t} + \Phi \\
            \le & \frac{F^0 - F^*}{C \alpha \eta_g \sum_{t=1}^T d_t} + \Phi,
        \end{aligned}
    \end{equation}
    where the second inequality comes from the fact that $F^0 - F^* \le F(\tilde{w}^t) - F(\tilde{w}^{t+1}) $.
    According to the definition of $d_t$ in Lemma \ref{descent} and triangle inequality, we have $1 \le \frac{\sum_{i \in S_t}\Vert \triangle_i^t \Vert }{\Vert \sum_{i \in S_t} \triangle_i^t \Vert} \le S$ and $\sum_{t=1}^T d_t \ge T$.
    If we choose $\eta_g=\Theta (\frac{\sqrt{SK}}{\sqrt{T}})$, $\eta_l = \Theta \left( \frac{1}{\sqrt{ST}K(L+r)} \right)$ and $\rho = \Theta(\frac{1}{\sqrt{T}})$, the above ineqaulity can be rewriteen as
    \begin{equation}\nonumber
        \frac{1}{T} \sum_{t=1}^{T} \mathbb{E} \left\lVert \nabla F(w^t) \right\rVert^2 = O (\frac{1}{\sqrt{SKT}}) + O \left(\frac{\sqrt{K}}{{ST}} \right) + O \left( \frac{1}{\sqrt{K}T} \right).
    \end{equation}
\end{proof}
\section{Experiments}\label{Exp}

\subsection{Results for CIFAR-10}

\begin{table}[tbp]
    \centering
    \caption{ \small Convergence speed on CIFAR-10 dataset in both Dir($\mu$) and Path($n$) distributions. "Acc." represents the target test accuracy on the dataset. "$\infty$" means that the algorithm is unable to achieve the target accuracy on CIFAR-10 dataset.}
    \label{speed}
    \begin{tabular}{|cc|ccccccc|}
    \hline
    \multicolumn{2}{|c|}{Algorithms}                          & \multicolumn{1}{c|}{FedAvg}   & \multicolumn{1}{c|}{FedExp}   & \multicolumn{1}{c|}{FedProx}  & \multicolumn{1}{c|}{SCAFFOLD} & \multicolumn{1}{c|}{FedCM}    & \multicolumn{1}{c|}{MoFedSAM} & Our \\ \hline
    \multicolumn{2}{|c|}{ACC.}                                & \multicolumn{7}{c|}{$70\%$}                                                                                                                                                                         \\ \hline
    \multicolumn{1}{|c|}{\multirow{2}{*}{Dir($\mu$)}} & $0.6$ & \multicolumn{1}{c|}{392}      & \multicolumn{1}{c|}{538}      & \multicolumn{1}{c|}{354}      & \multicolumn{1}{c|}{263}      & \multicolumn{1}{c|}{95}      & \multicolumn{1}{c|}{119}      & 125 \\ \cline{2-9} 
    \multicolumn{1}{|c|}{}                            & $0.3$ & \multicolumn{1}{c|}{513}      & \multicolumn{1}{c|}{518}      & \multicolumn{1}{c|}{452}      & \multicolumn{1}{c|}{349}      & \multicolumn{1}{c|}{131}      & \multicolumn{1}{c|}{134}      & 142 \\ \hline
    \multicolumn{1}{|c|}{\multirow{2}{*}{Path($n$)}}  & $6$   & \multicolumn{1}{c|}{353}      & \multicolumn{1}{c|}{459}      & \multicolumn{1}{c|}{328}      & \multicolumn{1}{c|}{242}      & \multicolumn{1}{c|}{110}      & \multicolumn{1}{c|}{112}      & 115 \\ \cline{2-9} 
    \multicolumn{1}{|c|}{}                            & $3$   & \multicolumn{1}{c|}{$\infty$} & \multicolumn{1}{c|}{770}      & \multicolumn{1}{c|}{$\infty$}     & \multicolumn{1}{c|}{466}      & \multicolumn{1}{c|}{177}     & \multicolumn{1}{c|}{178}      & 192 \\ \hline
    \multicolumn{2}{|c|}{ACC.}                                & \multicolumn{7}{c|}{$75\%$}                                                                                                                                                                         \\ \hline
    \multicolumn{1}{|c|}{\multirow{2}{*}{Dir($\mu$)}} & $0.6$ & \multicolumn{1}{c|}{$\infty$}      & \multicolumn{1}{c|}{788}      & \multicolumn{1}{c|}{$\infty$}      & \multicolumn{1}{c|}{441}      & \multicolumn{1}{c|}{185}      & \multicolumn{1}{c|}{178}      & 180 \\ \cline{2-9} 
    \multicolumn{1}{|c|}{}                            & $0.3$ & \multicolumn{1}{c|}{$\infty$}     & \multicolumn{1}{c|}{905}      & \multicolumn{1}{c|}{$\infty$}      & \multicolumn{1}{c|}{588}      & \multicolumn{1}{c|}{229}      & \multicolumn{1}{c|}{205}      & 221 \\ \hline
    \multicolumn{1}{|c|}{\multirow{2}{*}{Path($n$)}}  & $6$   & \multicolumn{1}{c|}{$\infty$}      & \multicolumn{1}{c|}{866}      & \multicolumn{1}{c|}{$\infty$}      & \multicolumn{1}{c|}{426}      & \multicolumn{1}{c|}{171}      & \multicolumn{1}{c|}{166}      & 166 \\ \cline{2-9} 
    \multicolumn{1}{|c|}{}                            & $3$   & \multicolumn{1}{c|}{$\infty$} & \multicolumn{1}{c|}{1225}      & \multicolumn{1}{c|}{$\infty$} & \multicolumn{1}{c|}{1552}     & \multicolumn{1}{c|}{278}    & \multicolumn{1}{c|}{307}      & 305 \\ \hline
    \multicolumn{2}{|c|}{ACC.}                                & \multicolumn{7}{c|}{$80\%$}                                                                                                                                                                         \\ \hline
    \multicolumn{1}{|c|}{\multirow{2}{*}{Dir($\mu$)}} & $0.6$ & \multicolumn{1}{c|}{$\infty$} & \multicolumn{1}{c|}{$\infty$} & \multicolumn{1}{c|}{$\infty$} & \multicolumn{1}{c|}{$\infty$}      & \multicolumn{1}{c|}{471}      & \multicolumn{1}{c|}{384}      & 393 \\ \cline{2-9} 
    \multicolumn{1}{|c|}{}                            & $0.3$ & \multicolumn{1}{c|}{$\infty$} & \multicolumn{1}{c|}{$\infty$} & \multicolumn{1}{c|}{$\infty$} & \multicolumn{1}{c|}{$\infty$}      & \multicolumn{1}{c|}{573}      & \multicolumn{1}{c|}{450}      & 449 \\ \hline
    \multicolumn{1}{|c|}{\multirow{2}{*}{Path($n$)}}  & $6$   & \multicolumn{1}{c|}{$\infty$} & \multicolumn{1}{c|}{$\infty$}      & \multicolumn{1}{c|}{$\infty$}     & \multicolumn{1}{c|}{1181}      & \multicolumn{1}{c|}{443}      & \multicolumn{1}{c|}{356}      & 353 \\ \cline{2-9} 
    \multicolumn{1}{|c|}{}                            & $3$   & \multicolumn{1}{c|}{$\infty$} & \multicolumn{1}{c|}{$\infty$} & \multicolumn{1}{c|}{$\infty$} & \multicolumn{1}{c|}{$\infty$} & \multicolumn{1}{c|}{810} & \multicolumn{1}{c|}{636}      & 630 \\ \hline
    \end{tabular}
\end{table}

Table \ref{speed} characterizes the convergence speed of multiple algorithms on CIFAR-10. For most of the time, our proposed method, FedMRUR outperforms the baselines. Therefore, we can conclude that: 1) our method achieves the fastest convergence speed, especially when the data heterogeneity is large, which validates the normalized update aggregation scheme accelerate the iteration; 2) when the statistical heterogeneity is large, the proposed FedMRUR accelerates the convergence more effectively, which validates that utilizing the hyperbolic graph fusion is able to alleviate the issue of the model inconsistency across clients.

\begin{table}[htbp]
    \centering
    \caption{ \small  Test accuracy (\%) on CIFAR-10 dataset in both Dir($\mu$) and Path($n$)) distributions.}
    \label{ta:baselines}
    \scalebox{1.0}{\begin{tabular}{lcccc} 
    \toprule
    \multirow{2}{*}{Algorithm} & \multicolumn{4}{c}{CIFAR-10}                                                                          \\ 
    \cmidrule{2-5}
                               & \multicolumn{2}{c}{Dir($\mu$)}                            & \multicolumn{2}{c}{Path($n$)}                \\ 
    \cmidrule{2-5}
                               & $\mu$ = 0.6               & $\mu$ = 0.3            & n = 6                        & n = 3                   \\ 
    \midrule
    FedAvg                      & $72.96$               & $71.44$            & $73.22$              & $67.78$                  \\
    FedExp                     & $79.26$               & $76.73$            & $78.70$               & $74.65$                   \\
    FedProx                     & $73.69$               & $72.15$            & $73.95$               & $68.46$                   \\
    SCAFFOLD                    & $79.69$               & $78.49$            & $79.77$              & $72.72$                  \\
    FedCM                    & $84.48$               & $82.95$            & $84.15$              & $83.10$                         \\
    MoFedSAM                      & $84.99$               & $84.03$            & $85.10$              & $84.13$                  \\ 
    \midrule
    FedMRUR                    & $85.70$               & $84.53$           & $85.61$              & $84.89$                        \\
    \bottomrule
    \end{tabular}}
\end{table}

\begin{figure}
    \centering
    \subfigure[]{
        \begin{minipage}{0.24\linewidth}
            \centering
            \includegraphics[width=0.99\linewidth]{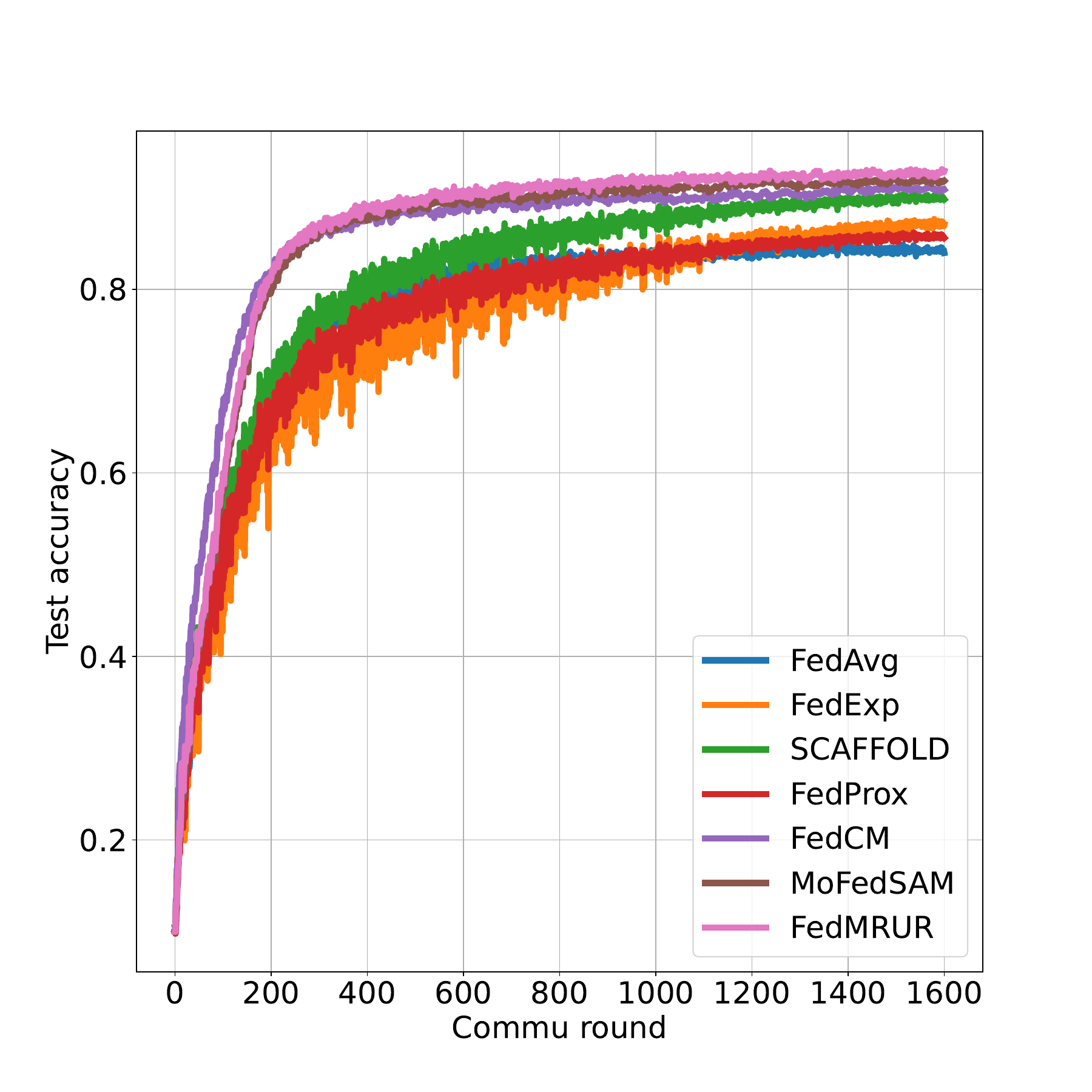}
            \label{dir06}
        \end{minipage}
    }\!\!\!\!
    \subfigure[]{
        \begin{minipage}{0.24\linewidth}
            \centering
            \includegraphics[width=0.99\linewidth]{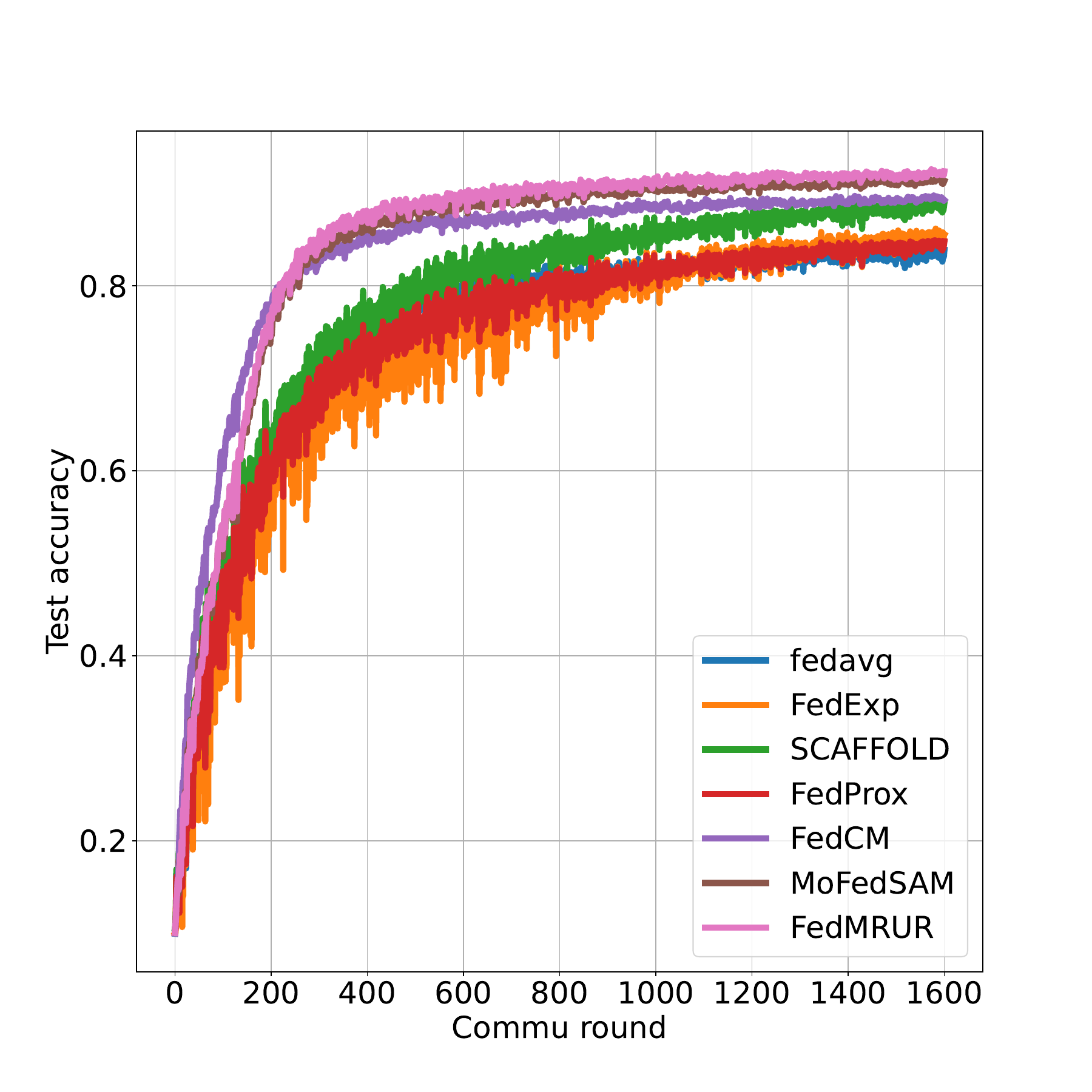}
            \label{dir03}
        \end{minipage}
    }\!\!\!\!
    \subfigure[]{
        \begin{minipage}{0.24\linewidth}
            \centering
            \includegraphics[width=0.99\linewidth]{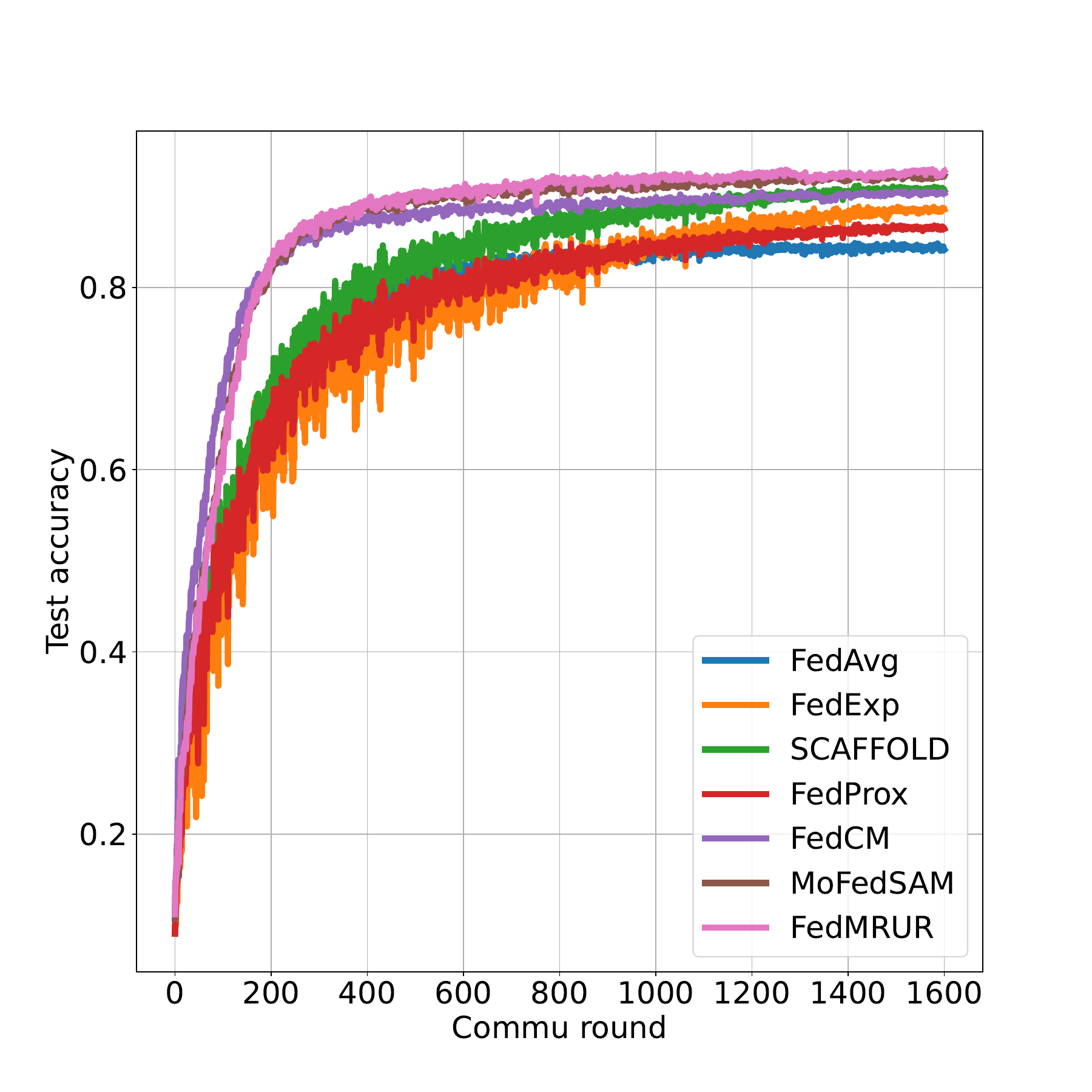}
            \label{n_cls6}
        \end{minipage}
    }\!\!\!\!
    \subfigure[]{
        \begin{minipage}{0.24\linewidth}
            \centering
            \includegraphics[width=0.99\linewidth]{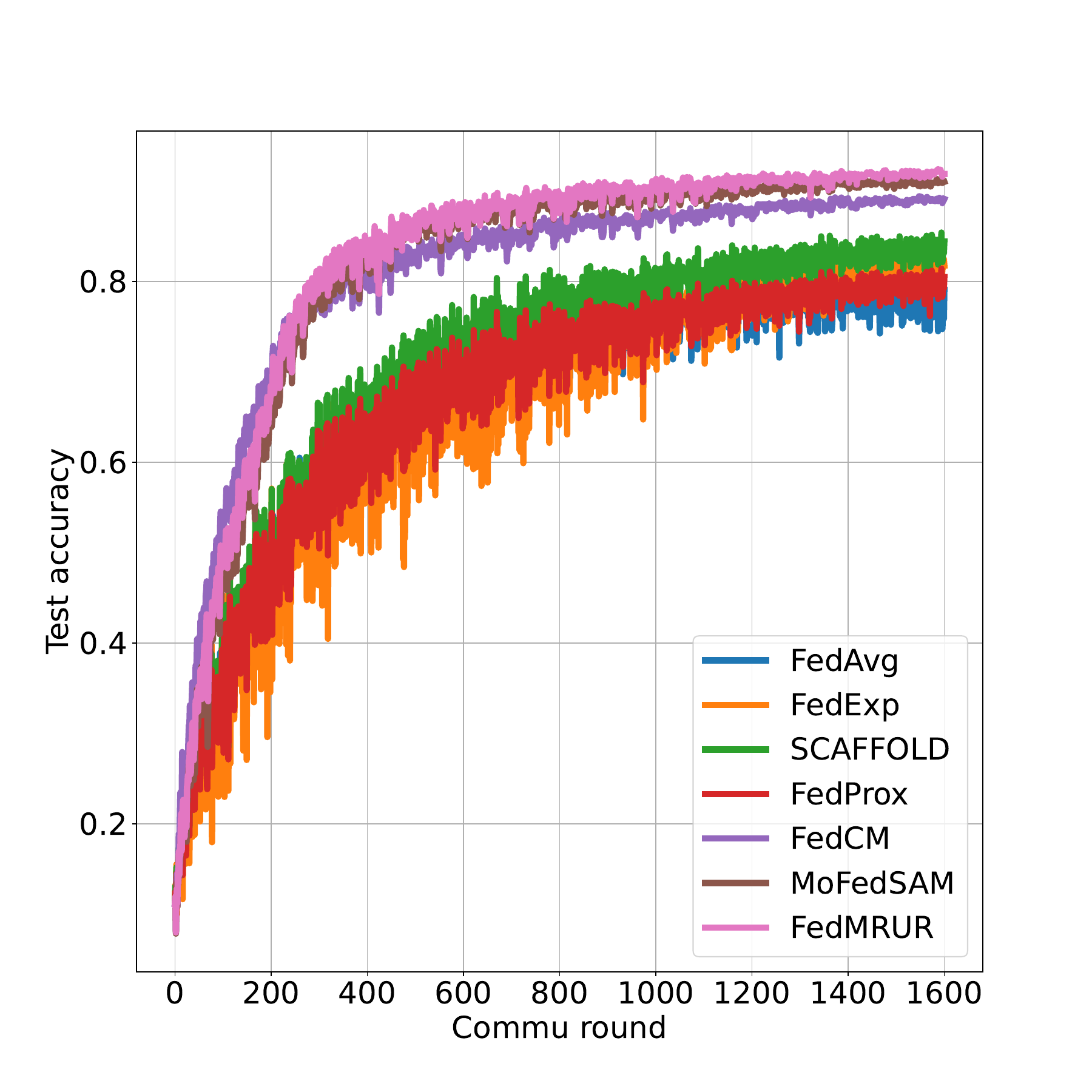}
            \label{n_cls3}
        \end{minipage}
    }\!\!\!\!
    \caption{Test accuracy w.r.t. communication rounds of our proposed method and other approaches. Each method performs in 1600 communication rounds. To compare them fairly, the basic optimizers are trained with the same hyperparameters on CIFAR-10 dataset.}
    \label{Test}
\end{figure}

Table \ref{ta:baselines} presents the final test accuracy of ResNet-18 trained using multiple algorithms on CIFAR-10 dataset under four heterogeneous settings. We plot the test accuracy of the algorithms for the image classification task in Figure \ref{Test}. We can oberserve that the proposed FedMRUR performs well with good stability and efficently mitigates the negative effect of the model inconsistency. Specifically, on the Dirichlet-0.3 setups, FedMRUR achieves a test accuracy of $84.53\%$, which is $0.51\%$ higher than the second-best algorithm, MoFedSAM. Based on these, we can conclude that FedMRUR reduces the model inconsistency and improves the convergence speed effectively.

\subsection{Verification of Normalized Aggregation}

From the theoretical view, we can conclude that the "Normalized Aggregation of Local Updates" can accelerate the convergence in Theorem \ref{thm-1}. In fact, using this operator in other baselines can also improve the performance. Here, we present the effect of the normalized aggregation method applied to FedCM and FedAvg in Table \ref{ta:normalized}. From the results, we can find that the "Normalized Aggregation" can improve the convergence a lot (For example, when $\mu=0.3$, it can improve the final acc $4\%$ over FedCM).

\begin{table}[htbp]
    \centering
    \caption{ \small  Test accuracy (\%) on CIFAR-10 dataset in both Dir($\mu$) and Path($n$)) distributions.}
    \label{ta:normalized}
    \scalebox{1.0}{\begin{tabular}{lcccc} 
    \toprule
    \multirow{2}{*}{Algorithm} & \multicolumn{4}{c}{CIFAR-100}                                                                          \\ 
    \cmidrule{2-5}
                               & \multicolumn{2}{c}{Dir($\mu$)}                            & \multicolumn{2}{c}{Path($n$)}                \\ 
    \cmidrule{2-5}
                               & $\mu$ = 0.6               & $\mu$ = 0.3            & n = 20                        & n = 10                   \\ 
    \midrule
    $FedAvg$                      & $39.87$               & $39.50$            & $38.47$              & $36.67$                  \\
    $FedAvg^{+}$                     & $42.09$               & $41.71$            & $42.22$               & $40.10$                   \\
    $FedCM$                     & $51.01$               & $50.93$            & $50.58$               & $50.03$                   \\
    $FedCM^{+}$                    & $52.53$               & $52.32$            & $52.59$              & $52.50$                  \\
    \bottomrule
    \end{tabular}}
\end{table}

\subsection{Validation for the linear speedup propoerty}

In this part, we present the experiment results which veritfies the linear speedup propoerty of the proposed FedMRUR. Because the whole dataset is fixed, increasing the number of clients changes the amount of data in the local data, which changes the entire optimization problem, we conduct the experiment under various settings of local intervals $K$ fixing the number of clients to verify the linear speedup property. From Figure \ref{K}, when $K$ increase to $15$, the algorithm achieves $1.5\times$ than $K=10$. From (\ref{thm-e-2}), when local interval $K$ is increased to $\mathcal{O}\left(ST\right)^{\frac{1}{2}}$, the impact of the second term in Theorem \ref{thm} becomes greater and the first term becomes less. Therefore, when $K$ increase from 15 to 20, the speedup of convergence is not obvious. 

\begin{figure}
    \centering
    \subfigure[]{
        \begin{minipage}{0.49\linewidth}
            \centering
            \includegraphics[width=0.99\linewidth]{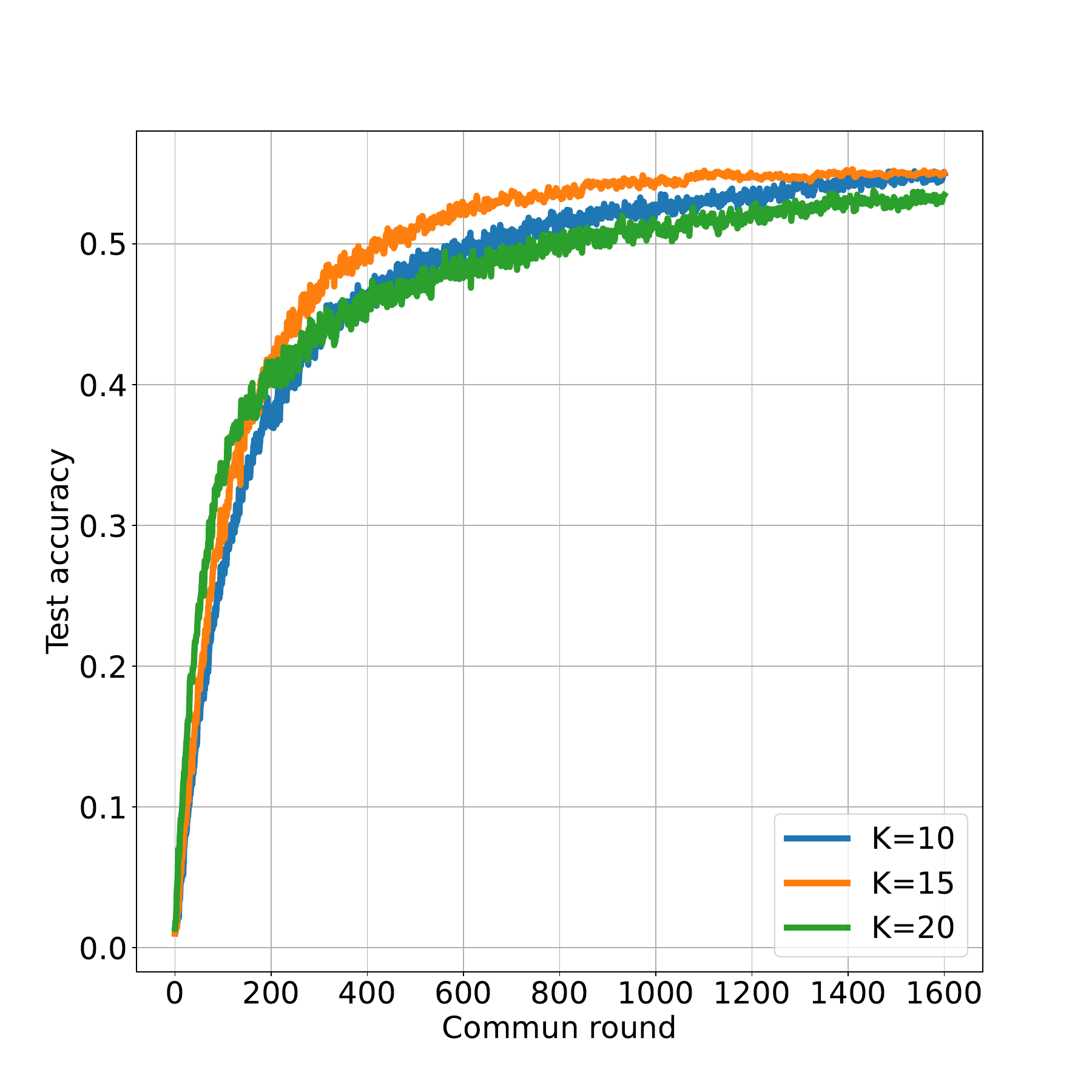}
            \label{K-acc}
        \end{minipage}
    }\!\!\!\!
    \subfigure[]{
        \begin{minipage}{0.49\linewidth}
            \centering
            \includegraphics[width=0.99\linewidth]{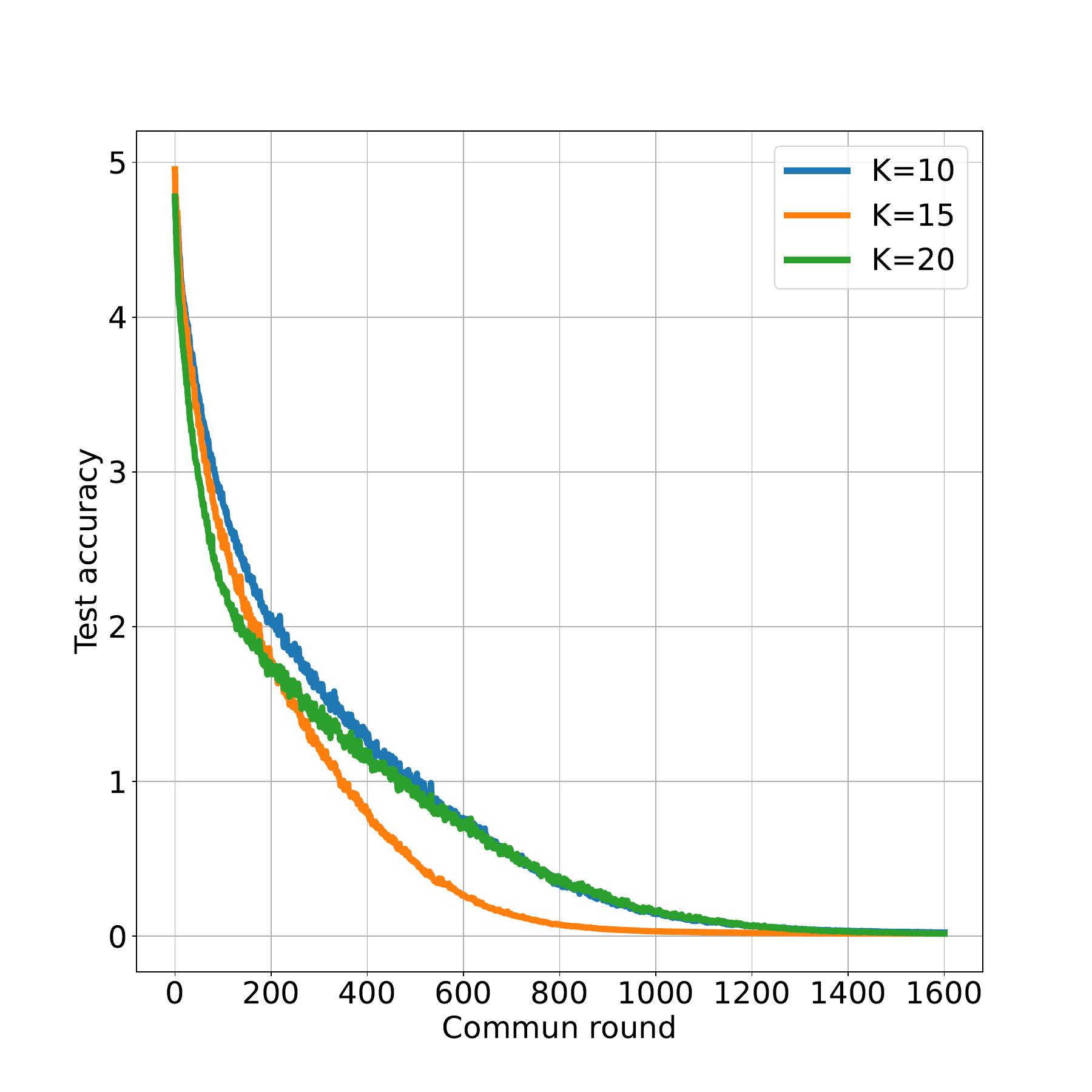}
            \label{K-loss}
        \end{minipage}
    }\!\!\!\!
    \caption{Test accuracy and train loss w.r.t communication rounds of FedMRUR with different local intervals $K$.}
    \label{K}
\end{figure}

\subsection{Impact of Hyperbolic space}

Since hyperbolic geometry is a Riemann manifold with a constant negative curvature, its typical geometric property is that the volume grows exponentially with its radius, whereas the Euclidean space grows polynomially. Such a geometric trait has 2 advantages:

\begin{itemize}
    \item The hyperbolic space exhibits minimal distortion and it fits the hierarchies particularly well since the space closely matches the growth rate of graph-like data while the Euclidean space doesn't.
    \item Even with a low-embedding dimensional space, hyperbolic models are surprisingly able to produce a high quality representation, which makes them to be particularly advantageous in low-memory and low-storage scenarios.
\end{itemize}

In realistic scenarios, there exists many graph-like data structure, such as the hypernym structure in NLP, the subordinate structure of entities in the knowledge graph and the power-law distribution in recommender systems. In FL, the machine learning models have a graph-like structure, so adopting the Lorentz metric of hyperbolic space makes use of the hierarchical information in neural networks, which are helpful to fuse the model further bring prediction gains. Using Euclidean metric, or some Riemann metric defined by a positive definite matrix is an interesting idea. Here, we show the results of experiments using different geometric spaces as follow (Table \ref{ta:hyper}). From these results, we can find that Lortenz metric of hyperbolic space can help the algorithm achieving the highest test accuracy. 

\begin{table}[htbp]
    \centering
    \caption{ \small  Test accuracy (\%) on CIFAR-100 dataset using different manifolds.}
    \label{ta:hyper}
    \scalebox{1.0}{\begin{tabular}{lcccc} 
    \toprule
    {Space}  & \multicolumn{2}{c}{Euclidean}                            & \multicolumn{2}{c}{Hyperbolic}                \\ 
    \cmidrule{2-5}
    Test Acc.      & \multicolumn{2}{c}{54.01(0.36)}        & \multicolumn{2}{c}{55.64(0.41)}          \\
    \bottomrule
    \end{tabular}}
\end{table}

The representations generated by the model have fewer dimensions than the data. Mapping the representations to the hyperbolic space introduces less computation overhead than mapping the data. We also conduct experiments mapping the original data to the hyperbolic space over 8 seeds. The results are as presented in Table \ref{ta:hyper-way}.

\begin{table}[htbp]
    \centering
    \caption{ \small  Test accuracy (\%) on CIFAR-100 dataset using hyperbolic model in different ways.}
    \label{ta:hyper-way}
    \scalebox{1.0}{\begin{tabular}{lcccc} 
    \toprule
    & \multicolumn{2}{c}{Orignial data}                            & \multicolumn{2}{c}{Representation}                \\ 
    \midrule
    & \multicolumn{2}{c}{56.03(0.56)}        & \multicolumn{2}{c}{55.64(0.41)}          \\
    \bottomrule
    \end{tabular}}
\end{table}

From the table, we can see that both methods achieve similar performance. Thus, considering the computation overhead and performance, we only map representations to hyperbolic space and do not treat the entire learning process in hyperbolic space.

To study the impact of $\beta$ for hyperbolic graph manifold regularization on the performance, we conduct the experiment on CIFAR100 task with different $\beta$ settings and present the final test accuracy in Table \ref{ta:beta}. From this table, we can find that  has a limited impact on the final performance of the algorithm.

\begin{table}[htbp]
    \centering
    \caption{ \small  Test accuracy (\%) on CIFAR-100 dataset using different $\beta$.}
    \label{ta:beta}
    \scalebox{1.0}{\begin{tabular}{cccccc} 
    \toprule
    {$\beta$}  & 0.1    & 0.5   & 1.0   & 5.0   &10.0                 \\ 
    \midrule
    Test Acc.  & 54.67  & 54.69 & 55.04 & 54.79 & 54.91                \\
    \bottomrule
    \end{tabular}}
\end{table}

\subsection{Training time}

\begin{table}[htbp]
    \centering
    \caption{ \small  Test accuracy (\%) on CIFAR-100 dataset to achieve $50\%$ test accuracy.}
    \label{ta:time}
    \scalebox{1.0}{\begin{tabular}{ccccc} 
    \toprule
    {Algorithm}  & Times(s/round)  & Rounds    & Total(s)   & Cost Ratio    \\ 
    \midrule
    FedAvg       & $9.23$          & $\infty$  & $\infty$   & $\infty$      \\
    \midrule
    FedExp       & $14.82$         & $\infty$  & $\infty$   & $\infty$      \\
    \midrule
    SCAFFOLD     & $15.52$         & $\infty$  & $\infty$   & $\infty$      \\
    \midrule
    FedProx      & $12.89$         & $\infty$  & $\infty$   & $\infty$      \\
    \midrule
    FedCM        & $11.53$         & $1407$    & $16222.71$ & $3.67\times$      \\
    \midrule
    MoFedSAM     & $15.53$         & $701$     & $10886.53$ & $2.46\times$      \\
    \midrule
    FedMRUR      & $16.82$         & $263$     & $4423.66$  & $1\times$      \\
    \bottomrule
    \end{tabular}}
\end{table}

\textbf{Test Experiments:} Nvidia GTX-3090 GPU, CUDA Driver 11.4, Driver Version 470.10.3.01, Pytorch-1.11.1

Table \ref{ta:time} shows the wall-clock time costs on the CIFAR-100 of Dirichlet-0.3 dataset split. Due to the double computation of the gradients via SAM optimizer, MoFedSAM and FedMRUR will take more time in a single communication round, about $1.46 \times$ over the FedCM method. However, the communication rounds required is less than FedCM. Considering the total wall-clock time costs, the acceleration ratio of FedMRUR achieves $2.46\times$ compared with MoFedSAM ($3.67\times$ compared with FedCM) at the final. Therefore, we can conclude that the FedMRUR is more efficient with respect to the communication round and wall-clock time when high-performance models are required.

\end{document}